\def\eqref#1{equation~\ref{#1}}
\def\1{\bm{1}}
\DeclareMathAlphabet{\mathsfit}{\encodingdefault}{\sfdefault}{m}{sl}
\SetMathAlphabet{\mathsfit}{bold}{\encodingdefault}{\sfdefault}{bx}{n}
\def\gA{{\mathcal{A}}}
\def\gB{{\mathcal{B}}}
\def\gC{{\mathcal{C}}}
\def\gD{{\mathcal{D}}}
\def\gJ{{\mathcal{J}}}
\def\gK{{\mathcal{K}}}
\def\gR{{\mathcal{R}}}
\def\gS{{\mathcal{S}}}
\def\sA{{\mathbb{A}}}
\def\sN{{\mathbb{N}}}
\def\sP{{\mathbb{P}}}
\def\sQ{{\mathbb{Q}}}
\def\sR{{\mathbb{R}}}
\def\sZ{{\mathbb{Z}}}
\newcommand{\E}{\mathbb{E}}
\newcommand*{\imgintext}[1]{%
	\raisebox{-.3\baselineskip}{%
		\includegraphics[
		height=\baselineskip,
		width=\baselineskip,
		keepaspectratio,
		]{#1}%
	}%
}
\theoremstyle{plain}
\newtheorem{lemma}{Lemma}
\newtheorem{assumption}{Assumption}
\icmltitlerunning{Offline Meta Reinforcement Learning with In-Distribution Online Adaptation}
\begin{document}

\twocolumn[
\icmltitle{Offline Meta Reinforcement Learning with In-Distribution Online Adaptation}



\icmlsetsymbol{equal}{*}

\begin{icmlauthorlist}
\icmlauthor{Jianhao Wang}{equal,IIIS}
\icmlauthor{Jin Zhang}{equal,IIIS}
\icmlauthor{Haozhe Jiang}{IIIS}
\icmlauthor{Junyu Zhang}{huazhong}
\icmlauthor{Liwei Wang}{Peking}
\icmlauthor{Chongjie Zhang}{IIIS}
\end{icmlauthorlist}

\icmlaffiliation{IIIS}{Institute for Interdisciplinary Information Sciences, Tsinghua University}
\icmlaffiliation{huazhong}{Huazhong University of Science and Technology}
\icmlaffiliation{Peking}{Institute of Artificial Intelligence, Peking University}

\icmlcorrespondingauthor{Chongjie Zhang}{zhangchongjie@gmail.com}

\icmlkeywords{Machine Learning, ICML}

\vskip 0.3in
]



\printAffiliationsAndNotice{\icmlEqualContribution} 

\begin{abstract}
	Recent offline meta-reinforcement learning (meta-RL) methods typically utilize task-dependent behavior policies (e.g., training RL agents on each individual task) to collect a multi-task dataset. However, these methods always require extra information for fast adaptation, such as offline context for testing tasks. To address this problem, we first formally characterize a unique challenge in offline meta-RL: transition-reward distribution shift between offline datasets and online adaptation. Our theory finds that out-of-distribution adaptation episodes may lead to unreliable policy evaluation and that online adaptation with in-distribution episodes can ensure adaptation performance guarantee. Based on these theoretical insights, we propose a novel adaptation framework, called In-Distribution online Adaptation with uncertainty Quantification (IDAQ), which generates in-distribution context using a given uncertainty quantification and performs effective task belief inference to address new tasks. We find a return-based uncertainty quantification for IDAQ that performs effectively. Experiments show that IDAQ achieves state-of-the-art performance on the Meta-World ML1 benchmark compared to baselines with/without offline adaptation. 
	
	
	
\end{abstract}

\section{Introduction}

Human intelligence is capable of learning a wide variety of skills from past experiences and can adapt to new environments by transferring skills with limited experience. Current reinforcement learning (RL) has surpassed human-level performance \citep{mnih2015human,silver2017mastering,hafner2019dream}. However, in many real-world applications, RL encounters two major challenges: multi-task efficiency and costly online interactions. In multi-task settings, such as robotic manipulation or locomotion \citep{yu2020meta}, agents are expected to solve new tasks in few-shot adaptation using previously learned knowledge. Moreover, collecting sufficient exploratory interactions is usually expensive or dangerous in robotics \citep{rafailov2021offline}, autonomous driving \citep{yu2018bdd100k}, and healthcare \citep{gottesman2019guidelines}. One popular paradigm for breaking this practical barrier is \textit{offline meta reinforcement learning} \citep[offline meta-RL;][]{li2020focal,mitchell2021offline}, which trains a meta-RL agent with pre-collected offline multi-task datasets and enables fast policy adaptation to unseen tasks.

Recent offline meta-RL methods have been proposed to utilize a multi-task dataset collected by task-dependent behavior policies \citep{li2020focal,dorfman2021offline}. They show promise by solving new tasks with few-shot adaptation. However, existing offline meta-RL approaches require additional information or assumptions for fast adaptation. For example, FOCAL \citep{li2020focal} and MACAW \citep{mitchell2021offline} use offline contexts for meta-testing tasks. BOReL \citep{dorfman2021offline} and SMAC \citep{pong2022offline} employ few-shot online adaptation, in which the former assumes known reward functions, and the latter assumes unsupervised online samples (without rewards) are available in offline meta-training. Therefore, achieving effective online fast adaptation without extra information remains an open problem for offline meta-RL.

\begin{figure*}
	\centering
	\includegraphics[width=\linewidth]{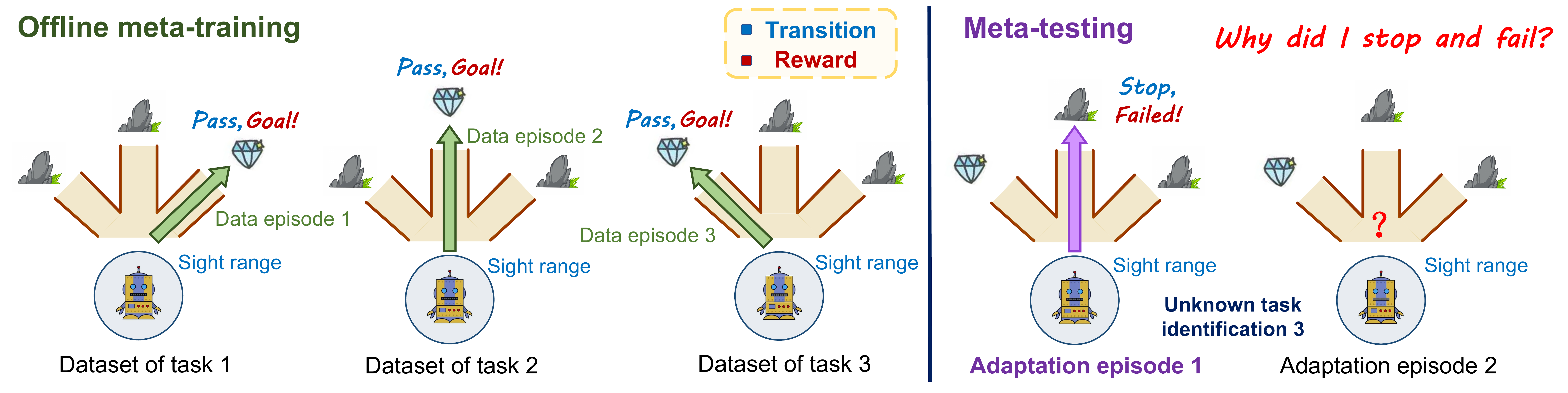}
	\vspace{-0.1in}
	\caption{Illustration of transition-reward distribution shift between offline training and online adaptation.}
	\label{fig:motivation}
	\vspace{-0.15in}
\end{figure*}

To approach meta-testing relying on online experience in offline meta-RL, we first characterize a unique conundrum: \textbf{\textit{transition-reward distribution shift}} between offline datasets and online adaptation, complementary to \textit{state-action distribution shift} in offline RL \citep{levine2020offline}. As illustrated in Figure \ref{fig:motivation}, we propose a motivating example to visualize the transition-reward distribution shift. In this example, the robot aims to choose the correct path to reach the \textit{diamond} (\imgintext{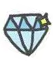}) in three tasks. During task-dependent data collection, the offline multi-task dataset consists of all \textit{successful episodes} (\imgintext{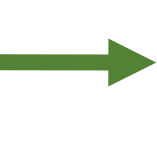}) through expert behavior policies. After offline meta-training on the given dataset, the robot needs to fast adapt to a (unknown) meta-testing task, i.e., task 3 shown in Figure \ref{fig:motivation}. In the first adaptation episode, the robot does not know the identification of meta-testing task. It may try the middle path, stop in front of the stone, and fail. The reward and transition of this failed \textit{adaptation episode} (\imgintext{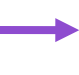}) is out-of-distribution from the offline dataset because the trajectories of the given dataset are successful. This out-of-distribution context will \textit{confuse} the agent in inferring task belief since it is not encountered during offline meta-training. To formalize this phenomenon, we build a theory from the perspective of Bayesian RL \citep[BRL;][]{duff2002optimal,zintgraf2019varibad}, which maintains a task belief given the context history and learns a meta-policy on the belief states. Our theory finds that (i) the transition-reward distribution shift exists and may lead to unreliable policy evaluation, (ii) filtering out out-of-distribution episodes in online adaptation can ensure the performance guarantee, and (iii) meta-policies with Thompson sampling \citep{strens2000bayesian} can generate in-distribution episodes. 

The transition-reward distribution shift induces the \textbf{\textit{inconsistency dilemma}} of experience between offline meta-training and online meta-testing. We can choose either to trust the \textit{offline dataset} (\imgintext{figures/x2.png}) or to trust \textit{new experience} (\imgintext{figures/x1.png}) and continue online exploration. The latter may not be able to collect sufficient data in few-shot adaptation to learn a good policy only on online data. Therefore, we adopt the former strategy and, inspired by our theory, propose a novel context-based online adaptation framework, called \textit{\textbf{I}n-\textbf{D}istribution online \textbf{A}daptation with uncertainty \textbf{Q}uantification} (IDAQ). To align online experience with the offline dataset, IDAQ distinguishes in-distribution context using a given uncertainty quantification, performs task belief updating, and samples ``task hypotheses'' to solve new tasks. We investigate three uncertainty quantifications to measure the confidence that adaptation episodes are in-distribution, and find that IDAQ with a greedy return-based quantification can perform effectively in complex domains. To serve intuitions in Figure \ref{fig:motivation}, IDAQ will continue to sample other ``task hypotheses'' (i.e., try other paths) during meta-testing and infer the unknown task~3 using in-distribution adaptation episode (left).


Our main contribution is to formalize a specific challenge (i.e., transition-reward distribution shift), reveal theoretical insights for offline meta-RL with online adaptation, and furthermore propose a novel in-distribution online adaptation framework with theoretical motivation. To our best knowledge, our method is the first to conduct in-distribution online fast adaptation in offline meta-RL. We extensively evaluate the performance of IDAQ in didactic problems proposed by prior work \citep{rakelly2019efficient,zhang2021metacure} and Meta-World ML1 benchmark with 50 tasks \citep{yu2020meta}. Empirical results show that IDAQ significantly outperforms baselines with fast online adaptation, and achieves better or comparable performance than offline adaptation baselines with expert context.

\vspace{-0.1in}
\section{Notations and Preliminaries}\label{sec:background}

We defer the detailed background to Appendix~\ref{appendix:theory-bg}.

\vspace{-0.1in}
\subsection{Standard Meta-RL} \label{sec:meta-rl}
\vspace{-0.05in}
The standard meta-RL \citep{finn2017model,rakelly2019efficient} deals with a distribution $p(\kappa)$ over \textit{Markov Decision Processes} (MDPs), where each task $\kappa_i\sim p(\kappa)$ presents a finite-horizon MDP \citep{zintgraf2019varibad}, which is defined by a tuple $\left(\gS, \gA, \gR, H, P^{\kappa_i}, R^{\kappa_i}\right)$, including state space $\gS$, action space $\gA$, reward space $\gR$, planning horizon $H$, transition function $P^{\kappa_i}(s'|s,a)$, and reward function $R^{\kappa_i}(r|s,a)$. In this paper, we assume dynamics function $P$ and reward function $R$ may vary across tasks and share a common structure.  The meta-RL algorithms repeatedly sample batches of tasks to train a meta-policy. In meta-testing, the agent aims to rapidly adapt a good policy for new tasks drawn from $p(\kappa)$ within $N$ adaptation episodes.

From a perspective of Bayesian RL \citep[BRL;][]{ghavamzadeh2015bayesian}, recent meta-RL methods \citep{zintgraf2019varibad} utilize a \textit{Bayes-adaptive MDP} \citep[BAMDP;][]{duff2002optimal} to formalize standard meta-RL. A BAMDP is defined as a tuple $M^{+}=\left(\gS^{+},\gA,\gR,H^+,P_0^{+},P^{+},R^{+}\right)$, where $\gS^{+}=\gS\times\gB$ is hyper-state space, $\gB$ is task belief space, a task belief $b$ is the posterior over MDPs given the previous experience, $H^+=N\times H$ is planning horizon, $P_0^+\left(s_0^+\right)$ is initial hyper-state distribution, $P^+\left(s_{t+1}^+\left|s_t^+,a_t,r_t\right.\right)$ is transition function, and $R^+\left(r_t\left|s_t^+,a_t\right.\right)$ is reward function. The objective of meta-RL agents is to find a meta-policy $\pi^+\left(a_t\left|s^+_t\right.\right)$  to maximize the \textit{online policy evaluation} $\gJ_{M^+}\left(\pi^+\right)$.  


\vspace{-0.1in}
\subsection{Offline Meta-RL} \label{sec:offline-meta-rl}
\vspace{-0.05in}
In the offline meta-RL setting \citep{li2020focal}, a meta-learner only has access to an offline multi-task dataset $\gD^+$ and is not allowed to interact with the environment during meta-training. Recent offline meta-RL methods \citep{dorfman2021offline} always utilize task-dependent behavior policies $p(\mu|\kappa)$, which represents the random variable of the behavior policy $\mu(a|s)$ conditioned on the random variable of the task $\kappa$. For brevity, we overload $[\mu]=p(\mu|\kappa)$. Similar to offline RL \citep{yin2021towards}, we assume that $\gD^+$ consists of multiple i.i.d. trajectories that are collected by executing task-dependent policies $[\mu]$ in $M^+$. Denote the reward and transition distribution of the task-dependent offline data collection \citep{jin2021pessimism} by $\sP_{M^+,[\mu]}\left(r_t,s_{t+1}\left|s^+_t,a_t\right.\right)$.

During meta-training, offline RL \citep{liu2020provably,chen2019information} approximates \textit{offline policy evaluation} for a batch-constrained policy $\pi^+$ by sampling from an offline dataset $\gD^+$, which is denoted by $\gJ_{\gD^+}\left(\pi^+\right)$ and called 
\textit{Approximate Dynamic Programming} \citep[ADP;][]{bertsekas1995neuro}. Note that a batch-constrained policy $\pi^+$ only selects actions within the dataset $\gD^+$ to avoid \textit{extrapolation error} \citep{fujimoto2019off}. In meta-testing, RL agents perform online fast adaptation using a meta-trained policy $\pi^+$ in a new task $\kappa_{test}\sim p(\kappa)$. The reward and transition distribution of online data collection in $M^+$ \citep{zintgraf2019varibad} is denoted by $\sP_{M^+,\pi^+}\left(r_t,s_{t+1}\left|s^+_t,a_t\right.\right)$.

\vspace{-0.1in}
\subsection{Offline Meta-Training with Task Embedding} \label{sec:task-embedding}
\vspace{-0.05in}
In this paper, we follow the algorithmic framework of \textit{Task Embeddings for Actor-Critic RL} \citep[PEARL;][]{rakelly2019efficient}. The task identification $\kappa$ is modeled by a latent task embedding $z$, called ``task hypothesis''. The offline meta-training learns a context encoder $q(z|\bm{c})$, a policy $\pi(a|s, z)$, and a value function $Q(s, a, z)$ from a given dataset, where $\bm{c}$ is the context information including states, actions, rewards, and next states. The encoder $q(z|\bm{c})$ infers a task belief about the latent task variable $z$ based on the received context. Denote the prior distribution with $\bm{c}=\emptyset$ by $q(z)$. To distinguish different task identifications from an offline dataset, recent offline meta-RL \citep{li2020focal,yuan2022robust} apply the contrastive loss on the representation of latent task embedding $z$. The policy $\pi$ and value function $Q$ are trained with RL losses on the given $z$. 


\vspace{-0.1in}
\section{Theory: Transition-Reward Distribution Shift in Offline Meta-RL}\label{sec:theory}

Recently, offline meta-RL \citep{dorfman2021offline} faces a new challenge: \textit{transition-reward distribution shift} between offline datasets and online adaptation. We first formalize this data distribution mismatch from the perspective of Bayesian RL \citep[BRL;][]{zintgraf2019varibad} and prove its existence. Our theory shows that the transition-reward distribution shift may lead to unreliable policy evaluation and that in-distribution online adaptation can provide consistent performance guarantee. In addition, we prove that meta-policies with Thompson sampling \citep{strens2000bayesian} can generate in-distribution online adaptation episodes.

\vspace{-0.1in}
\subsection{Transition-Reward Distribution Shift}\label{sec:DSRT}

We define the distributional shift as follows.

\begin{restatable}[Transition-Reward Distribution Shift]{definition}{DSRT}\label{def:DSRT}
	In a BAMDP $M^+$, for each task-dependent behavior policy $[\mu]$ and batch-constrained meta-policy $\pi^+$, the transition-reward distribution shift is defined by that there exists a pair of $\left(s_t^+, a_t\right)$ with executing $\pi^+$ in $M^+$, s.t.,
	\begin{align}\label{eq:DSRT}
	\sP_{M^+,[\mu]}\left(r_t,s_{t+1}\left|s_t^+,a_t\right.\right)\neq\sP_{M^+,\pi^+}\left(r_t,s_{t+1}\left|s_t^+,a_t\right.\right),
	\end{align}
	
	\vspace{-0.2in}
	where $\sP_{M^+,[\mu]},\sP_{M^+,\pi^+}$ are the reward and transition distribution of offline data collection by $[\mu]$ and online data collection by $\pi^+$, respectively, whose formal definition are deferred to Appendix~\ref{appendix:off-meta-rl}.
\end{restatable}

This definition utilizes the discrepancy between offline and online data collection to characterize the joint distribution gap of reward and transition. Note that in offline data collection $\sP_{M^+,[\mu]}$, the behavior policies $p(\mu|\kappa)$ can vary based on task identification, whereas the online data collection $\sP_{M^+,\pi^+}$ is the expected reward and transition distribution across the task distribution $p(\kappa)$.

\begin{restatable}{theorem}{DSRTExists}
	\label{thm:DSRTExists}
	There exists a BAMDP $M^+$ with task-dependent behavior policies $[\mu]$ such that, for any batch-constrained meta-policy $\pi^+$, the transition-reward distribution shift between $\sP_{M^+,[\mu]}$ and $\sP_{M^+,\pi^+}$ occurs. 
\end{restatable}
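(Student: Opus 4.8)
The plan is to turn the motivating example of \Figref{fig:motivation} into a rigorous construction: build a small BAMDP in which every batch-constrained meta-policy is forced, already at the initial hyper-state, to act in exactly the region where offline and online data collection disagree.

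Concretely, I would take two tasks $\kappa_1,\kappa_2$ drawn uniformly from $p(\kappa)$ that share the same state space, the same deterministic initial state $s_0$, the same (possibly unit) horizon, and the same two-action set $\gA=\{a_1,a_2\}$, but differ in their reward/transition functions: under $\kappa_i$, action $a_i$ leads deterministically to a ``success'' outcome $(r=1,\,s'=s_{\mathrm{good}})$ while the other action leads deterministically to a distinct ``failure'' outcome $(r=0,\,s'=s_{\mathrm{bad}})$. Let the task-dependent behavior policy $[\mu]$ be the task-specific expert, i.e.\ $\mu(\cdot\,|\,s_0)=a_i$ deterministically under $\kappa_i$, so that $\gD^+$ consists only of successful trajectories, exactly as in the figure. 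By construction the initial hyper-state is $s_0^+=(s_0,b_0)$ with $b_0$ the uniform prior over $\{\kappa_1,\kappa_2\}$, and at $s_0^+$ the dataset contains both $a_1$ (from $\kappa_1$-trajectories) and $a_2$ (from $\kappa_2$-trajectories).

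The two evaluations at $(s_0^+,a_i)$ are then immediate. For the offline collection $\sP_{M^+,[\mu]}(\cdot\,|\,s_0^+,a_i)$: observing action $a_i$ in the data identifies the task as $\kappa_i$ (only $[\mu]$ under $\kappa_i$ plays $a_i$ at $s_0$), so this distribution is the $\kappa_i$-reward/transition, i.e.\ a point mass on the success outcome. For the online collection $\sP_{M^+,\pi^+}(\cdot\,|\,s_0^+,a_i)$, which by the remark following \Cref{def:DSRT} is the reward/transition averaged over $p(\kappa)$ as encoded by the belief at $s_0^+$, i.e.\ the uniform prior, it equals $\tfrac{1}{2}$ success $+\,\tfrac{1}{2}$ failure. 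These differ. Finally, since $\pi^+$ is batch-constrained it can only select actions appearing in $\gD^+$ at $s_0^+$, namely $a_1$ or $a_2$; picking any $a_i$ in the support of $\pi^+(\cdot\,|\,s_0^+)$, the pair $(s_0^+,a_i)$ is reached with positive probability when executing $\pi^+$ and satisfies \eqref{eq:DSRT}. The argument does not use anything about $\pi^+$ beyond batch-constrainedness, so the shift occurs for every batch-constrained meta-policy.

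The main obstacle is not the arithmetic but stating the definitions precisely enough to justify why $\sP_{M^+,[\mu]}$ and $\sP_{M^+,\pi^+}$ can differ at the \emph{same} hyper-state. The conceptual crux, which I would spell out with reference to Appendix~\ref{appendix:off-meta-rl}, is that $\sP_{M^+,[\mu]}$ implicitly conditions on $\kappa$ through the task-dependent behavior policy, so the effective task posterior in the offline data at $s_0^+$ (after also conditioning on the observed action) is \emph{not} the belief $b_0$, whereas $\sP_{M^+,\pi^+}$ uses precisely $b_0$. Two smaller items also need care: checking that the constructed tuple is a legitimate BAMDP (that $P^+,R^+$ are the correct belief-averaged objects and $b_0$ is the prior, which forces the successful-only dataset to be consistent with the BAMDP dynamics), and noting that the batch-constraint is exactly what prevents $\pi^+$ from ``escaping'' to a harmless action — which is why taking $\gA=\{a_1,a_2\}$ (more generally, ensuring the dataset's action support at $s_0^+$ lies entirely in the mismatched region) is essential.
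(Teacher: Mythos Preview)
Your proposal is correct and takes essentially the same approach as the paper: construct a small task distribution with task-specific expert behavior policies so that, at the initial hyper-state $s_0^+$, the offline data at any in-support action $a_i$ comes exclusively from task $\kappa_i$ (giving a point mass on success), whereas the online BAMDP reward/transition averages over the uniform prior. The paper's version (\Figref{fig:MultiTaskDDMExample}) uses $v\ge 3$ tasks with a single state and reward-only shift because the same example is reused for \Cref{prop:DSRTOOD}, but for \Cref{thm:DSRTExists} alone your two-task instance with both reward and transition shift works identically.
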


\begin{figure}
	\centering
	\vspace{-0.1in}
	\input{figures/mdps/mdp-1}
	\vspace{-0.03in}
	$p(\kappa_i) = \frac{1}{v}, p(\mu_i|\kappa_i)=1, \text{and }\mu_i(a_i|s_0)=1$
	\caption{A concrete example, which has $v$ meta-RL tasks, one state, $v$ actions, $v$ behavior policies, horizon $H=1$ in an episode, and $v$ adaptation episodes, where $v\ge 3$.}
	\label{fig:MultiTaskDDMExample}
	\vspace{-0.1in}
\end{figure}

To prove the existence of distributional shift, we construct an offline meta-RL setting shown in Figure~\ref{fig:MultiTaskDDMExample}, which has $v$ meta-RL tasks $\left\{\kappa_1,\dots,\kappa_v\right\}$, $v$ behavior policies $\left\{\mu_1,\dots,\mu_v\right\}$, and $v\ge 3$. The task distribution $p(\kappa)$ is uniform, the behavior policy of task $\kappa_i$ is $\mu_i$, and each behavior policy $\mu_i$ will perform $a_i$. After data collection, RL agents will offline meta-train policies on a given dataset $\gD^+$ and fast adapt to a meta-testing task $\kappa_{test}\sim p(\kappa)$ within $v$ online episodes.

In this example, for any action $a_i$ in $s_0^+$, the offline data collection $\sP_{M^+,[\mu]}\left(r=1\left|s_0^+,a_i\right.\right)=1$ since expert task-dependent behavior policies all collect data with reward~1. During online meta-testing, for any batch-constrained meta-policy $\pi^+$ selects an action $a_i$ in $s_0^+$, the online data collection $\sP_{M^+,\pi^+}\left(r=1\left|s_0^+,a_i\right.\right)=\frac{1}{v}$ because there is the probability of $\frac{1}{v}$ to sample a meta-testing task $\kappa_i$, whose reward function of $a_i$ is 1. Thus, $\sP_{M^+,[\mu]}\neq\sP_{M^+,\pi^+}$.

\vspace{-0.1in}
\subsection{Data Distribution Matters for Online Adaptation}\label{sec:dis-matters}

To investigate the impact of data distribution mismatch, we analyze the gap of policy evaluation between offline dataset $\gJ_{\gD^+}$ and online adaptation $\gJ_{M^+}$ in offline meta-RL.

\begin{restatable}{proposition}{DSRTOOD}
	\label{prop:DSRTOOD}
	There exists a BAMDP $M^+$ with task-dependent behavior policies such that, for any batch-constrained meta-policy $\pi^+$, (i) RL agents will visit out-of-distribution hyper-states and (ii) the gap between offline policy evaluation $\gJ_{\gD^+}\left(\pi^+\right)$ and online policy evaluation $\gJ_{M^+}\left(\pi^+\right)$ is at least $\frac{H^+-1}{2}$.
\end{restatable}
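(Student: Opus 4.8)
The plan is to reuse the concrete BAMDP of Figure~\ref{fig:MultiTaskDDMExample} — the same instance used for Theorem~\ref{thm:DSRTExists}, for which $H^+=v$ — and show it already exhibits the claimed gap. First I would pin down which hyper-states $\gD^+$ actually covers. Starting from the prior hyper-state $(s_0,b_0)$ with $b_0$ uniform over $\{\kappa_1,\dots,\kappa_v\}$, the behavior policy $\mu_i$ of task $\kappa_i$ only ever plays $a_i$ and always observes reward $1$; since $r^{\kappa_j}(a_i)=1$ holds only for $j=i$, the belief collapses to the point mass $\delta_{\kappa_i}$, and $\mu_i$ then replays $a_i$ for all $v$ episodes. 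Hence $\gD^+$ covers only $(s_0,b_0)$ and the $v$ posteriors $(s_0,\delta_{\kappa_i})$, with empirical reward $1$ and deterministic belief transitions among them. A batch-constrained $\pi^+$ may pick any $a_i$ at $(s_0,b_0)$ but is then forced to replay $a_i$ at $(s_0,\delta_{\kappa_i})$ (no other action for that hyper-state lies in $\gD^+$), so the empirical $H^+$-step rollout underlying ADP collects reward $1$ at every step: $\gJ_{\gD^+}(\pi^+)=H^+=v$ for every batch-constrained $\pi^+$.

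For claim (i) I would track the true online rollout. In episode~$1$, $\pi^+$ plays $a_i$ with probability $\pi^+(a_i\mid s_0,b_0)$ and receives reward $\mathbbm{1}[\kappa_{test}=\kappa_i]$; as $\kappa_{test}\sim\mathrm{Unif}$, the probability of observing reward $0$ equals $1-\tfrac1v>0$. On that event the posterior becomes $\mathrm{Unif}(\{\kappa_j:j\neq i\})$, a distribution over $v-1\ge 2$ tasks (this is where $v\ge 3$ enters) that is neither the prior $b_0$ nor any $\delta_{\kappa_j}$; therefore the hyper-state $(s_0,\mathrm{Unif}(\{\kappa_j\}_{j\neq i}))$ lies outside the support of $\gD^+$, which establishes (i).

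For claim (ii) it remains to upper-bound the true return $\gJ_{M^+}(\pi^+)$. The instance is a ``sequential search'': each probed action either equals $\kappa_{test}$ (reward $1$, task identified, reward $1$ in every later episode) or rules out exactly one remaining candidate. Let $Y_t=\mathbbm{1}[\kappa_{test}\in\{a_1,\dots,a_t\}]$ and $p_t=\Pr[Y_t=1]$. Since the episode-$t$ reward $\mathbbm{1}[a_t=\kappa_{test}]$ never exceeds $Y_t$, we get $\gJ_{M^+}(\pi^+)\le\sum_{t=1}^{v}p_t$. Conditioned on $Y_{t-1}=0$, at most $t-1$ distinct actions have been probed and returned reward $0$, so the posterior is uniform over at least $v-(t-1)$ tasks and episode~$t$ raises $Y$ with probability at most $1/(v-t+1)$; this gives the recursion $p_t\le p_{t-1}+(1-p_{t-1})/(v-t+1)$ with $p_0=0$, and since $p\mapsto p+(1-p)/(v-t+1)$ is nondecreasing for $t\le v$, a one-line induction yields $p_t\le t/v$. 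Hence $\gJ_{M^+}(\pi^+)\le\sum_{t=1}^{v}t/v=\tfrac{v+1}{2}$, and $\gJ_{\gD^+}(\pi^+)-\gJ_{M^+}(\pi^+)\ge v-\tfrac{v+1}{2}=\tfrac{v-1}{2}=\tfrac{H^+-1}{2}$.

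The step I expect to be the crux is this upper bound on $\gJ_{M^+}$: one must argue $p_t\le t/v$ against \emph{all} history-dependent, possibly randomized meta-policies — i.e., that repeating or otherwise ``wasting'' probes cannot help — and the monotonicity used in the induction is exactly what makes that rigorous. By comparison, the offline over-estimation $\gJ_{\gD^+}(\pi^+)=H^+$ and the out-of-distribution visitation in (i) follow immediately from the structure of the instance.
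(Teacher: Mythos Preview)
Your proof is correct and uses the same concrete instance (Figure~\ref{fig:MultiTaskDDMExample}) as the paper, with the same identification of the in-dataset hyper-states, the same $\gJ_{\gD^+}(\pi^+)=H^+$ computation, and the same out-of-distribution argument for Part~(i). The one substantive difference is in how you obtain the upper bound $\gJ_{M^+}(\pi^+)\le (v+1)/2$ for Part~(ii): the paper simply declares that the enumerate-until-found policy $\pi^{+,*}$ is optimal, computes its value explicitly as $\sum_{k=0}^{v-1}(v-k)/v=(v+1)/2$, and then invokes $\gJ_{M^+}(\pi^+)\le\gJ_{M^+}(\pi^{+,*})$; your approach instead proves the bound directly for \emph{every} (possibly randomized, history-dependent) meta-policy via the indicator $Y_t$, the recursion $p_t\le p_{t-1}+(1-p_{t-1})/(v-t+1)$, and the monotonicity-based induction $p_t\le t/v$. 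Your route is slightly more work but also more self-contained, since it does not rely on the (plausible but unproved in the paper) claim that enumeration is Bayes-optimal for this search problem; the paper's route is shorter but leaves that optimality step implicit.
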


Proposition \ref{prop:DSRTOOD} states that RL agents will go out of the distribution of the offline dataset $\gD^+$ due to the shifts in the reward and transition distribution. Thus, the offline policy evaluation of $\pi^+$ in meta-training cannot provide a reference for the online mest-testing. For example in Figure \ref{fig:MultiTaskDDMExample}, the agent will visit out-of-distribution belief states when receiving reward~0 with probability $1-\frac{1}{v}$ in $s_0^+$. In addition, the offline policy evaluation $\gJ_{D^+}\left(\pi^+\right)=H^+=v$ since the dataset $D^+$ only contains reward $1$. For each $\pi^+$, we have $\gJ_{M^+}\left(\pi^+\right)\le\frac{H^++1}{2}$ and detailed proof is deferred to Appendix~\ref{appendix:theory-main-results-part1}. Hence, the gap between $\gJ_{\gD^+}\left(\pi^+\right)$ and $\gJ_{M^+}\left(\pi^+\right)$ is at least $\frac{H^+-1}{2}$. 

To address this inconsistency dilemma, we choose to trust the offline dataset within few-shot online adaptation and derive the following theorem.

\begin{restatable}{theorem}{DSRTID}\label{thm:DSRTID}
	In a BAMDP $M^+$, for each task-dependent behavior policy $[\mu]$, denoting a transformed BAMDP $\overline{M}^{+}$ by incorporating $[\mu]$ into the belief of $M^+$, we have (i) for feasible Bayesian belief updating, $\overline{M}^{+}$ confines the agent in the in-distribution hyper-states, (ii) for each $\bar{\pi}^+$, the distribution of $\sP_{\overline{M}^{+},[\mu]}$ and $\sP_{\overline{M}^{+},\bar{\pi}^+}$ matches, and (iii) policy evaluation $\gJ_{\overline{\gD}^+}\left(\bar{\pi}^+\right)$ and $\gJ_{\overline{M}^{+}}\left(\bar{\pi}^+\right)$ will be asymptotically consistent, as the offline dataset grows.
\end{restatable}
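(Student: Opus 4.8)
The plan is to make the informal phrase ``incorporating $[\mu]$ into the belief'' precise and then check the three claims one at a time. I would define the transformed BAMDP $\overline{M}^+$ to have hyper-state space $\overline{\gS}^+ = \gS \times \overline{\gB}$, where the new belief $\bar b$ is the joint posterior over the pair $(\kappa, \mu)$ of the task identification \emph{and} its associated behavior policy, given the context history; since the offline data collection process uses the coupled variable $[\mu] = p(\mu\,|\,\kappa)$, this joint belief is exactly the sufficient statistic that the offline dataset is drawn from. The transition and reward functions $\overline{P}^+, \overline{R}^+$ of $\overline{M}^+$ are the pushforwards of $P^+, R^+$ under this enlarged belief, and the initial distribution $\overline{P}_0^+$ is the prior over $(\kappa,\mu)$. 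With this setup, $\bar\pi^+$ ranges over policies on the enlarged hyper-states.

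For part (i), I would argue that Bayesian belief updating in $\overline{M}^+$ is, by construction, a filtering operation: the posterior $\bar b_{t+1}$ obtained from $\bar b_t$ by conditioning on $(s_t^+, a_t, r_t, s_{t+1})$ assigns zero mass to any $(\kappa,\mu)$ inconsistent with the observed transition-reward pair. Because the offline dataset is generated precisely by sampling $(\kappa,\mu)\sim$ prior and then rolling out $\mu$, every reachable enlarged hyper-state in $\overline{M}^+$ corresponds to a context prefix that occurs with positive probability under $\sP_{\overline{M}^+,[\mu]}$ — this is the ``feasible Bayesian belief updating'' hypothesis, ruling out the degenerate case of conditioning on a measure-zero event. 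Hence the agent stays in the in-distribution hyper-states. For part (ii), the key observation is that once the belief carries $\mu$, the ``online'' data collection under \emph{any} $\bar\pi^+$ and the offline data collection under $[\mu]$ induce the same one-step distribution of $(r_t, s_{t+1})$ conditioned on the enlarged hyper-state $\bar s_t^+$ and $a_t$: both are governed by $\overline{R}^+, \overline{P}^+$ evaluated at $\bar s_t^+$, because the dependence on the behavior policy has been absorbed into the belief component of $\bar s_t^+$ rather than left as an exogenous generative choice. This is the structural payoff of the transformation and sidesteps the mismatch exhibited in Definition \ref{def:DSRT} and Theorem \ref{thm:DSRTExists}. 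For part (iii), with the distributions matched pointwise, $\gJ_{\overline{M}^+}(\bar\pi^+)$ and the ADP estimate $\gJ_{\overline{\gD}^+}(\bar\pi^+)$ differ only through the usual empirical-process error of approximating expectations over $\sP_{\overline{M}^+,[\mu]}$ by the i.i.d.\ sample $\overline{\gD}^+$; a standard concentration / law-of-large-numbers argument over the finite horizon $H^+$ then gives asymptotic consistency as $|\overline{\gD}^+|\to\infty$, using that the trajectories are i.i.d.\ as assumed in Section \ref{sec:offline-meta-rl}.

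The main obstacle I anticipate is part (i) and the precise formulation it rests on: one must verify that enlarging the belief to include $\mu$ does not create new reachable hyper-states that lie outside the support of the offline data — i.e., that the batch-constrained restriction of $\bar\pi^+$ interacts correctly with the filtered belief so that no ``extrapolation'' hyper-state is ever visited. Getting the coupling between ``batch-constrained'' (actions within $\overline{\gD}^+$) and ``feasible belief updating'' (conditioning events of positive probability) exactly right, and stating the regularity conditions under which the posterior update is well-defined, is the delicate part; parts (ii) and (iii) are then comparatively mechanical.
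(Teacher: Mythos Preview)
Your proposal is correct and follows essentially the same route as the paper: enlarge the belief to track $(\kappa,\mu)$ jointly, argue that feasible posterior updating forbids conditioning on zero-probability events (part (i)), show the one-step reward-transition law at an enlarged hyper-state no longer depends on who is collecting the data (part (ii)), and finish with a standard offline concentration lemma on the belief MDP (part (iii)). The paper's version of (ii) is slightly more explicit than yours: it unfolds the definition of $\sP_{\overline{M}^+,[\mu]}$ from Eq.~(\ref{appendix-eq:TaskDependentRTDD}) and verifies by direct computation that it equals $\E_{(\kappa_i,\mu_i)\sim b_t^{\kappa,\mu}}[\sP_{\kappa_i}(\cdot)]$, which coincides with $\overline{R}^+\cdot\overline{P}^+$; along the way it uses the batch-constrained property of $\bar\pi^+$ to guarantee $p_{\overline{M}^+}^{[\mu]}(\bar s_t^+,a_t)>0$ whenever $p_{\overline{M}^+}^{\bar\pi^+}(\bar s_t^+,a_t)>0$, so that the offline conditional is well-defined --- exactly the coupling you flagged as the delicate point.
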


To achieve in-distribution online adaptation, transformed BAMDPs incorporate additional information about offline data collection into the beliefs of BAMDPs. We prove that transformed BAMDPs require RL agents to filter out out-of-distribution episodes to support feasible belief updating of behavior policies. In this way, the distribution of reward and transition between offline and online data collection coincides, which can provide the guarantee of consistent policy evaluation between $\gJ_{\overline{\gD}^+}\left(\bar{\pi}^+\right)$ and $\gJ_{\overline{M}^{+}}\left(\bar{\pi}^+\right)$. Theorem \ref{thm:DSRTID} shows that we can meta-train policies with offline policy evaluation and utilize in-distribution online adaptation to guarantee the final performance in meta-testing.

\vspace{-0.1in}
\subsection{Generating In-Distribution Online Adaptation}\label{sec:generate-in-dis}

In this subsection, we will incorporate Thompson sampling \citep{strens2000bayesian} into the meta-policies to generate in-distribution episodes during online adaptation as follows.

\begin{restatable}{theorem}{ThompsonGenID}\label{thm:ThompsonGenID}
	In a transformed BAMDP $\overline{M}^{+}$, for each batch-constrained meta-policy with Thompson sampling $\bar{\pi}^{+,T}$ in a meta-testing task $\kappa_{test}\sim p(\kappa)$, there exists a task hypothesis from the current belief, executing $\bar{\pi}^{+,T}$ in $\kappa_{test}$ can generate in-distribution online adaptation episodes with high probability, as the offline dataset grows.
\end{restatable}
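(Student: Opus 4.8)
The plan is to exhibit one particular task hypothesis $z^\star$ drawn from the current belief and then show, by an induction over the within-episode horizon, that rolling out $\bar\pi^{+,T}(\cdot\,|\,\cdot,z^\star)$ in $\kappa_{test}$ never leaves the support of the offline data collection, so the generated episode is in-distribution. First I would pin down $z^\star$: in the transformed BAMDP $\overline{M}^{+}$ the belief is a posterior over (task, behavior-policy) pairs, and after filtering out-of-distribution episodes (Theorem~\ref{thm:DSRTID}(i)) its support consists exactly of pairs consistent with the in-distribution context observed so far. Since $\kappa_{test}\sim p(\kappa)$ and $\overline{\gD}^+$ was collected with $[\mu]$ covering the whole task distribution, the pair $(\kappa_{test},[\mu])$ lies in this support with positive posterior mass; let $z^\star$ be the associated task hypothesis. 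Thompson sampling then selects $z^\star$ with that posterior probability, and over $N$ adaptation episodes picks it at least once with high probability, but for the statement it already suffices that such a $z^\star$ exists in the belief.

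Next I would characterize the rollout policy conditioned on $z^\star$. A batch-constrained meta-policy only plays actions supported by $\overline{\gD}^+$, and the slice of $\overline{\gD}^+$ that the encoder $q(z\,|\,\bm c)$ associates with $z^\star$ consists precisely of the trajectories generated by $[\mu]$ in $\kappa_{test}$. Invoking the same asymptotic-consistency ingredient used for Theorem~\ref{thm:DSRTID}(iii) --- the encoder identifies the task and the batch constraint contracts onto the dataset's action support as $|\overline{\gD}^+|\to\infty$ --- the conditional law $\bar\pi^{+,T}(a\,|\,s,z^\star)$ converges to $\mu(a\,|\,s)$ at every state $s$ in the support of those trajectories. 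Because Thompson sampling freezes $z^\star$ for the whole episode, the rollout behaves like the fixed Markov policy $\mu$, matching how each offline trajectory was produced (one behavior policy per trajectory), which is the property the induction needs.

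Then comes the core induction on $t=0,\dots,H-1$ with hypothesis ``the hyper-state $s_t^+$ reached by running $\bar\pi^{+,T}(\cdot\,|\,\cdot,z^\star)$ in $\kappa_{test}$ is in-distribution.'' The base case is the initial hyper-state, drawn from $P_0^+$ and hence in-distribution by construction of $\overline{M}^{+}$. For the inductive step, given $s_t^+$ in-distribution, the previous paragraph gives $a_t\sim\bar\pi^{+,T}(\cdot\,|\,s_t,z^\star)\approx\mu(\cdot\,|\,s_t)$, an action in the dataset support; Theorem~\ref{thm:DSRTID}(ii) then yields $\sP_{\overline{M}^{+},\bar\pi^{+,T}}(r_t,s_{t+1}\,|\,s_t^+,a_t)=\sP_{\overline{M}^{+},[\mu]}(r_t,s_{t+1}\,|\,s_t^+,a_t)$, so $(r_t,s_{t+1})$ and therefore $s_{t+1}^+$ are in-distribution. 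A union bound over the finite horizon $H$ controls the total failure probability, and the approximation errors are absorbed into a term vanishing as the dataset grows. Concatenating, the whole episode is distributed --- up to this vanishing error and with high probability over the episode randomness --- as an offline episode collected by $[\mu]$ in $\kappa_{test}$, i.e. an in-distribution online adaptation episode.

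The main obstacle is the middle step: rigorously showing that the batch-constrained policy conditioned on the correct embedding recovers the behavior policy's action law on the relevant support. This needs an identifiability assumption on the (contrastively trained) encoder so that $z^\star$ picks out the $\kappa_{test}$ slice rather than a mixture, and care that the batch constraint does not collapse onto a strict sub-support whose induced occupancy would drift a stochastic $\mu$ off the dataset's visited states; the latter is handled because, as $|\overline{\gD}^+|\to\infty$, the dataset's visited-state support converges to the support of the $\mu$-occupancy in $\kappa_{test}$, closing the gap. A secondary subtlety is making ``high probability'' uniform in $H$ and in the dataset size, which the horizon union bound together with the consistency rate behind Theorem~\ref{thm:DSRTID}(iii) should deliver.
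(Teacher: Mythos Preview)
Your proposal has a genuine gap at its very first step. You take $z^\star$ to be the hypothesis corresponding to $(\kappa_{test},[\mu])$ and assert this pair lies in the belief support ``since $\overline{\gD}^+$ was collected with $[\mu]$ covering the whole task distribution.'' But $\overline{\gD}^+$ is built from a \emph{finite} set $\gK_{train}$ of meta-training tasks sampled i.i.d.\ from $p(\kappa)$, and $\kappa_{test}$ is a fresh draw that is generically \emph{not} in $\gK_{train}$; there is no sub-dataset $\gD_{\kappa_{test},\mu_{test}}$ to constrain to, and no task hypothesis in the belief equal to $\kappa_{test}$. The paper's proof handles exactly this issue by taking the hypothesis to be the \emph{closest training task} $\kappa_{i^*}=\arg\min_{\kappa_i\in\gK_{train}}\|\kappa_i-\kappa_{test}\|_\infty$, and then splitting the out-of-distribution probability into two pieces: one comparing $\gD_{\kappa_{i^*},\mu_{i^*}}$ with $\kappa_{i^*}$ (vanishing by concentration as the sub-dataset grows), and one comparing $\kappa_{i^*}$ with $\kappa_{test}$ (vanishing as $|\gK_{train}|\to\infty$ by a covering argument, their Lemma on task-distance). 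Your argument never introduces this second term and so cannot close.

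A second, independent gap is your claim that the batch-constrained slice $\bar\pi^{+,T}(\cdot\,|\,s,z^\star)$ ``converges to $\mu(\cdot\,|\,s)$.'' The definition of a batch-constrained meta-policy with Thompson sampling only requires that it place zero mass on $(s,a)\notin\gD_{\kappa_i,\mu_i}$; nothing forces it to reproduce $\mu$'s action law, and the paper's proof never uses this. What the paper actually does is bound $p_{out}^{\kappa_{test},\gD_{\kappa_{i^*},\mu_{i^*}}}(\pi)$ for \emph{any} policy $\pi\in\Pi^{\gD_{\kappa_{i^*},\mu_{i^*}}}$ via an $\ell_1$ bound on visitation-distribution differences derived from simulation-lemma style estimates; no encoder identifiability or behavior-cloning-type convergence is needed. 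Relatedly, your invocation of Theorem~\ref{thm:DSRTID}(ii) in the inductive step is off-target: that result equates $\sP_{\overline{M}^+,[\mu]}$ with $\sP_{\overline{M}^+,\bar\pi^+}$ at the \emph{belief-averaged} level, whereas here the rollout takes place in the fixed task $\kappa_{test}$, so the relevant transition-reward law is $\sP_{\kappa_{test}}$, not the BAMDP marginal.
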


\vspace{-0.03in}
Theorem \ref{thm:ThompsonGenID} indicates that for each adaptation episode, we can sample task hypotheses from the current task belief and execute $\bar{\pi}^{+,T}$ to interact with the environment until finding an in-distribution episode. For example in Figure~\ref{fig:MultiTaskDDMExample}, after offline meta-training, a meta-policy with Thompson sampling $\bar{\pi}^{+,T}$ will perform $a_i$ with a task hypothesis of $\kappa_i$ and expect to receive a reward 1. During online meta-testing, $\kappa_{test}$ is drawn from $p(\kappa)$ and the agent needs to infer the task identification. To achieve in-distribution online adaptation, $\bar{\pi}^{+,T}$ will try various actions according to diverse task hypotheses until sampling an in-distribution episode with a reward 1. Updating the task belief with the in-distribution episode, RL agents can infer and solve this task.

In contrast, when updating task belief using an out-of-distribution episode with a reward 0, the posterior task belief will be out of the offline dataset $\overline{\gD}^+$. Note that offline training paradigm can not well-optimize $\bar{\pi}^{+,T}$ on out-of-distribution states \citep{fujimoto2019off} and policy $\bar{\pi}^{+,T}$ will fail in this case. Moreover, Thompson sampling is very popular in context-based deep meta-RL \citep{rakelly2019efficient} and we will generalize these theoretical implications.

\vspace{-0.1in}
\section{IDAQ: In-Distribution Online Adaptation with Uncertainty Quantification}\label{sec:IDAQ}

In our setting, offline meta-RL contains two phases: offline meta-training and online adaptation. For offline meta-training, we employ an off-the-shelf context-based algorithm, e.g., FOCAL \citep{li2020focal}, which follows the learning paradigm of latent task embedding (see Section \ref{sec:task-embedding}). In this section, we will focus on investigating a practical scheme to address the major challenge of the transition-reward distribution shift during online adaptation. Motivated by our theory in Section~\ref{sec:theory}, we aim to \textbf{\textit{distinguish}} whether an adaptation episode is in the distribution of the offline dataset, and utilize meta-policies with Thompson sampling \citep{strens2000bayesian} to generate in-distribution online adaptation. Therefore, we will introduce a novel context-based online adaptation algorithm, called \textit{\textbf{I}n-\textbf{D}istribution online \textbf{A}daptation with uncertainty \textbf{Q}uantification} (IDAQ), which infers in-distribution context for solving meta-testing tasks. The overall algorithm of IDAQ is illustrated in Algorithm~\ref{alg:idaq}. IDAQ consists of two main components: (i) a general in-distribution online adaptation framework, and (ii) a plug-in uncertainty quantification function. We will describe these components in detail as follows.

\vspace{-0.1in}
\subsection{In-Distribution Online Adaptation Framework}
\vspace{-0.05in}
As motivated by Theorem~\ref{thm:ThompsonGenID}, our adaptation protocol adopts the popular framework of \textit{Thompson sampling} \citep{rakelly2019efficient} for online meta-testing. IDAQ will iteratively update posterior task belief based on online interactions with environment and execute the meta-policy with a sampled ``task hypothesis''. For in-distribution online adaptation, IDAQ utilizes a given uncertainty quantification $\sQ(\tau)$ to empirically measure the confidence that online experience $\{\tau_i\}$ are in-distribution. Note that the context encoder $q(z|\bm{c})$ (i.e., a task inference module) is meta-trained in the offline dataset and cannot handle out-of-distribution adaptation \citep{mendonca2020meta}, where $\bm{c}$ is the episode-based context. To realize reliable task belief updating, IDAQ needs to estimate a reference threshold $\delta$ and defines the in-distribution context 
\begin{align}\label{eq:in-dis-context}
	\bm{c}_{in} = \left\{\tau_i\left|\sQ(\tau_i)\le \delta, \forall \tau_i \in \bm{c}\right.\right\}.
\end{align}
In this way, IDAQ will perform a two-stage paradigm of online adaptation: (i) a \textit{Reference Stage} to estimate the uncertainty threshold $\delta$ and (ii) an \textit{Iterative Updating Stage} to update the in-distribution context $\bm{c}_{in}$, posterior task belief $q(z|\bm{c}_{in})$, and execution meta-policy $\pi(a|s, z)$.

\textbf{Reference Stage} collects $n_r$ online adaptation episodes $\left\{\tau_i\right\}_{i=1}^{n_r}$ using the prior task distribution $q(z)$ and meta-policy $\pi(a|s, z)$ in a meta-testing task $\kappa_{test}$. IDAQ will calculate the in-distribution confidence of adaptation episodes $\left\{\sQ(\tau_i)\right\}_{i=1}^{n_r}$ and estimate the reference threshold $\delta$ that is the bottom $k\%$-quantile of $\left\{\sQ(\tau_i)\right\}_{i=1}^{n_r}$, where $k$ is a hyperparameter to divide the range of uncertainty of in-distribution episodes. Hence, IDAQ can derive the in-distribution context $\bm{c}_{in}$ and posterior task belief $q(z|\bm{c}_{in})$.

\textbf{Iterative Updating Stage} will update the posterior task belief $q(z|\bm{c}_{in})$ in $n_i$ iterations. In each iteration, IDAQ collects an online adaptation episode $\tau_j$ using the current task belief $q(z|\bm{c}_{in})$ and meta-policy $\pi(a|s, z)$ in $\kappa_{test}$. When the uncertainty of this episode $\sQ(\tau_j)$ is less than the reference threshold $\delta$, IDAQ will update the in-distribution context, i.e., $\bm{c}_{in}\leftarrow\bm{c}_{in}\cup\{\tau_j\}$, and derive the posterior task belief $q(z|\bm{c}_{in})$. The final policy $\pi_{out}(a|s, z)$ is executed with the total in-distribution context $\bm{c}_{in}$.

\begin{algorithm}
	\caption{IDAQ: In-Distribution online Adaptation with uncertainty Quantification}\label{alg:idaq}
	\begin{algorithmic}[1]
		\STATE {\bfseries Require:} An offline dataset $\gD^+$, a meta-testing task $\kappa_{test}$, the number of iterations $n_i$, a context-based offline meta-training algorithm $\sA$ (i.e., FOCAL), and an in-distribution uncertainty quantification $\sQ$
		\STATE Offline meta-train a context encoder $q(z|\bm{c})$ and a meta-policy $\pi(a|s, z)$ using an algorithm $\sA$ in a dataset $\gD^+$ \COMMENT{\textbf{\textit{Offline meta-training}}}
		\STATE Perform reference stage of online adaptation and estimate the in-distribution threshold $\delta$ using $\sQ$ \COMMENT{\textbf{\textit{Start online meta-testing}}}
		\STATE Derive the in-distribution context $\bm{c}_{in}$ with Eq. (\ref{eq:in-dis-context}) and posterior task belief $q(z|\bm{c}_{in})$
		\FOR[\textbf{\textit{Iterative updating stage}}]{$t=1\dots n_i$}
		\STATE Collect an online adaptation episode using the posterior task belief $q$ and meta-policy $\pi$ in $\kappa_{test}$
		\STATE Update the in-distribution context $\bm{c}_{in}$ using $\sQ,\delta$ and derive the posterior task belief $q(z|\bm{c}_{in})$
		\ENDFOR
            \STATE {\bfseries Return:} $\pi$, $q(z|\bm{c}_{in})$
	\end{algorithmic}
\end{algorithm}

\vspace{-0.1in}
\subsection{Uncertainty Quantification}
\vspace{-0.05in}
\label{quant}
Uncertainty quantification is a popular tool for empirically measuring the confidence that data is in the distribution of offline RL \cite{yu2020mopo} or noisy oracle \cite{ren2022efficient}. In this subsection, we will analyze three practical uncertainty quantifications to adapt IDAQ to complex domains: \textit{Prediction Error}, \textit{Prediction Variance}, and \textit{Return-based}. The empirical evaluation is deferred to Section \ref{equant}.

To realize the uncertainty quantification of prediction error and prediction variance, we adopt a model-based approach to learn an ensemble of $L$ reward and dynamics models $\left\{r_{\phi_i}(s,a,z), p_{\psi_i}(s,a,z)\right\}_{i=1}^{L}$ according to the latent task embedding $z$. We parameterize them by $\{\phi_i,\psi_i\}_{i=1}^{L}$ and optimize $\left\{r_{\phi_i}, p_{\psi_i}\right\}_{i=1}^{L}$ on the offline multi-task dataset $\gD^+$ by minimizing the MSE loss function during meta-training. Formal loss function is deferred to Appendix \ref{appendix-sec:mse-loss}.

\textbf{Prediction Error} quantifies the model error to estimate the confidence that data is trained during offline meta-training. This metric is also called \textit{``curiosity''}, a popular intrinsic reward in exploration of single-task RL \citep{pathak2017curiosity}, which encourages the agent to visit new areas. In offline meta-RL, we utilize this quantification to filter out out-of-distribution adaptation episodes and denote by
\vspace{-0.03in}
\begin{align}
	\sQ_{PE}(\tau_i, z) =&~ \frac{1}{HL} \sum_{t=0}^{H-1}\sum_{i=1}^{L} \left|r_t-r_{\phi_i}(s_t,a_t,z)\right|  \\
	 &~\qquad\qquad~~~~~+\left\|s_{t+1}-p_{\psi_i}(s_t,a_t,z)\right\|_2, \nonumber
\end{align}

\vspace{-0.1in}
where $z$ is the ``task hypothesis'' of the episode $\tau_i$ in IDAQ. $\sQ_{PE}$ averages model errors across timesteps and an ensemble. The challenge of $\sQ_{PE}$ is that the hyperparameter $k$ for the reference threshold $\delta$ will be sensitive for different multi-task datasets since the model error of in-distribution episodes can be various.

\textbf{Prediction Variance} captures the epistemic and aleatoric uncertainty of the true models using a bootstrap ensemble \citep{yu2020mopo}. This metric is popular to measure whether data is in the dataset of offline single-task RL \citep{kidambi2020morel}. In offline meta-RL, we denote this quantification by
\vspace{-0.05in}
\begin{align}
	\sQ_{PV}(\tau_i, z) =&~ \frac{1}{H}\sum_{t=0}^{H-1}\max_{i,j}  \left|r_{\phi_i}(s_t,a_t,z)-r_{\phi_j}(s_t,a_t,z)\right|  \nonumber\\
	&~+\left\|p_{\psi_i}(s_t,a_t,z)-p_{\psi_j}(s_t,a_t,z)\right\|_2,
\end{align}
where $z$ is the ``task hypothesis'' of $\tau_i$. $\sQ_{PV}$ averages the ensemble discrepancy across timesteps. However, $\sQ_{PV}$ cannot handle cases with higher prediction error and lower prediction variance. For example in Figure \ref{fig:MultiTaskDDMExample}, the learned reward model will deterministically output 1 for each action, in which the prediction error is $1-\frac{1}{v}$ with no variance.

\textbf{Return-based} uncertainty quantification is our newly designed metric for offline meta-RL with medium or expert datasets. To address the limitations of prediction error and prediction variance, we utilize a bias of offline RL \citep{fujimoto2019off} that few-shot out-of-distribution episodes generated by an offline-learned meta-policy $\pi$ usually have lower returns since offline meta-training can not well-optimize meta-policies on out-of-distribution states. Its contrapositive statement is that executing $\pi$ with higher returns has a higher probability of being in-distribution and online policy evaluation of $\pi$ presents a good in-distribution confidence:
\vspace{-0.05in}
\begin{align}
\sQ_{RE}\left(\left\{\tau_i\right\}_{i=1}^{n_e}\right) =&~- \frac{1}{n_e}\sum_{i=1}^{n_e}\sum_{t=0}^{H-1} r_t^i,
\end{align}

\vspace{-0.15in}
where $n_e$ is the number of episodes generated by $\pi$ to approximate the online policy evaluation. With the mild assumption (i.e., the bias of offline RL above), we can prove that return-based uncertainty quantification $\sQ_{RE}$ can theoretically derive in-distribution contexts using Theorem \ref{thm:ThompsonGenID}. The formal analysis is deferred to Appendix \ref{appendix:omitted-section-IDAQ}. Moreover, in empirical, IDAQ can adopt a conservative (i.e., low) reference threshold $\delta$ to achieve in-distribution online adaptation. In this case, IDAQ may neglect some in-distribution episodes with lower returns. We will argue that, in medium or expert datasets, our method can utilize informative episodes with higher returns to perform task inference. It is an interesting and exciting future direction to differentiate in-distribution episodes with lower returns in offline meta-RL with online adaptation. 

\vspace{-0.1in}
\section{Experiments}

In this section, we first evaluate the three uncertainty quantifications mentioned in Section \ref{quant} and empirically demonstrate that the \textbf{Return-based} quantification works the best on various task sets.
Then we conduct large-scale experiments on Meta-World ML1\citep{yu2019meta}, a popular meta-RL benchmark that consists of 50 robot arm manipulation task sets. Finally, we perform ablation studies to analyze IDAQ's sensitivity to hyper-parameter settings and \textit{dataset qualities}. Datasets are collected by script policies that solve corresponding tasks.
We compare against FOCAL \citep{li2020focal} and MACAW \citep{mitchell2021offline}, as well as their online adaptation variants. We also compare against BOReL \citep{dorfman2021offline}
. For a fair comparison, we evaluate a variant of BOReL that does not utilize oracle reward functions, as introduced in the original paper \citep{dorfman2021offline}. FOCAL is built upon PEARL \citep{rakelly2019efficient} and uses contrastive losses to learn context embeddings, while MACAW is a MAML-based \citep{finn2017model} algorithm and incorporates AWR \citep{peng2019advantage}. Both FOCAL and MACAW are originally proposed for the offline adaptation settings  (i.e., with expert context). For online adaptation, we use online experience instead of expert contexts, and adopt the adaptation protocol of PEARL and MAML, respectively. Evaluation results are averaged over six random seeds, and variance is measured by 95\% bootstrapped confidence interval. Detailed hyper-parameter settings are deferred to Appendix \ref{exp-app1}. A didactic example that empirically demonstrates the distributional shift problem proposed in Section \ref{sec:theory} is deferred to Appendix \ref{sec:didactic-example}. An open-source implementation of our algorithm is available online\footnote{\url{https://github.com/NagisaZj/IDAQ_Public}}.

\vspace{-0.1in}
\subsection{Evaluation of Uncertainty Quantifications}\label{equant}

We evaluate the three uncertainty quantifications mentioned in Section \ref{quant} on some representative tasks. As shown in Table \ref{tab:51}, the \textbf{Return-based} quantification significantly outperforms the other two quantifications and the baseline algorithm FOCAL (which uses all online experiences as contexts). To further investigate how these quantifications behave, we illustrate the uncertainty quantifications of various episodes collected in the reference stage on one of the meta-training tasks. As shown in Figure \ref{fig:pe}, the \textbf{Prediction Error} quantification cannot find a good reference threshold to distinguish in-distribution episodes. Figure \ref{fig:pv} shows that the \textbf{Prediction Variance} quantification fails and may suffer from situations with higher prediction error and lower prediction variance in the medium or expert datasets. Figure \ref{fig:pp3} illustrates the minimal distance between episodes collected in the reference stage and the offline dataset on one of the meta-training tasks. Results show that episodes with higher returns are closer to the medium or expert datasets, which implies that the \textbf{Return-based} quantification can correctly identify in-distribution episodes. Formal distance function and additional visualizations of other tasks are deferred to Appendix \ref{formal} and \ref{avr}, respectively.

\begin{figure*}
	\centering
	\vspace{-0.1in}
\subfigure[]{\includegraphics[width=0.28\linewidth]{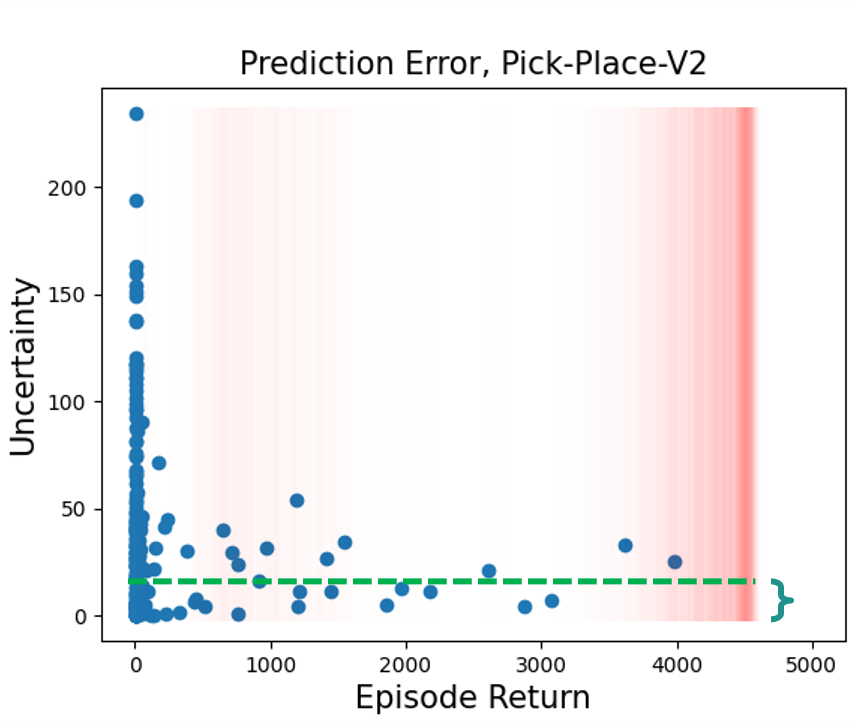}\label{fig:pe}}
\subfigure[]
{\includegraphics[width=0.28\linewidth]{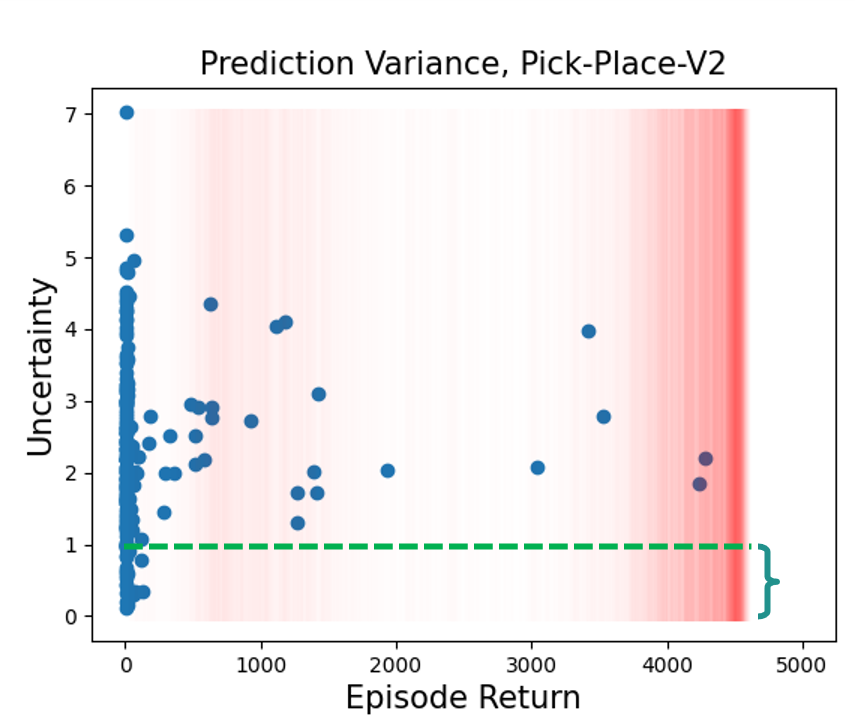}\label{fig:pv}} 
\subfigure[]{\includegraphics[width=0.28\linewidth]{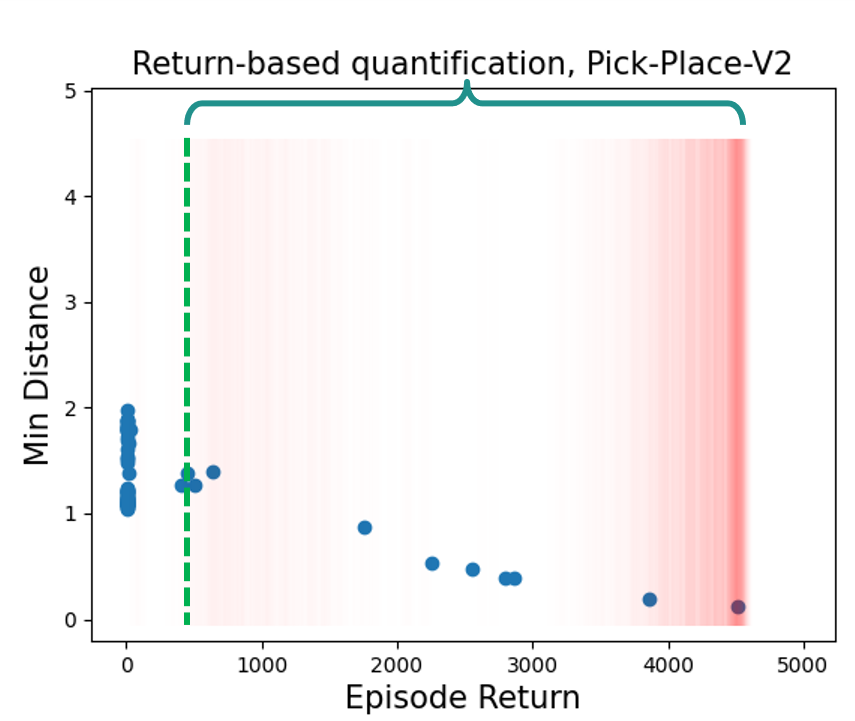}\label{fig:pp3}} 
\includegraphics[width=0.1\linewidth]{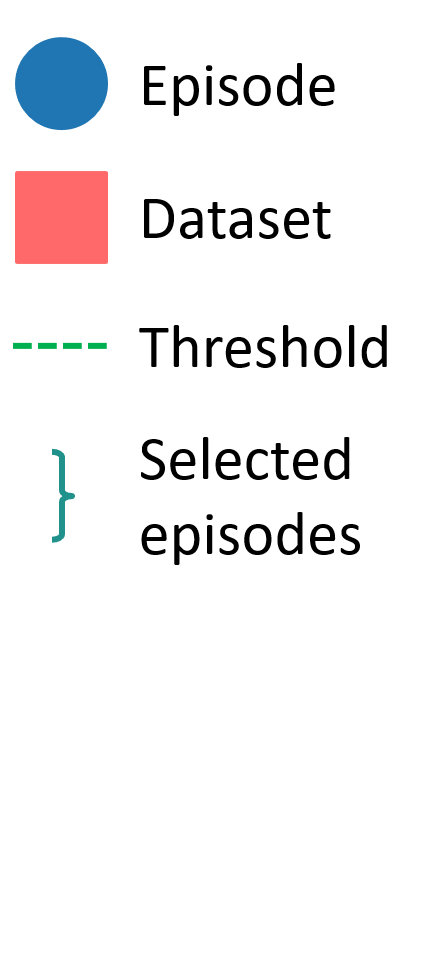}
\vspace{-0.05in}
	\caption{(a) and (b) illustrate the uncertainty quantifications of \textbf{Prediction Error} and \textbf{Prediction Variance} on episodes collected in the reference stage on one of the meta-training tasks, respectively. Red shades indicate the density of episode returns in the dataset. Both quantifications give low uncertainty measures on the out-of-distribution episodes in the bottom left corner, and fail to identify in-distribution episodes. (c) illustrates the minimal distance between episodes collected in the reference stage and the offline dataset on one of the meta-training tasks. The dotted green line illustrates the reference threshold found by each quantification. Results imply that the \textbf{Return-based} quantification successfully identifies in-distribution episodes.
 }
\end{figure*}

\begin{table*}[t]
	\centering
        \vspace{-0.15in}
	\caption{Performance of the three uncertainty quantifications and FOCAL on example tasks, a bunch of Meta-World ML1 tasks with normalized scores. ``IDAQ+Return'' is short for IDAQ with the \textbf{Return-based} quantification. For Meta-World tasks, ``-V2'' is omitted for brevity. ``Med'' represents results trained on medium quality datasets.
 }
	\begin{tabular}{l|c|c|c|c}
		\toprule
		Example Env & IDAQ+Prediction Error & IDAQ+Prediction Variance & IDAQ+Return & FOCAL\\
		\midrule
   	{Push} & 0.31$~\pm~$0.13 &  0.13$~\pm~$0.07 &  \textbf{0.55}$~\pm~$0.10 &0.34 $~\pm~$0.14 \\
   {Pick-Place} & 0.07$~\pm~$0.05&  0.04$~\pm~$0.03  &  \textbf{0.20}$~\pm~$0.03 & 0.07 $~\pm~$0.02 \\
   {Soccer} &0.18$~\pm~$0.03  &  0.23$~\pm~$0.03  &  \textbf{0.44}$~\pm~$0.04 & 0.11$~\pm~$0.03 \\
   {Drawer-Close}& \textbf{1.00}$~\pm~$0.00&\textbf{0.99}$~\pm~$0.01&\textbf{0.99}$~\pm~$0.02 & \textbf{0.96} $~\pm~$ 0.04 \\  
   {Reach}& \textbf{0.87}$~\pm~$0.01 & 0.49$~\pm~$0.03 & \textbf{0.85}$~\pm~$0.03 & 0.62$~\pm~$0.05\\
   \midrule
   {Sweep (Med)} & 0.15 $~\pm~$ 0.03 & 0.06 $~\pm~$ 0.02 &  \textbf{0.59}$~\pm~$0.13 & 0.38 $~\pm~$ 0.13\\
   {Peg-Insert-Side (Med)} &0.03 $~\pm~$ 0.02  & 0.03 $~\pm~$ 0.01 &  \textbf{0.30}$~\pm~$0.14  & 0.10 $~\pm~$ 0.07\\
   \midrule
   
		Point-Robot &  -5.70$~\pm~$0.05  & -21.29$~\pm~$0.85 & \textbf{-5.10}$~\pm~$0.26 & -15.38 $~\pm~$ 0.95  \\

		\bottomrule
	\end{tabular}
	\label{tab:51}
 \vspace{-0.1in}
\end{table*}

\vspace{-0.1in}
\subsection{Main Results}

Following results in Section \ref{equant}, we use the \textbf{Return-based} quantification as the default quantification for IDAQ in the following experiments.
We evaluate on Meta-World ML1\citep{yu2019meta}, a popular meta-RL benchmark that consists of 50 robot arm manipulation task sets. Each task set consists of 50 tasks with different goals. For each task set, we use 40 tasks as meta-training tasks, and remain the other 10 tasks as meta-testing tasks. As shown in Table \ref{tab:avg}, IDAQ significantly outperforms baselines under the online context setting. With expert contexts, FOCAL and MACAW both achieve reasonable performance. IDAQ achieves better or comparable performance to baselines with expert contexts, which implies that expert contexts may not be necessary for offline meta-RL. Under online contexts, FOCAL fails due to the data distribution mismatch between offline training and online adaptation. MACAW has the ability of online fine-tuning as it is based on MAML, but it also suffers from the distribution mismatch problem, and online fine-tuning can hardly improve its performance within a few adaptation episodes.  BOReL fails on most of the tasks, as BOReL without oracle reward functions will also suffer from the distribution mismatch problem, which is consistent with the results in the original paper.

\begin{table*}[t]
	\centering
        \vspace{-0.05in}
	\caption{Algorithms' normalized scores averaged over 50 Meta-World ML1 task sets. Scores are normalized by expert-level policy return.}
	\begin{tabular}{c|c|c|c|c|c}
		\toprule
		 IDAQ & FOCAL  & MACAW &\makecell[c]{FOCAL with \\Expert Context}&\makecell[c]{MACAW with \\Expert Context} & BOReL\\
		\midrule\textbf{0.73}$~\pm~$0.07&0.53$~\pm~$0.1&0.18$~\pm~$0.1&0.67$~\pm~$0.07&0.68$~\pm~$0.07&0.04$~\pm~$0.01\\
		\bottomrule
	\end{tabular}
	\label{tab:avg}
\end{table*}

Table \ref{tab:maze} shows algorithms' performance on 20 representative Meta-World ML1 task sets, as well a sparse-reward version of Point-Robot and Cheetah-Vel, which are popular meta-RL tasks \citep{li2020focal}. IDAQ achieves remarkable performance in most tasks and may fail in some hard tasks as offline meta-training is difficult. We also find that IDAQ achieves better or comparable performance to baselines with expert contexts on 33 out of the 50 task sets. Detailed algorithm performance on all 50 tasks and comparison to baselines with expert contexts are deferred to Appendix \ref{ss2}. 

\begin{table*}[t]
	\centering
        \vspace{-0.1in}
	\caption{Performance on example tasks, a bunch of Meta-World ML1 tasks with normalized scores.
 }
	\begin{tabular}{l|c|c|c|c}
		\toprule
		Example Env & IDAQ & FOCAL & MACAW & BOReL\\
		\midrule
   {Coffee-Push}&\textbf{1.26}$~\pm~$0.13&0.66$~\pm~$0.07&0.01$~\pm~$0.01 & 0.00$~\pm~$0.00\\
   	{Faucet-Close} &  \textbf{1.12}$~\pm~$0.01 &  1.06$~\pm~$0.02 &  0.07$~\pm~$0.01   & 0.13$~\pm~$0.03\\
   {Faucet-Open} & \textbf{1.05}$~\pm~$0.02 &  1.01$~\pm~$0.02 &  0.08$~\pm~$0.04  & 0.12$~\pm~$0.05 \\
   {Door-Close} & \textbf{0.99}$~\pm~$0.00 &  0.97$~\pm~$0.01 &  0.00$~\pm~$0.00  & 0.37$~\pm~$0.19 \\
   {Drawer-Close}&\textbf{0.99}$~\pm~$0.02 &\textbf{0.96}$~\pm~$0.04 &0.53$~\pm~$0.50  & 0.00$~\pm~$0.00  \\  
   {Door-Lock} & \textbf{0.97}$~\pm~$0.01 &  0.90$~\pm~$0.02 &  0.25$~\pm~$0.11   & 0.14$~\pm~$0.00 \\
   {Plate-Slide-Back} & \textbf{0.96}$~\pm~$0.02 &  0.58$~\pm~$0.06 &  0.21$~\pm~$0.17  & 0.01$~\pm~$0.00\\
   Dial-Turn &  \textbf{0.91}$~\pm~$0.05 &  0.84$~\pm~$0.09 &  0.00$~\pm~$0.00   & 0.00$~\pm~$0.00 \\
   {Handle-Press} & \textbf{0.88}$~\pm~$0.05 &  \textbf{0.87}$~\pm~$0.02 &  0.28$~\pm~$0.10  & 0.01$~\pm~$0.00\\
   {Hammer} & \textbf{0.84}$~\pm~$0.06 &  0.59$~\pm~$0.07 &  0.10$~\pm~$0.01 & 0.09$~\pm~$0.01   \\
		{Button-Press} &  \textbf{0.74}$~\pm~$0.08 &  \textbf{0.68}$~\pm~$0.14 &  0.02$~\pm~$0.01 & 0.01$~\pm~$0.01 \\ 
   Push-Wall&\textbf{0.71}$~\pm~$0.15 &0.43$~\pm~$0.06 &0.23$~\pm~$0.18   & 0.00$~\pm~$0.00  \\
		{Hand-Insert} &  \textbf{0.63}$~\pm~$0.04 &  0.29$~\pm~$0.07 &  0.02$~\pm~$0.01  & 0.00$~\pm~$0.00 \\
   {Peg-Unplug-Side} & \textbf{0.56}$~\pm~$0.07 &  0.19$~\pm~$0.09 &  0.00$~\pm~$0.00 & 0.00$~\pm~$0.00  \\
   {Bin-Picking} &  0.53$~\pm~$0.16 &  0.31$~\pm~$0.21 &  \textbf{0.66}$~\pm~$0.11  & 0.00$~\pm~$0.00\\
  Soccer&\textbf{0.44}$~\pm~$0.04 &0.11$~\pm~$0.03 &\textbf{0.38}$~\pm~$0.31   & 0.04$~\pm~$0.02  \\

   Coffee-Pull &  \textbf{0.40}$~\pm~$0.05 &   0.23$~\pm~$0.04 &  0.19$~\pm~$0.12  & 0.00$~\pm~$0.00 \\
   {Pick-Place-Wall} & 0.28$~\pm~$0.12 &  0.09$~\pm~$0.04 & \textbf{  0.39}$~\pm~$0.25  & 0.00$~\pm~$0.00  \\
Pick-Out-Of-Hole&
   0.26$~\pm~$0.25 &0.16$~\pm~$0.16 &\textbf{0.59}$~\pm~$0.06   & 0.00$~\pm~$0.00 \\ 
  
Handle-Pull-Side&
   \textbf{0.14}$~\pm~$0.04 &\textbf{0.13}$~\pm~$0.09 &0.00$~\pm~$0.00  & 0.00$~\pm~$0.00 \\   

   \midrule
   
		Cheetah-Vel &  \textbf{-171.5}$~\pm~$22.00 & -287.7$~\pm~$30.6 & -234.0$~\pm~$23.5  & -301.4$~\pm~$36.8\\
		Point-Robot &  \textbf{-5.10}$~\pm~$0.26 & -15.38$~\pm~$0.95 & -14.61$~\pm~$0.98  & -17.28$~\pm~$1.16 \\
		Point-Robot-Sparse &  \textbf{7.78} $~\pm~$0.64 & 0.83$~\pm~$0.37 & 0.00$~\pm~$0.00  & 0.00$~\pm~$0.00 \\
		\bottomrule
	\end{tabular}
	\label{tab:maze}
\end{table*}

\vspace{-0.1in}
\subsection{Ablation Studies}

We further perform ablation studies on dataset qualities. As shown in Table \ref{tab:51}, IDAQ with the \textbf{Return-based} quantification achieves state-of-the-art performance on medium-quality datasets. The other two quantifications perform poorly, which may suggest that medium datasets are more challenging to design a good uncertainty quantification and the return-based metric can perform effectively in these settings. Further ablation studies on hyper-parameter settings are deferred to Appendix \ref{abl}. Results demonstrate that IDAQ is generally robust to the choice of hyper-parameters.



\vspace{-0.1in}
\section{Related Work}

In the literature, offline meta-RL methods utilize a context-based \citep{rakelly2019efficient} or gradient-based \citep{finn2017model} meta-RL framework to solve new tasks with few-shot adaptation. They utilize the techniques of contrastive learning \citep{li2020focal,yuan2022robust,li2020multi}, more expressive power \citep{mitchell2021offline}, or reward relabeling \citep{dorfman2021offline,pong2022offline} with various popular offline single-task RL tricks, i.e., using KL divergence \citep{wu2019behavior,peng2019advantage,nair2020awac} or explicitly constraining the policy to be close to the dataset \citep{fujimoto2019off,zhou2020plas}. However, these methods always require extra information for fast adaptation, such as offline context for testing tasks \citep{li2020focal,mitchell2021offline,yuan2022robust}, oracle reward functions \citep{dorfman2021offline}, or available interactions without reward supervision \citep{pong2022offline}. To address the challenge, we propose IDAQ, a context-based online adaptation algorithm, to utilize an uncertainty quantification for in-distribution adaptation without requiring additional information.

Similar to single-task offline RL \citep{levine2020offline}, SMAC \cite{pong2022offline} finds the policy or state-action distribution shift between learning policies and datasets in offline meta-RL. In this paper, we characterize the transition-reward distribution shift between offline datasets and online adaptation (see Eq. (\ref{eq:DSRT})), which is fundamentally different from state-action distribution shift. The reward-transition distribution shift is induced by task-dependent data collection and is unique in offline meta-RL. It specifies the discrepancy of reward and transition distribution given the state-action pairs. When using a behavior meta-policy to collect an offline dataset, reward-transition distribution shift will not appear but state-action or policy distribution shift still exists. SMAC claims that the distribution shift in z-space occurs due to a ''policy'' mismatch between behavior policies and online adaptation policy. In contrast, the reward-transition distribution shift is a general challenge in the setting of offline meta-RL. This distribution shift challenge occurs in any offline meta-RL algorithms, including gradient-based algorithms \citep{finn2017model}, and is beyond the z distribution shift tailored for context-based algorithms \citep{rakelly2019efficient}. 

BOReL \cite{dorfman2021offline} focuses on \textit{MDP ambiguity} for task inference. MDP ambiguity and transition-reward distribution shift are two orthogonal challenges in offline meta-RL with task-dependent behavior policies. MDP ambiguity arises from offline datasets with task-dependent data collection, where it may be difficult to differentiate between different MDPs due to narrow sub-datasets of various tasks. On the other hand, the reward-transition distribution shift studies the discrepancy of reward and transition distributions between task-dependent offline dataset and online adaptation. Our work leverages off-the-shelf context-based offline meta-training algorithms, e.g., FOCAL \citep{li2020focal}, for solving the MDP ambiguity problem during offline training, and proposes IDAQ to tackle the reward-transition distribution shift during online adaptation.

PEARL-based online adaptation \citep{rakelly2019efficient} may generate out-of-distribution episodes (see Figure \ref{fig:motivation} and Section \ref{sec:theory}). Meta-policies with Thompson sampling can generate in-distribution episodes, but the episodes generated by meta-policies with Thompson sampling are not all in-distribution. This is the motivation that we propose our method, which filters out out-of-distribution episodes to support in-distribution online adaptation. Moreover, IDAQ utilizes an exploration method (i.e., Thompson sampling \citep{strens2000bayesian}) for in-distribution online adaptation, which is supported by Theorem \ref{thm:ThompsonGenID}. Thompson sampling is a popular approach for temporally-extended exploration in the literature of meta-RL \citep{rakelly2019efficient}.

\vspace{-0.1in}
\section{Conclusion}

This paper formalizes the transition-reward distribution shift in offline meta-RL and introduces IDAQ, a novel in-distribution online adaptation approach. We find that IDAQ with a return-based uncertainty quantification performs effectively in medium or expert datasets. Experiments show that IDAQ can conduct accurate task inference and achieve state-of-the-art performance on Meta-World ML1 benchmark with 50 tasks. IDAQ also performs better or comparably than offline adaptation baselines with expert context, suggesting that offline context may not be necessary for the testing environments. One limitation of the greedy quantification is that it may not utilize in-distribution episodes with lower returns for random datasets and requires more adaptation episodes to sample in-distribution ``task hypotheses''. Two interesting future directions are to design a more accurate uncertainty quantification and to extend IDAQ to gradient-based in-distribution online adaptation algorithms.

\vspace{-0.1in}
\section*{Acknowledgements}
The authors would like to thank the anonymous reviewers and Zhizhou Ren for valuable and insightful discussions and helpful suggestions. This work is supported in part by Science and Technology Innovation 2030 - “New Generation Artificial Intelligence” Major Project (No. 2018AAA0100904) and the National Natural Science Foundation of China (62176135).

\nocite{langley00}

\bibliography{example_paper}
\bibliographystyle{icml2023}

\newpage

\appendix
\onecolumn
\allowdisplaybreaks

\section{Theory} \label{appendix:theory}

Our theory is the first to formalize the offline meta-RL with online adaptation using task-dependent behavior policies. We adopt the perspective of Bayesian RL to formalize task distribution, i.e., Bayes-Adaptive MDP (BAMDP) \citep{zintgraf2019varibad}, which is a popular theoretical framework for meta-RL. In this paper, we incorporate offline datasets with task-dependent behavior policies into BAMDPs and present a unique challenge: reward-transition distributional shift, which differs from state-action distributional shift in SMAC and single-task offline RL \citep{levine2020offline}. The consistency between offline and online policy evaluation is a very important criterion to measure the efficiency of algorithms in offline RL \citep{levine2020offline}. We find that filtering out out-of-distribution episodes in online adaptation can ensure the consistency of offline and online policy evaluation. Moreover, some insights are general for meta-RL. For example, Lemma \ref{lem:meta-task-dis-bound} shows that, for a meta-testing task drawn from \textit{arbitrary} task distribution, the distance from the closest meta-training task will asymptotically approach zero with high probability, as the number of sampled meta-training tasks grows. 

\subsection{Background}\label{appendix:theory-bg}
Throughout this paper, for a given non-negative integer $N\in\sZ_{+}$, we use $[N]$ to denote the set $\{0,1,\dots,N-1\}$. For any object that is a function
of/distribution over $\gS$,  $\gS\times\gA$, $\gS\times\gA\times\gS$, or $\gS\times\gA\times\gR$, we will treat it as a vector whenever convenient. 

\subsubsection{Finite-Horizon Single-Task RL}\label{appendix:epi-rl}
In single-task RL, an agent interacts with a Markov Decision Process (MDP) to maximize its cumulative reward \citep{sutton2018reinforcement}. A finite-horizon MDP is defined as a tuple $M=(\gS,\gA,\gR,H,P,R)$ \citep{zintgraf2019varibad,du2019good}, where $\gS$ is the state space, $\gA$ is the action space, $\gR$ is the reward space, $H\in\sZ_+$ is the planning horizon, $P:\gS\times\gA\rightarrow\Delta\left(\gS\right)$ is the transition function which takes a state-action pair and returns a distribution
over states, and $R:\gS\times\gA\rightarrow\Delta\left(\gR\right)$ is the reward distribution. In particular, we consider finite state, action, and reward spaces in the theoretical analysis, i.e., $|\gS|<\infty, |\gA|<\infty, |\gR|<\infty$. Without loss of generality, we assume a fixed initial state $s_0$\footnote{Some papers assume the initial state is sampled from a distribution $P_1$. Note this is equivalent to assuming a fixed initial state $s_0$, by setting $P(s_0, a) = P_1$ for all $a\in\gA$ and now our state $s_1$ is equivalent to the initial state in their assumption.}. A policy $\pi:\gS\rightarrow\Delta\left(\gA\right)$ prescribes a distribution over actions for each state. The policy $\pi$ induces a (random) $H$-horizon trajectory $\tau_H^\pi=\left(s_0, a_0, r_0, s_1, a_1, \dots, s_{H-1}, a_{H-1},  r_{H-1}\right)$, where $a_0\sim \pi(s_0), r_0\sim R(s_0,a_0), s_1\sim P(s_0,a_0), a_1\sim\pi(s_1)$, etc. To streamline our analysis, for each $h\in[H]$, we use $\gS_h \subseteq \gS$ to denote the set of states at $h$-th timestep, and we assume $\gS_h$ do not intersect with each other. To simplify notation, we assume the transition from any state in $S_{H-1}$ and any action to the initial state $s_0$, i.e., $\forall s\in S_{H-1},a\in\gA$, we have $P(s_0|s,a) = 1$\footnote{The transition from the state in $S_{H-1}$ does not affect learning in the finite-horizon MDP $M$.}. We also assume $r_t\in[0,1], \forall t\in[H]$ almost surely.  Denote the probability of $\tau_H$:
\begin{align}\label{appendix-eq:prob-tau}
p\left(\tau_H^\pi\right)=\left(\prod_{t\in[H]}\pi(a_t|s_t)\cdot R(r_t|s_t,a_t)\right)\prod_{t\in[H-1]}P(s_{t+1}|s_t,a_t).
\end{align}
For any policy $\pi$, we define a value function $V_\pi:\gS\rightarrow\sR$ as: $\forall h\in[H], \forall s\in\gS_h$,
\begin{align}\label{appendix-eq:value-function}
V_\pi(s)&=\E_{s_h=s,a_t\sim\pi(\cdot|s_t),r_t\sim R(\cdot{|s_t,a_t}),s_{t+1}\sim P\left(\cdot{|s_t,a_t}\right)}\left[\sum_{t=h}^{H-1}r_t\right] \\
&=\left\{
\begin{aligned}
&\sum_{a\in\gA}\pi(a|s)\cdot \E_{r\sim R(\cdot{|s,a})}\left[r\right], &\text{ if }h=H-1, \\
&\sum_{a\in\gA}\pi(a|s)\left(\E_{r\sim R(\cdot{|s,a})}\left[r\right]+\sum_{s'\in\gS_{h+1}}P(s'|s,a)V_\pi(s')\right), &\text{ otherwise},
\end{aligned}\nonumber
\right.
\end{align} 
and a visitation distribution of $\pi$ is defined by $\rho_{\pi}(\cdot): \Delta\left(\gS\right)$ which is $\forall h\in[H],\forall s\in\gS_h$,
\begin{align}\label{appendix-eq:visit-dis} 
\rho_{\pi}(s) =\left\{
\begin{aligned}
&\frac{1}{H}, &\text{ if }h=0\text{ and }s=s_0, \\
&\sum_{\tilde{s}\in\gS_{h-1},\tilde{a}\in\gA}\rho_{\pi}(\tilde{s})\cdot\pi(\tilde{a}|\tilde{s})\cdot P(s|\tilde{s},\tilde{a}), &\text{ if }h>0, \\
&0, &\text{ otherwise},
\end{aligned}
\right.
\end{align}
and $\forall s\in\gS,a\in\gA,r\in\gR$, 
\begin{align}\label{appendix-eq:visit-dis-r}
	\rho_{\pi}(s,a)=\rho_{\pi}(s)\cdot\pi(a|s)\quad\text{and}\quad \rho_{\pi}(s,a,r)=\rho_{\pi}(s)\cdot\pi(a|s)\cdot R(r|s,a).
\end{align}
The expected total reward induced by policy $\pi$, i.e., the policy evaluation of $\pi$, is defined by 
\begin{align}\label{eq:pi-eva}
\gJ_M(\pi)=V_\pi\left(s_0\right)=H\sum_{s\in\gS,a\in\gA} \rho_{\pi}(s,a)\cdot \E_{r\sim R(\cdot{|s,a})}\left[r\right].
\end{align}
The goal of RL is to find a policy $\pi$ that maximizes its expected return $\gJ(\pi)$.

\subsubsection{Offline Finite-Horizon Single-Task RL} \label{appendix:off-rl}
We consider the offline finite-horizon single-task RL setting, that is, a learner only has access to a dataset $\gD$ consisting of $K$ trajectories $\left\{\left(s_t^k,a_t^k,r_t^k\right)\right\}^{k\in[K]}_{t\in[H]}$ (i.e., $|\gD|=KH$ tuples) and is not allowed to interact with the environment for additional online explorations. The data can be collected through multi-source logging policies and denote the unknown behavior policy $\mu$. Similar with related work \citep{ren2021nearly,yin2020near,yin2021towards,yin2021near,shi2022pessimistic}, we assume that $\gD$ is collected through interacting $K$ i.i.d. episodes using policy $\mu$ in $M$. Define the reward and transition distribution of data collection with $\mu$ in $M$ by $\sP_{M}$ \citep{jin2021pessimism}, i.e., $\forall t\in[H]$ in each episode, 
\begin{align}\label{appedix-eq:mdp-DC}
\sP_M\left(r_t,s_{t+1}\left|s_t,a_t\right.\right)=R^M\left(r_t\left|s_t,a_t\right.\right)\cdot P^M\left(s_{t+1}\left|s_t,a_t\right.\right),
\end{align}
where the action $a_t$ is drawn from a behavior policy $\mu$. Denote a dataset collected following the i.i.d. data collecting process, i.e., $\gD\sim\left(\sP_M,\mu\right)$ is an i.i.d. dataset. Note that the offline dataset $\gD$ can be narrowly collected by some behavior policy $\mu$ and a large amount of state-action pairs are not contained in $\gD$. These unseen state-action pairs will be erroneously estimated to have unrealistic values, called a phenomenon \textit{extrapolation error} \citep{fujimoto2019off}. To overcome extrapolation error in policy learning of finite MDPs, Fujimoto et al. \citep{fujimoto2019off} introduces batch-constrained RL, which restricts the action space in order to force policy selection of an agent with respect to a subset of the given data. Thus, define a batch-constrained policy set is
\begin{align}\label{appedix-eq:constrained-pi}
\Pi^\gD=\left\{\pi\left|\pi(a|s)=0 \text{ whenever } (s,a)\not\in\gD\right.\right\},
\end{align}
where denoting $(s,a)\in\gD$ if there exists a trajectory containing $(s,a)$ in the dataset $\gD$, and similarly for $s\in\gD$, $(s,a,r)\in\gD$, or $(s,a,r,s')\in\gD$. The batch-constrained policy set $\Pi^\gD$ consists of the policies that for any state $s$ observed in the dataset $\gD$, the agent will not select an action outside of the dataset. Thus, for any batch-constrained policy $\pi\in\Pi^\gD$, define the approximate value function $V_\pi^{\gD}:\gS\rightarrow\sR$ estimated from $\gD$ \citep{fujimoto2019off,liu2020provably} as: $\forall h\in[H], \forall s\in\gS_h$,
\begin{align}
V_\pi^{\gD}(s)&=\E_{s_h=s, a_t\sim\pi(\cdot|s_t),(s_t,a_t,r_t,s_{t+1})\sim\gD}\left[\sum_{t=h}^{H-1}r_t\right] \\
&=\left\{
\begin{aligned}
&\sum_{a\in\gA}\pi(a|s)\E_{(s,a,r)\in \gD}\left[r\right], &\text{ if }h=H-1, \\
&\sum_{a\in\gA}\pi(a|s)\mathbb{E}_{(s,a,r,s')\in \gD}\left[r+V_\pi^{\gD}(s')\right], &\text{ otherwise},
\end{aligned}
\right.
\end{align} 
which is called Approximate Dynamic Programming (ADP) \citep{bertsekas1995neuro} and such methods take sampling data as input and approximate the value-function \citep{liu2020provably,chen2019information}. In addition, define the approximate policy evaluation of $\pi$ estimated from $\gD$ as 
\begin{align}\label{eq:pi-eva-dataset}
\gJ_{\gD}(\pi)=V_\pi^{\gD}(s_0).
\end{align}
The offline RL literature \citep{fujimoto2019off,liu2020provably,chen2019information,kumar2019stabilizing,kumar2020conservative} aims to utilize approximate expected total reward $\gJ_{\gD}(\pi)$ with various conservatism regularizations (i.e., policy constraints, policy penalty, uncertainty penalty, etc.) \citep{levine2020offline} to find a good policy within a batch-constrained policy set $\Pi^\gD$.

Similar to offline finite-horizon single-task RL theory \citep{ren2021nearly,yin2020near,yin2021towards,yin2021near,shi2022pessimistic}, define  
\begin{align}\label{appendix-eq:dm}
d_\mu^M=\min\left\{\rho_{\mu}(s,a)\left|\rho_{\mu}(s,a)>0,\forall s\in\gS, a\in\gA\right.\right\},
\end{align}
which is the minimal visitation state-action distribution induced by the behavior policy $\mu$ in $M$ and is an intrinsic quantity required by theoretical offline learning \citep{yin2020near}. Note that, different from recent offline episodic RL theory \citep{ren2021nearly,yin2020near,yin2021towards,yin2021near,shi2022pessimistic}, we do not assume any weak or uniform coverage assumption in the dataset because we focus on the policy evaluation of all batch-constrained policies in $\Pi^\gD$ rather than the optimal policy in the MDP $M$.

\subsubsection{Standard meta-RL} \label{appendix:meta-rl}
The goal of meta-RL \citep{finn2017model,rakelly2019efficient} is to train a meta-policy that can quickly adapt to new tasks using $N$ adaptation episodes. The standard meta-RL setting deals with a distribution $p(\kappa)$ over MDPs, in which each task $\kappa_i$ sampled from $p(\kappa)$ presents a finite-horizon MDP \citep{zintgraf2019varibad,du2019good}. $\kappa_i$ is defined by a tuple $\left(\gS, \gA, \gR, H, P^{\kappa_i}, R^{\kappa_i}\right)$, including state space $\gS$, action space $\gA$, reward space $\gR$, planning horizon $H$, transition function $P^{\kappa_i}(s'|s,a)$, and reward function $R^{\kappa_i}(r|s,a)$. Denote $\gK$ is the space of task $\kappa_i$. In this paper, we assume dynamics function $P$ and reward function $R$ may vary across tasks and share a common structure.  The meta-RL algorithms repeatedly sample batches of tasks to train a meta-policy. In the meta-testing, agents aim to rapidly adapt a good policy for new tasks drawn from $p(\kappa)$.

\paragraph{POMDPs.} We can formalize the meta-RL with few-shot adaptation as a specific finite-horizon Partially Observable Markov Decision Process (POMDP), which is defined by a tuple $\widehat{M} = \left(\widehat{\gS},\gA,\gR,\mathcal{\varOmega},\widehat{H},\widehat{P}, \widehat{P}_0, O, \widehat{R}\right)$, where $\widehat{\gS}=\gS\times\gK$ is the state space, $\gA$ and $\gR$ are the same action and reward spaces as the finite-horizon MDP $M$ defined in Appendix \ref{appendix:epi-rl}, respectively, $\mathcal{\varOmega}=\gS$ is the observation space, $\widehat{H}=N\times H$ is the planning horizon which represents $N$ adaptation episodes for a single meta-RL MDP $\kappa_i$, as discussed in Zintgraf et al. \citep{zintgraf2019varibad}, $\widehat{P}:\widehat{\gS}\times\gA\rightarrow\Delta\left(\widehat{\gS}\right)$ is the transition function: $\forall \hat{s},\hat{s}'\in\widehat{\gS}, a\in\gA$, where denoting $\hat{s}=(s,\kappa_i)$ and $\hat{s}'=(s',\kappa_j)$,
\begin{align}
\widehat{P}\left(\hat{s}'|\hat{s},a\right)&=\left\{
\begin{aligned}
&P^{\kappa_i}(s'|s,a), &\text{ if }\kappa_i=\kappa_j, \\
&0, &\text{ otherwise},
\end{aligned}
\right.
\end{align}
$\widehat{P}_0:\Delta\left(\widehat{\gS}\right)$ is the initial state distribution: $\forall \hat{s}=(s,\kappa_i)\in\widehat{\gS}$,
\begin{align}
\widehat{P}_0\left(\hat{s}\right)&=\left\{
\begin{aligned}
&p(\kappa_i), &\text{ if }s=s_0, \\
&0, &\text{ otherwise},
\end{aligned}
\right.
\end{align}
$O:\widehat{\gS}\rightarrow\Delta\left(\mathcal{\varOmega}\right)$ is the observation probability distribution conditioned on a state: $\forall \hat{s}=(s,\kappa_i)\in\widehat{\gS},o\in\mathcal{\varOmega}$,
\begin{align}
O\left(o|\hat{s}\right)=\left\{
\begin{aligned}
&1, &\text{ if }o=s, \\
&0, &\text{ otherwise},
\end{aligned}
\right.
\end{align}
and $\widehat{R}:\widehat{\gS}\times\gA\rightarrow\Delta\left(\gR\right)$ is the reward distribution: $\forall \hat{s}=(s,\kappa_i)\in\widehat{\gS}, a\in\gA,r\in\gR$,
\begin{align}
\widehat{R}\left(r|\hat{s},a\right)=R^{\kappa_i}(r|s,a).
\end{align}
Denote context $c_t=\left(a_t,r_t,s_{t+1}\right)$ as an experience collected at timestep $t$, and $c_{:t}=\left\langle s_0,c_0,\dots,c_{t-1}\right\rangle$\footnote{For clarity, we denote $c_{:0}^{\kappa_i}=s_0$.}$\in\gC_t\equiv\mathcal{\varOmega}\times\left(\gA\times\gR\times\mathcal{\varOmega}\right)^t$ indicates all experiences collected during $t$ timesteps. Note that $t$ may be larger than $H$, and when it is the case, $c_{:t}$ represents experiences collected across episodes in the single meta-RL MDP $\kappa_i$. Denote the entire context space $\gC=\bigcup_{t=0}^{\widehat{H}-1}\gC_t$ and a meta-policy $\hat{\pi}:\gC\rightarrow\Delta\left(\gA\right)$ \citep{wang2016learning,duan2016rl} prescribes a distribution over actions for each context. The goal of meta-RL is to find a meta-policy on history contexts $\hat{\pi}$ that maximizes the expected return within $N$ adaptation episodes: 
\begin{align}
\gJ_{\widehat{M}}(\hat{\pi})&=\E_{\hat{s}_0\sim P_0,o_t\sim O(\cdot{|s_t}),a_t\sim\hat{\pi}(\cdot|c_{:t}),r_t\sim \widehat{R}(\cdot{|s_t,a_t}),\hat{s}_{t+1}\sim \widehat{P}\left(\cdot{|\hat{s}_t,a_t}\right)}\left[\sum_{t=0}^{\widehat{H}-1}r_t\right] \\
&=\E_{\kappa_i\sim p(\kappa)}\left[\sum_{j=0}^{N-1}\E_{a_t\sim\hat{\pi}\left(\cdot\left|c_{:(jH+t)}\right.\right),r_t\sim R^{\kappa_i}(\cdot{|s_t,a_t}),s_{t+1}\sim P^{\kappa_i}\left(\cdot{|s_t,a_t}\right)}\left[\sum_{t=0}^{H-1}r_t\right]\right].
\end{align}

\paragraph{BAMDPs.} A Markovian belief state allows a POMDP to be formulated as a Markov decision process where every belief is a state \citep{cassandra1994acting}. We can transform the finite-horizon POMDP $\widehat{M}$ to a finite-horizon belief MDP, which is called Bayes-Adaptive MDP (BAMDP) in the literature \citep{zintgraf2019varibad,ghavamzadeh2015bayesian,dorfman2021offline} and is defined by a tuple $M^{+}=\left(\gS^{+},\gA,\gR,H^+,P^{+},P_0^{+},R^{+}\right)$, $\gS^{+}=\gS\times\gB$ is the hyper-state space, where $\gB =\left\{p(\kappa|c)\left|c\in\gC\right.\right\}$ is the set of task beliefs over the meta-RL MDPs, the prior 
\begin{align}
b_0^{\kappa}=p\left(\kappa|c_{:0}\right)=p(\kappa)
\end{align}
is the meta-RL MDP distribution, and $\forall t\in\left[\widehat{H}-1\right]$, $\forall c_{:(t+1)}\in\gC$, denoting $b_t^{\kappa}=p\left(\kappa|c_{:t}\right)$ and 
\begin{align}
b_{t+1}^{\kappa}&=p\left(\kappa|c_{:(t+1)}\right)=p\left(p\left(\kappa|c_{:t}\right)|c_{:(t+1)}\right)=p\left(p\left(\kappa|c_{:t}\right)|s_t,c_t\right)= p\left(b_t^{\kappa}|s_t,c_t\right) \\
&\propto p\left(b_t^{\kappa},c_t|s_t\right)=p\left(c_t|s_t,b_t^{\kappa}\right)p\left(b_t^{\kappa}|s_t\right)=p\left(c_t|s_t,b_t^{\kappa}\right)b_t^{\kappa} \\
&=\E_{\kappa_i\sim b_t^{\kappa}}\left[R^{\kappa_i}(r_t|s_t,a_t)\cdot P^{\kappa_i}(s_{t+1}|s_t,a_t)\right]\cdot b_t^{\kappa}
\end{align}
is the posterior over the MDPs given the context $c_{:(t+1)}$, $\gA,\gR$ are the same action space and reward space as the finite-horizon POMDP $\widehat{M}$, respectively, $H^+=N\times H$ is the planning horizon across adaptation episodes, $P^{+}:\gS^{+}\times\gA\times\gR\rightarrow\Delta\left(\gS^{+}\right)$ is the transition function: $\forall s_t^+,s_{t+1}^+\in\gS^{+}, a_t\in\gA, r_t\in\gR$, where denoting $s_t^+=\left(s_t,b_t^{\kappa}\right)$ and $s_{t+1}^+=\left(s_{t+1},\tilde{b}_{t+1}^{\kappa}\right)$,
\begin{align}
P^{+}\left(s_{t+1}^+{}\left|s_t^+,a_t,r_t\right.\right)&= P^{+}\left(s_{t+1},\tilde{b}_{t+1}^{\kappa}\left|s_t,b_t^{\kappa},a_t,r_t\right.\right)\\
&=P^{+}\left(s_{t+1}\left|s_t,b_t^{\kappa},a_t\right.\right)P^{+}\left(\tilde{b}_{t+1}^{\kappa}\left|s_t,b_t^{\kappa},c_t\right.\right) \\
&=\E_{\kappa_i\sim b_t^{\kappa}}\left[P^{\kappa_i}(s_{t+1}|s_t,a_t)\right]\cdot\mathbbm{1}\left[\tilde{b}_{t+1}^{\kappa}=p(b_t^{\kappa}|s_t,c_t)\right],
\end{align}
$P_0^+:\Delta\left(\gS^+\right)$ is the initial hyper-state distribution, i.e., a deterministic initial hyper-state is
\begin{align}
s_0^+=(s_0,b_0^{\kappa})=(s_0,p(\kappa))\in\gS^+,
\end{align}
and $R^+:\gS^+\times\gA\rightarrow\Delta\left(\gR\right)$ is the reward distribution: $\forall s^+=(s,b^{\kappa})\in\gS^+, a\in\gA,r\in\gR$,
\begin{align}
R^+\left(r|s^+,a\right)=R^+\left(r|s,b^{\kappa},a\right)=\E_{\kappa_i\sim b^{\kappa}}\left[R^{\kappa_i}(r|s,a)\right].
\end{align}
In a BAMDP, the belief is over the transition and reward functions, which are constant for a given task. A meta-policy on BAMDP $\pi^+:\gS^+\rightarrow\Delta\left(\gA\right)$ prescribes a distribution over actions for each hyper-state. The agent’s objective is now to find a meta-policy on hyper-states $\pi^+$ that maximizes the expected return in the BAMDP,
\begin{align}
\gJ_{M^+}\left(\pi^+\right)&=\E_{a_t\sim\pi^+\left(\cdot\left|s_t^+\right.\right),r_t\sim R^+\left(\cdot\left|s_t^+,a_t\right.\right),s_{t+1}^+\sim P^+\left(\cdot\left|s_t^+,a_t\right.\right)}\left[\sum_{t=0}^{\widehat{H}-1}r_t\right] \\
&=\E_{\kappa_i\sim p(\kappa)}\left[\sum_{j=0}^{N-1}\E_{a_t\sim\pi^+\left(\cdot\left|s_{jH+t}^+\right.\right),r_t\sim R^{\kappa_i}(\cdot{|s_t,a_t}),s_{t+1}\sim P^{\kappa_i}\left(\cdot{|s_t,a_t}\right)}\left[\sum_{t=0}^{H-1}r_t\right]\right].
\end{align}
For any meta-policy on hyper-states $\pi^+$, denote the corresponding meta-policy on history contexts $\hat{f}_{\pi^+}:\gC\rightarrow\Delta\left(\gA\right)$, i.e., $\forall t\in\left[\widehat{H}-1\right],\forall c_{:t}\in\gC_t$, s.t., $\hat{f}_{\pi^+}\left(\cdot|c_{:t}\right)=\pi^+\left(\cdot|s_t^+\right)$, where $s_t^+=(s_t,b_t^{\kappa})=(s_t, p(\kappa|c_{:t}))$, and we have 
\begin{align}
\gJ_{\widehat{M}}\left(\hat{f}_{\pi^+}\right)&=\E_{\kappa_i\sim p(\kappa)}\left[\sum_{j=0}^{N-1}\E_{a_t\sim\hat{f}_{\pi^+}\left(\cdot\left|c_{:(jH+t)}\right.\right),r_t\sim R^{\kappa_i}(\cdot{|s_t,a_t}),s_{t+1}\sim P^{\kappa_i}\left(\cdot{|s_t,a_t}\right)}\left[\sum_{t=0}^{H-1}r_t\right]\right] \\
&=\gJ_{M^+}\left(\pi^+\right).
\end{align}
The belief MDP is such that an optimal policy for it, coupled with the correct state estimator, will give rise to optimal behavior for the original POMDP \citep{astrom1965optimal,smallwood1973optimal,kaelbling1998planning}, which indicates that 
\begin{align}
\gJ_{M^+}\left(\pi^{+,*}\right)=\gJ_{\widehat{M}}\left(\hat{f}_{\pi^{+,*}}\right)=\gJ_{\widehat{M}}\left(\hat{\pi}^*\right),
\end{align}
where $\pi^{+,*}$ and $\hat{\pi}^*$ are the optimal policies for BAMDP $M^+$ and POMDP $\widehat{M}$, respectively. Thus, the agent can find a policy $\pi^+$ to maximize the expected return in the BAMDP $M^+$ to address the POMDP $\widehat{M}$ by the transformed policy $\hat{f}_{\pi^+}$.

\subsubsection{Offline meta-RL} \label{appendix:off-meta-rl}
In the offline meta-RL setting, a meta-learner only has access to an offline multi-task dataset $\gD^+$ and is not allowed to interact with the environment during meta-training \citep{li2020focal}. Recent offline meta-RL methods \citep{dorfman2021offline} always utilize task-dependent behavior policies $p(\mu|\kappa)$, which represents the random variable of the behavior policy $\mu(a|s)$ conditioned on the random variable of the task $\kappa$. For brevity, we overload $[\mu]=p(\mu|\kappa)$. Similar to related work on offline RL \citep{shi2022pessimistic}, we assume that $\gD^+$ is collected through interacting multiple i.i.d. trajectories using task-dependent policies $[\mu]$ in $M^+$. Define the reward and transition distribution of the task-dependent data collection by $\sP_{M^+,[\mu]}$ \citep{jin2021pessimism}, i.e., for each step $t$ in a trajectory,
\begin{align}\label{appendix-eq:TaskDependentRTDD}
\sP_{M^+,[\mu]}\left(r_t,s_{t+1}\left|s^+_t,a_t\right.\right)\propto\E_{\kappa_i\sim p(\kappa),\mu_i\sim p\left(\mu|\kappa_i\right)}\left[\sP_{\kappa_i}\left(r_t,s_{t+1}\left|s_t,a_t\right.\right)\cdot p_{M^+}\left(s^+_t\left|\kappa_i,\mu_i\right.\right)\right],
\end{align}
where $\sP_{\kappa_i}$ is the reward and transition distribution of $\kappa_i$ defined in Eq. (\ref{appedix-eq:mdp-DC}), and $p_{M^+}\left(s^+_t\left|\kappa_i,\mu_i\right.\right)$ denotes the probability of $s^+_t$ when executing $\mu_i$ in a task $\kappa_i$, i.e., 
\begin{align}\label{appendix-eq:prob-s-with-mu}
	p_{M^+}\left(s^+_t\left|\kappa_i,\mu_i\right.\right)=\sum_{c_{:t}\in\gC_t} p_{\kappa_i}^{\mu_i}(c_{:t})\cdot \mathbbm{1}\left[b_t^{\kappa}=p(\kappa|c_{:t})\right],
\end{align}
where the state in $c_{:t}$ is $s_t$ and $p_{\kappa_i}^{\mu_i}(c_{:t})$ is defined in Eq. (\ref{appendix-eq:prob-tau}). Similar to offline single-task RL (see Appendix \ref{appendix:off-rl}), offline dataset $\gD^+$ can be narrow and a large amount of state-action pairs are not contained. These unseen state-action pairs will be erroneously estimated to have unrealistic values, called a phenomenon \textit{extrapolation error} \citep{fujimoto2019off}. To overcome extrapolation error in offline RL, related works \citep{fujimoto2019off} introduce batch-constrained RL, which restricts the action space in order to force policy selection of an agent with respect to a given dataset. Define a policy $\pi^+$ to be batch-constrained by $\gD^+$ if $\pi^+\left(a\left|s^+\right.\right)=0$ whenever a tuple $\left(s^+,a\right)$ is not contained in $\gD^+$. Offline RL \citep{liu2020provably,chen2019information} approximates policy evaluation for a batch-constrained policy $\pi^+$ by sampling from an offline dataset $\gD^+$, which is denoted by $\gJ_{\gD^+}\left(\pi^+\right)$ and called 
\textit{Approximate Dynamic Programming} \citep[ADP;][]{bertsekas1995neuro}. During meta-testing, RL agents perform online adaptation using a meta-policy $\pi^+$ in new tasks drawn from meta-RL task distribution. The reward and transition distribution of data collection with $\pi^+$ in $M^+$ during adaptation is defined by
\begin{align}
\sP_{M^+,\pi^+}\left(r_t,s_{t+1}\left|s^+_t,a_t\right.\right)=R^+\left(r_t\left|s_t^+,a_t\right.\right)\cdot P^+\left(s_{t+1}\left|s^+_t,a_t\right.\right),
\end{align}
where $P^+\left(s_{t+1}\left|s^+_t,a_t\right.\right)$ is the marginal transition functions of $M^+$, i.e., 
\begin{align}
	P^+\left(s_{t+1}\left|s^+_t,a_t\right.\right) = \E_{\kappa_i\sim b_t^{\kappa}}\left[P^{\kappa_i}\left(s_{t+1}\left|s_t,a_t\right.\right)\right].
\end{align}

\newpage
\subsection{Main Results in Section \ref{sec:DSRT}}\label{appendix:theory-main-results-part1}

\DSRT*

This definition utilizes the discrepancy between offline and online data collection to characterize the joint distribution gap of reward and transition. Note that in offline data collection $\sP_{M^+,[\mu]}$, the behavior policies $p(\mu|\kappa)$ can vary based on task identification, whereas the online data collection $\sP_{M^+,\pi^+}$ is the expected reward and transition distribution across the task distribution $p(\kappa)$. Formally, we use $p_{M^+}^{\pi^+}\left(s_t^+,a_t\right)>0$ to present that $\left(s_t^+, a_t\right)$ can be reached by $\pi^+$ in $M^+$, where 
\begin{align}
p_{M^+}^{\pi^+}\left(s_t^+,a_t\right)=p_{M^+}^{\pi^+}(s_t^+)\cdot\pi^+\left(a_t\left|s_t^+\right.\right),
\end{align}
and $p_{M^+}^{\pi^+}(s_t^+)$ is defined in Eq. (\ref{appendix-eq:prob-tau}). The data distribution induced by $\pi^+$ and $[\mu]$ mismatches when the reward and transition distribution of $\pi^+$ and $[\mu]$ differs in a tuple $\left(s_t^+,a_t\right)$, in which the agent can reach this tuple by executing $\pi^+$ in $M^+$, i.e., $p_{M^+}^{\pi^+}\left(s_t^+,a_t\right)>0$. Note that if $\pi^+$ can reach a tuple $\left(s_t^+,a_t\right)$, this tuple is guaranteed to be contained in the offline dataset, i.e., $p_{M^+}^{[\mu]}\left(s_t^+,a_t\right)>0$, because a batch-constrained policy $\pi^+$ will not select an action outside of the dataset collected by $[\mu]$, as introduced in Section \ref{sec:offline-meta-rl}.

\begin{figure}[H]
	\centering
	\input{figures/mdps/mdp-1}
	$p(\kappa_i) = \frac{1}{v}, p(\mu_i|\kappa_i)=1, \text{and }\mu_i(a_i|s_0)=1$
	\caption{A concrete example, which has $v$ meta-RL tasks, one state, $v$ actions, $v$ behavior policies, horizon $H=1$ in a episode, and $v$ adaptation episodes, where $v\ge 3$.}
	\label{appendix-fig:MultiTaskDDMExample}
\end{figure}

\DSRTExists*
\begin{proof}
	To serve a concrete example, we construct an offline meta-RL setting shown in Figure~\ref{appendix-fig:MultiTaskDDMExample}. In this example, there are $v$ meta-RL tasks $\gK=\left\{\kappa_1,\dots,\kappa_v\right\}$ and $v$ behavior policies $\left\{\mu_1,\dots,\mu_v\right\}$, where $v\ge 3$. Each task $\kappa_i$ has one state $\gS=\{s_0\}$, $2v$ actions $\gA=\{a_1,\dots,a_v\}$, and horizon in an episode $H=1$. For each task $\kappa_i$, RL agents can receive reward 1 performing action $a_i$. During adaptation, the RL agent can interact with the environment within $v$ episodes. The task distribution is uniform, the behavior policy of task $\kappa_i$ is $\mu_i$, and each behavior policy $\mu_i$ will perform $a_i$. When a batch-constrained meta-policy $\pi^+$ selects an action $\tilde{a}$ in the initial state $s_0^+$, we find that
	\begin{align}
	\sP_{M^+,[\mu]}\left(r=1\left|s_0^+,\tilde{a}\right.\right)=1\neq\sP_{M^+,\pi^+}\left(r=1\left|s_0^+,\tilde{a}\right.\right)=\frac{1}{v},
	\end{align}
	in which there is the probability of $\frac{1}{v}$ to sample a corresponding testing task, whose reward function of $\tilde{a}$ is 1, whereas the reward in the offline dataset collected by $[\mu]$ is all 1. 
\end{proof}

\subsection{Main Results in Section \ref{sec:dis-matters}}\label{appendix:theory-main-results-part2}

\subsubsection{Out-of-Distribution Analyses}
\DSRTOOD*
\begin{proof}
	\textbf{Part (i)}
	In the example shown in Figure \ref{appendix-fig:MultiTaskDDMExample}, an offline multi-task dataset $D^+$ is drawn from the task-dependent data collection $\sP_{M^+,[\mu]}$. Since the reward of $D^+$ is all 1, the task beliefs in $D^+$ have two types: (i) all task possible $s_0^+$ and (ii) determining task $i$ with receiving reward 1 in action $a_i$. For any batch-constrained meta-policy $\pi^+$ selecting an action $\tilde{a}_j$ on $s_0^+$ during meta-testing, there has probability $1-\frac{1}{v}$ to receive reward 0 and the task belief will become ``excluding task $j$'', which is not contained in $D^+$ with $v\ge 3$. For any $\delta \in(0,1]$, let $v>\frac{1}{\delta}$, with probability $1-\delta$, the agent will visit out-of-distribution hyper-states during adaptation. 
	
	\textbf{Part (ii)} In Figure \ref{appendix-fig:MultiTaskDDMExample}, an offline dataset $D^+$ only contains reward $1$, thus for each batch-constrained meta-policy $\pi^+$, the offline evaluation of $\pi^+$ in $D^+$ is $\gJ_{D^+}\left(\pi^+\right)=H^+=vH$. The optimal meta-policy $\pi^{+,*}$ in this example is to enumerate $a_1,\dots,a_v$ until the task identification is inferred from an action with a reward of 1. A meta-policy $\pi^{+,*}$ needs to explore in the testing environments and its online policy evaluation is
	\begin{align}
	\gJ_{M^+}\left(\pi^{+,*}\right)&=\sum_{k=0}^{N-1}\frac{N-k}{v-k}\prod_{j=0}^{k-1}\left(1-\frac{1}{v-j}\right) \\
	&=\sum_{k=0}^{N-1}\prod_{j=0}^{k-1}\frac{v-j-1}{v-j} \\
	&= \sum_{k=0}^{N-1}\frac{v-k}{v} = N-\frac{N(N-1)}{2v} \\
	&= \frac{v+1}{2} =\frac{H^++1}{2},
	\end{align}
	where $N=v$ is the number of adaptation episodes. Thus, the gap of policy evaluation of $\pi^+$ between offline meta-training and online adaptation is
	\begin{align}
	\left|\gJ_{M^+}\left(\pi^+\right)-\gJ_{D^+}\left(\pi^+\right)\right|\ge \gJ_{D^+}\left(\pi^+\right)-\gJ_{M^+}\left(\pi^{+,*}\right)=\frac{H^+-1}{2}.
	\end{align}
\end{proof}
Proposition \ref{prop:DSRTOOD} states that RL agents will go out of the distribution of the offline dataset $\gD^+$ due to shifts in the reward and transition distribution. Thus, the offline policy evaluation of $\pi^+$ in meta-training cannot provide a reference for the online meta-testing. 

\subsubsection{In-Distribution Analyses}

\begin{restatable}[Transformed BAMDPs]{definition}{TBAMDP}\label{def:TBAMDP}
	A transformed BAMDP is defined as a tuple $\overline{M}^{+}=\left(\overline{\gS}^{+},\gA,\gR,H^+,\overline{P}^{+},\overline{P}_0^{+},\overline{R}^{+}\right)$, where $\overline{\gS}^+=\gS\times\overline{\gB}$ is the hyper-state space, $\overline{\gB}$ is the space of overall beliefs over meta-RL MDPs with behavior policies, $\gA, \gR, H^+$ are the same action space, reward space, and planning horizon as the original BAMDP $M^{+}$, respectively, $\overline{P}_0^+$ is the initial hyper-state distribution presenting joint distribution of task and behavior policies $p(\kappa,\mu)=p(\kappa)p(\mu|\kappa)$, and $\overline{P}^+,\overline{R}^+$ are the transition and reward functions. The goal of meta-RL agents is to find a meta-policy $\bar{\pi}^+\left(a_t\left|\bar{s}^+_t\right.\right)$  that maximizes online policy evaluation $\gJ_{\overline{M}^+}\left(\bar{\pi}^+\right)$. Denote the reward and transition distribution of the task-dependent data collection in a transformed BAMDP $\overline{M}^+$ by $\sP_{\overline{M}^+,[\mu]}$, as defined in Eq. (\ref{appendix-eq:TaskDependentRTDD}). Denote the offline multi-task dataset collected by task-dependent data collection $\sP_{\overline{M}^+,[\mu]}$ by $\overline{\gD}^+$.
\end{restatable}

More specifically, a finite-horizon transformed BAMDP is defined by a tuple $\overline{M}^{+}=\left(\overline{\gS}^{+},\gA,\gR,H^+,\overline{P}^{+},\overline{P}_0^{+},\overline{R}^{+}\right)$, $\overline{\gS}^{+}=\gS\times\overline{\gB}$ is the hyper-state space, where $\overline{\gB} =\left\{p(\kappa,\mu|c)\left|c\in\gC\right.\right\}$ is the space of beliefs over meta-RL MDPs with behavior policies, the prior
\begin{align}\label{appendix-eq:transbelief}
b_0^{\kappa,\mu}=p\left(\kappa,\mu\left|c_{:0}\right.\right)=p(\kappa,\mu)
\end{align}
is the distribution of meta-RL MDPs with behavior policies, and $\forall t\in\left[\widehat{H}-1\right]$, $\forall c_{:(t+1)}\in\gC$, denoting $b_t^{\kappa,\mu}=p\left(\kappa,\mu|c_{:t}\right)$ and 
\begin{align}
b_{t+1}^{\kappa,\mu}&=p\left(\kappa,\mu\left|c_{:(t+1)}\right.\right)=p\left(p\left(\kappa,\mu|c_{:t}\right)\left|c_{:(t+1)}\right.\right)=p\left(p\left(\kappa,\mu|c_{:t}\right)|s_t,c_t\right)= p\left(\left.b_t^{\kappa,\mu}\right|s_t,c_t\right) \\
&\propto p\left(\left.b_t^{\kappa,\mu},c_t\right|s_t\right)=p\left(c_t\left|s_t,b_t^{\kappa,\mu}\right.\right)p\left(\left.b_t^{\kappa,\mu}\right|s_t\right)=p\left(c_t\left|s_t,b_t^{\kappa,\mu}\right.\right)b_t^{\kappa,\mu} \\
&=\E_{\left(\kappa_i,\mu_i\right)\sim b_t^{\kappa,\mu}}\left[\mu_i\left(a_t|s_t\right)\cdot R^{\kappa_i}(r_t|s_t,a_t)\cdot P^{\kappa_i}(s_{t+1}|s_t,a_t)\right]\cdot b_t^{\kappa,\mu}
\end{align}
is the posterior over the meta-RL MDPs with behavior policies given the context $c_{:(t+1)}$, $\gA$, $\gR$ and $\widehat{H}$ are the same action space, reward space, and planning horizon as the finite-horizon BAMDP $M^+$, respectively, $\overline{P}^{+}:\overline{\gS}^{+}\times\gA\times\gR\rightarrow\Delta\left(\overline{\gS}^{+}\right)$ is the transition function: $\forall \bar{s}_t^+,\bar{s}_{t+1}^+\in\overline{\gS}^{+}, a_t\in\gA, r_t\in\gR$, where denoting $\bar{s}_t^+=\left(s_t,b_t^{\kappa,\mu}\right)$ and $\bar{s}_{t+1}^+=\left(s_{t+1},\tilde{b}_{t+1}^{\kappa,\mu}\right)$,
\begin{align}
\overline{P}^{+}\left(\bar{s}_{t+1}^+\left|\bar{s}_t^+,a_t,r_t\right.\right)&= \overline{P}^{+}\left(\left.s_{t+1},\tilde{b}_{t+1}^{\kappa,\mu}\right|s_t,b_t^{\kappa,\mu},a_t,r_t\right)\\
&=\overline{P}^{+}\left(s_{t+1}\left|s_t,b_t^{\kappa,\mu},a_t\right.\right)P^{+}\left(\left.\tilde{b}_{t+1}^{\kappa,\mu}\right|s_t,b_t^{\kappa,\mu},c_t\right) \\
&=\E_{\left(\kappa_i,\mu_i\right)\sim b_t^{\kappa,\mu}}\left[P^{\kappa_i}(s_{t+1}|s_t,a_t)\right]\cdot\mathbbm{1}\left[\tilde{b}_{t+1}^{\kappa,\mu}=p(b_t^{\kappa,\mu}|s_t,c_t)\right],
\end{align}
$\overline{P}_0^+:\Delta\left(\overline{\gS}^{+}\right)$ is the initial hyper-state distribution, i.e., a deterministic initial hyper-state is
\begin{align}
\bar{s}_0^+=\left(s_0,b_0^{\kappa,\mu}\right)=(s_0,p(\kappa,\mu))\in\overline{\gS}^+,
\end{align}
and $\overline{R}^+:\overline{\gS}^+\times\gA\rightarrow\Delta\left(\gR\right)$ is the reward distribution: $\forall \bar{s}^+=\left(s,b^{\kappa,\mu}\right)\in\overline{\gS}^+, a\in\gA,r\in\gR$,
\begin{align}
\overline{R}^+\left(r|\bar{s}^+,a\right)=\overline{R}^+\left(r\left|s,b^{\kappa,\mu},a\right.\right)=\E_{\left(\kappa_i,\mu_i\right)\sim b^{\kappa,\mu}}\left[R^{\kappa_i}(r|s,a)\right].
\end{align}
In a transformed BAMDP $\overline{M}^{+}$, the overall belief is about the task-dependent behavior policies, transition function, and reward function, which are constant for a given task. A meta-policy on $\overline{M}^{+}$ is $\bar{\pi}^+:\overline{\gS}^+\rightarrow\Delta\left(\gA\right)$ prescribes a distribution over actions for each hyper-state. With feasible Bayesian belief updating, the objective of RL agents is now to find a meta-policy on hyper-states $\bar{\pi}^+$ that maximizes the expected return in the transformed BAMDP,
\begin{align}
\gJ_{\overline{M}^+}\left(\bar{\pi}^+\right)&=\E_{a_t\sim\bar{\pi}^+\left(\cdot\left|\bar{s}_t^+\right.\right),r_t\sim \overline{R}^+\left(\cdot\left|\bar{s}_t^+,a_t\right.\right),s_{t+1}^+\sim \overline{P}^+\left(\cdot\left|\bar{s}_t^+,a_t\right.\right)}\left[\sum_{t=0}^{\widehat{H}-1}r_t\right] \\
&=\E_{(\kappa_i,\mu_i)\sim p(\kappa,\mu)}\left[\sum_{j=0}^{N-1}\E_{a_t\sim\bar{\pi}^+\left(\cdot\left|\bar{s}_{jH+t}^+\right.\right),r_t\sim R^{\kappa_i}(\cdot{|s_t,a_t}),s_{t+1}\sim P^{\kappa_i}\left(\cdot{|s_t,a_t}\right)}\left[\sum_{t=0}^{H-1}r_t\right]\right] \\
&=\E_{\kappa_i\sim p(\kappa)}\left[\sum_{j=0}^{N-1}\E_{a_t\sim\bar{\pi}^+\left(\cdot\left|\bar{s}_{jH+t}^+\right.\right),r_t\sim R^{\kappa_i}(\cdot{|s_t,a_t}),s_{t+1}\sim P^{\kappa_i}\left(\cdot{|s_t,a_t}\right)}\left[\sum_{t=0}^{H-1}r_t\right]\right].
\end{align}
For any meta-policy on hyper-states $\bar{\pi}^+$, denote the corresponding meta-policy on history contexts $\hat{f}_{\bar{\pi}^+}:\gC\rightarrow\Delta\left(\gA\right)$, i.e., $\forall t\in\left[\widehat{H}-1\right],\forall c_{:t}\in\gC_t$, s.t., $\hat{f}_{\bar{\pi}^+}\left(\cdot|c_{:t}\right)=\bar{\pi}^+\left(\cdot\left|\bar{s}_t^+\right.\right)$, where $\bar{s}_t^+=\left(s_t,b_t^{\kappa,\mu}\right)=\left(s_t, p\left(\kappa,\mu|c_{:t}\right)\right)$, and we have 
\begin{align}
\gJ_{\widehat{M}}\left(\hat{f}_{\bar{\pi}^+}\right)&=\E_{\kappa_i\sim p(\kappa)}\left[\sum_{j=0}^{N-1}\E_{a_t\sim\hat{f}_{\bar{\pi}^+}\left(\cdot\left|c_{:(jH+t)}\right.\right),r_t\sim R^{\kappa_i}(\cdot{|s_t,a_t}),s_{t+1}\sim P^{\kappa_i}\left(\cdot{|s_t,a_t}\right)}\left[\sum_{t=0}^{H-1}r_t\right]\right] \\
&=\gJ_{\overline{M}^+}\left(\bar{\pi}^+\right).
\end{align}

\begin{restatable}{lemma}{OffLearnGua}\label{lem:off-learn-gua}
	In an MDP $M$, for each behavior policy $\mu$ and batch-constrained policy $\pi$, collect a dataset $\gD$ and the gap between approximate offline policy evaluation $\gJ_{\gD}(\pi)$ and accurate policy evaluation $\gJ_{M}(\pi)$ will asymptotically approach to 0, as the offline dataset $\gD$ grows.
\end{restatable}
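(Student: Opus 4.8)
The plan is to bound $|\gJ_{\gD}(\pi)-\gJ_M(\pi)|=|V_\pi^{\gD}(s_0)-V_\pi(s_0)|$ by reducing to two standard ingredients: (a) a \textbf{coverage} argument showing that, as $K=|\gD|/H\to\infty$, every state-action pair that any batch-constrained policy can possibly use is observed in $\gD$ a number of times growing linearly in $K$; and (b) a \textbf{concentration} argument showing that on the corresponding high-probability event the empirical reward means $\hat R(s,a)$ and empirical transition kernels $\hat P(\cdot|s,a)$ underlying $V_\pi^{\gD}$ are uniformly close to the true $\E_{r\sim R(\cdot|s,a)}[r]$ and $P(\cdot|s,a)$. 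These per-pair errors are then propagated through the finite-horizon Bellman recursion by a simulation-lemma telescoping. The key structural point is that the resulting bound depends only on the learned model, not on $\pi$, so it holds simultaneously for all $\pi\in\Pi^{\gD}$.

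For (a), I would use that the layers $\gS_0,\dots,\gS_{H-1}$ are disjoint, so within one length-$H$ trajectory a fixed pair $(s,a)$ is visited at most once; hence its count $N(s,a)$ is a sum of $K$ i.i.d.\ Bernoulli variables of mean $H\rho_\mu(s,a)\ge H d_\mu^M$ whenever $\rho_\mu(s,a)>0$ (recall $d_\mu^M=\min\{\rho_\mu(s,a):\rho_\mu(s,a)>0\}>0$ is finite and positive since $\gS,\gA$ are finite). A multiplicative Chernoff bound and a union bound over the at most $|\gS||\gA|$ relevant pairs give, for $K\gtrsim \tfrac{1}{H d_\mu^M}\log\tfrac{|\gS||\gA|}{\delta}$, that with probability at least $1-\delta/2$ every such pair has $N(s,a)\ge \tfrac12 K H d_\mu^M$. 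On this event $\gD$ contains all pairs with $\rho_\mu(s,a)>0$, so $\Pi^{\gD}$ coincides with $\{\pi:\pi(a|s)=0 \text{ whenever } \rho_\mu(s,a)=0\}$; a short induction using $\rho_\mu(s')\ge \rho_\mu(s)\mu(a|s)P(s'|s,a)$ shows any such $\pi$ visits only pairs with $\rho_\mu>0$, so $V_\pi$, $V_\pi^{\gD}$ and $\gJ_{\gD}(\pi)$ are all well defined there.

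For (b), conditioning on the visit counts, the logged rewards and next-states for a fixed $(s,a)$ are i.i.d.\ draws from $R(\cdot|s,a)$ and $P(\cdot|s,a)$; Hoeffding's inequality (rewards lie in $[0,1]$) together with an $L^1$ deviation bound for empirical distributions over $\gS$, each followed by a union bound over $(s,a)$, give on a further event of probability $1-\delta/2$ the uniform bounds $\max_{s,a}|\hat R(s,a)-\E_{r\sim R(\cdot|s,a)}[r]|\le\varepsilon_R$ and $\max_{s,a}\|\hat P(\cdot|s,a)-P(\cdot|s,a)\|_1\le\varepsilon_P$, with $\varepsilon_R,\varepsilon_P=\widetilde O\!\big(\sqrt{|\gS|\log(|\gS||\gA|/\delta)/(KHd_\mu^M)}\big)\to 0$. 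Finally, letting $\Delta_h=\max_{s\in\gS_h}|V_\pi(s)-V_\pi^{\gD}(s)|$ and using the Bellman equations for $V_\pi$ and $V_\pi^{\gD}$, I would add and subtract $\sum_{s'}\hat P(s'|s,a)V_\pi(s')$ to obtain $\Delta_h\le \varepsilon_R+H\varepsilon_P+\Delta_{h+1}$ (using $|V_\pi|\le H$ and that $\hat P(\cdot|s,a)$ is a probability vector) and $\Delta_{H-1}\le\varepsilon_R$; unrolling over the $H$ layers gives $|\gJ_{\gD}(\pi)-\gJ_M(\pi)|\le\Delta_0\le H\varepsilon_R+H^2\varepsilon_P$, which tends to $0$ as $\gD$ grows, uniformly over $\pi\in\Pi^{\gD}$.

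The Chernoff, Hoeffding, and telescoping steps are routine; the main obstacle is the bookkeeping that makes the statement both meaningful and uniform — namely verifying that on the coverage event $\Pi^{\gD}$ stabilizes and every batch-constrained policy stays inside the region where the empirical model concentrates, so that a single model-concentration event handles all $\pi$ at once. (If one wanted a finite-sample rate rather than the asymptotic statement, this is precisely where the dependence on $d_\mu^M$, $|\gS|$ and $H$ would enter.)
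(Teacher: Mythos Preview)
Your proposal is correct and follows essentially the same route as the paper: concentration of the empirical reward/transition model on the $\mu$-visited pairs, followed by a finite-horizon simulation-lemma telescoping to propagate the per-pair errors to $|V_\pi^{\gD}(s_0)-V_\pi(s_0)|$. The paper packages the coverage and concentration into a single binomial-Hoeffding bound and states the result with the empirical quantity $d_\mu^{M_{\gD}}$ rather than $d_\mu^M$, whereas you separate coverage (Chernoff on visit counts) from concentration and add the explicit stabilization argument for $\Pi^{\gD}$; these are cosmetic differences, and your constants $H\varepsilon_R+H^2\varepsilon_P$ versus the paper's $H\epsilon_r+\tfrac{H(H-1)}{2}\epsilon_P$ are equivalent for the asymptotic claim.
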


From a given dataset $\gD$, an abstract MDP $M_{\gD}$ can be estimated \citep{fujimoto2019off,yin2021towards,BatchRLLemma}. According to concentration bounds, the estimated transition and reward function will asymptotically approach $M$ \citep{yin2021towards} during the support of $\gD$. Then, using the simulation lemma \citep{FiniteSimulationLemma,BatchRLLemma}, the gap between $\gJ_{\gD}(\pi)$ and $\gJ_{M}(\pi)$ will asymptotically approach to 0, as the offline dataset $\gD$ grows. Formal proofs are deferred in Appendix \ref{appendix:omitted-proof-Lemma-1}.

\DSRTID*
\begin{proof}
	\textbf{Part (i)} During online adaptation, RL agents construct a hyper-state $\bar{s}^+_t=\left(s_t, \bar{b}_t\right)$ from the context history and perform a meta-policy $\bar{\pi}^+\left(a_t\left|\bar{s}^+_t\right.\right)$. The new belief $\bar{b}_t$ accounts for the uncertainty of task MDPs and task-dependent behavior policies. In contrast with Proposition \ref{prop:DSRTOOD}(i), for feasible Bayesian belief updating, transformed BAMDPs do not allow the agent to visit out-of-distribution hyper-states. Otherwise, the context history will conflict with the belief about behavior policies, i.e., RL agents cannot update their beliefs $\bar{b}_t$ when they have observed an event that they believe to have probability zero. 

	\textbf{Part (ii)} We assume feasible Bayesian belief updating in this proof. At first, $\forall s_t^+, a_t$, s.t. $p_{\overline{M}^{+}}^{\bar{\pi}^+}\left(\bar{s}_t^+,a_t\right)>0$, we aim to prove
	\begin{align}
		\sP_{\overline{M}^{+},[\mu]}\left(r_t,s_{t+1}\left|\bar{s}_t^+,a_t\right.\right)=\sP_{\overline{M}^{+},\bar{\pi}^+}\left(r_t,s_{t+1}\left|\bar{s}_t^+,a_t\right.\right).
	\end{align}
	Since $\bar{\pi}^+$ is batch-constrained policy by $[\mu]$, if $p_{\overline{M}^{+}}^{\bar{\pi}^+}\left(\bar{s}_t^+,a_t\right)>0$, we have $p_{\overline{M}^{+}}^{[\mu]}\left(\bar{s}_t^+,a_t\right)>0$. Then,
	\begin{align}
		\sP_{\overline{M}^{+},\bar{\pi}^+}\left(r_t,s_{t+1}\left|\bar{s}_t^+,a_t\right.\right)&= \overline{R}^+\left(r_t\left|\bar{s}_t^+,a_t\right.\right)\cdot \overline{P}^+\left(s_{t+1}\left|\bar{s}^+_t,a_t\right.\right) \\
		&=\E_{\kappa_i\sim b_t^{\kappa,\mu}}\left[\sP_{\kappa_i}\left(r_t,s_{t+1}\left|s_t,a_t\right.\right)\right] \\
		&=\E_{(\kappa_i,\mu_i)\sim b_t^{\kappa,\mu}}\left[\sP_{\kappa_i}\left(r_t,s_{t+1}\left|s_t,a_t\right.\right)\right]
	\end{align}
	where $b_t^{\kappa,\mu}$ is defined in Eq. (\ref{appendix-eq:transbelief}) and 
	\begin{align}
		p\left(\kappa_i,\mu_i\left|b_t^{\kappa,\mu}\right.\right)&=\E_{\kappa_i\sim p(\kappa),\mu_i\sim p\left(\mu|\kappa_i\right)}\left[\E_{c_{:t}\sim p_{\overline{M}^+}\left(c_{:t}|\kappa_i,\mu_i\right)}\left[ \mathbbm{1}\left[b_t^{\kappa,\mu}=p(\kappa,\mu|c_{:t})\right]\right]\right],
	\end{align}
	where $p_{\overline{M}^+}\left(c_{:t}|\kappa_i,\mu_i\right)$ is defined in Eq. (\ref{appendix-eq:prob-tau}). According to Eq. (\ref{appendix-eq:TaskDependentRTDD}) and (\ref{appendix-eq:prob-s-with-mu}),
	\begin{align}
	&~\sP_{\overline{M}^+,[\mu]}\left(r_t,s_{t+1}\left|\bar{s}^+_t,a_t\right.\right) \\
	\propto&~\E_{\kappa_i\sim p(\kappa),\mu_i\sim p\left(\mu|\kappa_i\right)}\left[\sP_{\kappa_i}\left(r_t,s_{t+1}\left|s_t,a_t\right.\right)\cdot p_{\overline{M}^+}\left(\bar{s}^+_t\left|\kappa_i,\mu_i\right.\right)\right] \\
	=&~\E_{\kappa_i\sim p(\kappa),\mu_i\sim p\left(\mu|\kappa_i\right)}\left[\E_{c_{:t}\sim p_{\overline{M}^+}\left(c_{:t}|\kappa_i,\mu_i\right)}\left[\sP_{\kappa_i}\left(r_t,s_{t+1}\left|s_t,a_t\right.\right)\cdot \mathbbm{1}\left[b_t^{\kappa,\mu}=p(\kappa,\mu|c_{:t})\right]\right]\right] \\
	=&~\E_{(\kappa_i,\mu_i)\sim b_t^{\kappa,\mu}}\left[\sP_{\kappa_i}\left(r_t,s_{t+1}\left|s_t,a_t\right.\right)\right] \\
	=&~\sP_{\overline{M}^{+},\bar{\pi}^+}\left(r_t,s_{t+1}\left|\bar{s}_t^+,a_t\right.\right).
	\end{align}
	Thus, the data distribution induced by $\bar{\pi}^+$ and $[\mu]$ matches. 
	
	\textbf{Part (iii)} Directly use Lemma \ref{lem:off-learn-gua} in a transformed BAMDP $\overline{M}^+$, in which $\overline{M}^+$ is a belief MDP, a type of MDP. Therefore, the policy evaluation of $\bar{\pi}^+$ in offline meta-training and online adaptation will be asymptotically consistent, as the offline dataset grows.
\end{proof}

To achieve in-distribution online adaptation, transformed BAMDPs incorporate additional information about offline data collection into the beliefs of BAMDPs. We prove that transformed BAMDPs require RL agents to filter out out-of-distribution episodes to support feasible belief updating of behavior policies. In this way, the distribution of reward and transition between offline and online data collection coincide, which can provide the guarantee of consistent policy evaluation between $\gJ_{\overline{\gD}^+}\left(\bar{\pi}^+\right)$ and $\gJ_{\overline{M}^{+}}\left(\bar{\pi}^+\right)$. Theorem \ref{thm:DSRTID} shows that we can meta-train policies with offline policy evaluation and utilize in-distribution online adaptation to guarantee the final performance in meta-testing.

\subsection{Main Results in Section \ref{sec:generate-in-dis}}\label{appendix:theory-main-results-part3}

\begin{restatable}[Sub-Datasets Collected by Single Task Data Collection]{definition}{SubDatasetPerTask}\label{def:sub-dataset-per-task} 
	In a transformed BAMDP $\overline{M}^{+}$, an offline multi-task dataset $\overline{\gD}^+$ is drawn from the task-dependent data collection $\sP_{\overline{M}^+,[\mu]}$. A sub-dataset collected by a behavior policy $\mu_i$ in a task $\kappa_i$ is defined by $\gD_{\kappa_i,\mu_i}$. Note an offline multi-task dataset $\overline{\gD}^+$ is the union of sub-datasets $\gD_{\kappa_i,\mu_i}$, i.e., 
	\begin{align}
	\overline{\gD}^+ = \bigcup_{\kappa_i,\mu_i}\gD_{\kappa_i,\mu_i}.
	\end{align}
\end{restatable}

For each sub-dataset $\gD_{\kappa_i,\mu_i}$, we can define a batch-constrained policy set in a single-task $(\kappa_i,\mu_i)$ as $\Pi^{\gD_{\kappa_i,\mu_i}}$ (see the definition in Eq. (\ref{appedix-eq:constrained-pi})).

\begin{restatable}[Meta-Policy with Thompson  Sampling]{definition}{MetaPiThompsonS}\label{def:meta-pi-Thompson-sampling} 
	For each transformed BAMDP $\overline{M}^{+}$, a meta-policy set with Thompson sampling on $\overline{M}^{+}$ is defined by $\bar{\pi}^{+,T}:\gS\times\overline{\gB}\times\gK\times\Pi_{[\mu]}\rightarrow\Delta\left(\gA\right)$, where $\overline{\gB}$ is the space of beliefs over meta-RL MDPs with behavior policies, $\gK$ is the space of task $\kappa$, and $\Pi_{[\mu]}$ is the space of task-dependent behavior policies. In each episode, $\bar{\pi}^{+,T}$ samples a task hypothesis $(\kappa_i,\mu_i)$ from the current belief $b_{t'}^{\kappa,\mu}$, where $t'$ is the starting step in this episode. During this episode, $\bar{\pi}^{+,T}\left(\cdot\left|s_t,b_{t'}^{\kappa,\mu},\kappa_i,\mu_i\right.\right)$ prescribes a distribution over actions for each state $s_t$, belief $b_{t'}^{\kappa,\mu}$, and task hypothesis $(\kappa_i,\mu_i)$. Beliefs $b_{t'}^{\kappa,\mu}$ and task hypotheses $(\kappa_i,\mu_i)$ will periodically update after each episode.
\end{restatable}

In the deep-learning-based implementation, a context-based meta-RL algorithm, PEARL \citep{rakelly2019efficient}, utilizes a meta-policy with Thompson sampling \citep{strens2000bayesian} to iteratively update task belief by interacting with the environment and improve the meta-policy based on the ``task hypothesis'' sampled from the current beliefs. We can adopt such adaptation protocol to design practical offline meta-RL algorithms for transformed BAMDPs.

\begin{restatable}[Batch-Constrained Meta-Policy Set with Thompson  Sampling]{definition}{BatchConMetaPiThompsonS}\label{def:batch-constrained-meta-pi-Thompson-sampling} 
	For each transformed BAMDP $\overline{M}^{+}$ with an offline multi-task dataset $\overline{\gD}^+$, a batch-constrained meta-policy set with Thompson sampling is defined by 
	\begin{align}\label{appedix-eq:x}
	\Pi^{\overline{\gD}^+,T}=\left\{\bar{\pi}^{+,T}\left|\bar{\pi}^{+,T}\left(a_t\left|s_t,b_{t'}^{\kappa,\mu},\kappa_i,\mu_i\right.\right)=0 \text{ whenever } (s_t,a_t)\not\in\gD_{\kappa_i,\mu_i}, \forall b_{t'}^{\kappa,\mu}\right.\right\},
	\end{align}
	where denoting $(s_t,a_t)\in\gD_{\kappa_i,\mu_i}$ if there exists a trajectory containing $(s_t,a_t)$ in the dataset $\gD_{\kappa_i,\mu_i}$. 
\end{restatable}

The batch-constrained meta-policy set with Thompson sampling $\Pi^{\overline{\gD}^+,T}$ consists of the meta-policies that for any state $s_t$ observed in the hypothesis dataset $\gD_{\kappa_i,\mu_i}$, the agent will not select an action outside of the dataset. Note that in each episode with a task hypothesis $(\kappa_i,\mu_i)$, a batch-constrained meta-policy with Thompson sampling $\bar{\pi}^{+,T}$ is batch-constrained within a sub-dataset $\gD_{\kappa_i,\mu_i}$, i.e., $\forall b_{t'}^{\kappa,\mu}$, we have  $\bar{\pi}^{+,T}\left(\cdot\left|s_t,b_{t'}^{\kappa,\mu},\kappa_i,\mu_i\right.\right)\in\Pi^{\gD_{\kappa_i,\mu_i}}$.

\begin{restatable}[Probability that a Policy Leaves the Dataset]{definition}{ProbPiLeaveDataset}\label{def:prob-pi-leave-dataset} 
	In an MDP $M$ and an arbitrary offline dataset $\gD$, for each policy $\pi:\gS\rightarrow\Delta\left(\gA\right)$, the probability that executing $\pi$ in $M$ leaves the dataset $\gD$ for an episode is defined by 
	\begin{align}
	p_{out}^{M,\gD}(\pi) &= \sum_{\tau_H} p^{M}_{\pi}\left(\tau_H\right)\mathbbm{1}\left[\tau_H \text{ leaves } \gD\right] \\
	&= \sum_{\tau_H} p^{M}_{\pi}\left(\tau_H\right)\mathbbm{1}\left[\exists t\in[H] \text{ s.t. } s_t \not\in \gD \text{ or } (s_t, a_t, r_t) \not\in \gD\right],
	\end{align}
	where $p^{M}_{\pi}\left(\tau_H\right)$ is the probability of executing $\pi$ in $M$ to generate an $H$-horizon trajectory $\tau_H$ (see the definition in Eq. (\ref{appendix-eq:prob-tau})), denoting $s_t \in\gD$ if there exists a trajectory containing $s_t$ in the dataset $\gD$, and similarly for $(s_t, a_t, r_t) \in \gD$.
\end{restatable}

When we aim to confine the agent in the in-distribution states with high probability as the offline dataset $\gD$ grows, it is equivalent to binding the probability that executing a policy $\pi$ in $M$ leaves the dataset $\gD$ for an episode, i.e., $p_{out}^{M,\gD}(\pi)$.

\ThompsonGenID*
\begin{proof}
	Denote the current belief by $b_{t'}^{\kappa,\mu}$ and the task hypothesis by $(\kappa_i,\mu_i)$. Thus, for each batch-constrained meta-policy with Thompson sampling $\bar{\pi}^{+,T}\in\Pi^{\overline{\gD}^+,T}$ with $\left(b_{t'}^{\kappa,\mu}, \kappa_i,\mu_i\right)$, similar to Definition \ref{def:prob-pi-leave-dataset}, define the probability that executing $\bar{\pi}^{+,T}\left(\cdot\left|s_t,b_{t'}^{\kappa,\mu},\kappa_i,\mu_i\right.\right)$ leaves the dataset $\overline{\gD}^+$ in an adaptation episode of a meta-testing task $\kappa_{test}\sim p(\kappa)$: 
	\begin{align}
	p_{out}^{\kappa_{test},\overline{\gD}^+}\left(\bar{\pi}^{+,T},b_{t'}^{\kappa,\mu},\kappa_i,\mu_i\right) &= \sum_{\bar{\tau}_{H}^+} p^{\kappa_{test}}_{\bar{\pi}^{+,T}}\left(\left.\bar{\tau}_{H}^+\right|b_{t'}^{\kappa,\mu},\kappa_i,\mu_i\right)\mathbbm{1}\left[\bar{\tau}_{H}^+ \text{ leaves } \overline{\gD}^+\right],
	\end{align}
	where $p^{\kappa_{test}}_{\bar{\pi}^{+,T}}\left(\left.\bar{\tau}_{H}^+\right|b_{t'}^{\kappa,\mu},\kappa_i,\mu_i\right)$ is the probability of executing $\bar{\pi}^{+,T}\left(\cdot\left|s_t,b_{t'}^{\kappa,\mu},\kappa_i,\mu_i\right.\right)$ in $\kappa_{test}$ to generate an $H$-horizon trajectory $\bar{\tau}_{H}^+$ in an adaptation episode, i.e.,
	\begin{align}
	&~p^{\kappa_{test}}_{\bar{\pi}^{+,T}}\left(\left.\bar{\tau}_{H}^+\right|b_{t'}^{\kappa,\mu},\kappa_i,\mu_i\right) \\
	=&\left(\prod_{t=t'}^{t'+H-1}\bar{\pi}^{+,T}\left(a_t\left|s_t,b_{t'}^{\kappa,\mu},\kappa_i,\mu_i\right.\right)\cdot R^{\kappa_{test}}\left(r_t|s_t,a_t\right)\right) \prod_{t=t'}^{t'+H-2}P^{\kappa_{test}}\left(s_{t+1}\left|s_t,a_t,r_t\right.\right) \\
	=&~ p^{\kappa_{test}}_{\bar{\pi}^{+,T}}\left(\left.\tau_{H}\right|b_{t'}^{\kappa,\mu},\kappa_i,\mu_i\right),
	\end{align} 
	where we can transform $\bar{\tau}_{H}^+$ to $\tau_{H}$ with the same probability since the belief $b_{t'}^{\kappa,\mu}$ and task hypothsis $(\kappa_i,\mu_i)$ will periodically update after each episode. Therefore, 
	\begin{align}
	p_{out}^{\kappa_{test},\overline{\gD}^+}\left(\bar{\pi}^{+,T},b_{t'}^{\kappa,\mu},\kappa_i,\mu_i\right) &= \sum_{\bar{\tau}_{H}^+} p^{\kappa_{test}}_{\bar{\pi}^{+,T}}\left(\left.\tau_{H}\right|b_{t'}^{\kappa,\mu},\kappa_i,\mu_i\right)\mathbbm{1}\left[\bar{\tau}_{H}^+ \text{ leaves } \overline{\gD}^+\right] \\
	&\le \sum_{\tau_H} p^{\kappa_{test}}_{\bar{\pi}^{+,T}}\left(\left.\tau_{H}\right|b_{t'}^{\kappa,\mu},\kappa_i,\mu_i\right)\mathbbm{1}\left[\tau_H \text{ leaves } \gD_{\kappa_{i^*},\mu_{i^*}}\right] \\
	&= p_{out}^{\kappa_{test},\gD_{\kappa_{i^*},\mu_{i^*}}}\left(\left.\bar{\pi}^{+,T}\right|b_{t'}^{\kappa,\mu},\kappa_i,\mu_i\right),
	\end{align}
	where $p_{out}^{\kappa_{test},\gD_{\kappa_{i^*},\mu_{i^*}}}\left(\left.\bar{\pi}^{+,T}\right|b_{t'}^{\kappa,\mu},\kappa_i,\mu_i\right)$ is the probability that executing $\bar{\pi}^{+,T}\left(\cdot\left|s_t,b_{t'}^{\kappa,\mu},\kappa_i,\mu_i\right.\right)$ in $\kappa_{test}$ leaves the sub-dataset $\gD_{\kappa_{i^*},\mu_{i^*}}$ for an episode (see Definition \ref{def:prob-pi-leave-dataset}), $\gD_{\kappa_{i^*},\mu_{i^*}}$ is a sub-dataset collected in $\overline{\gD}^+$ (see Definition \ref{def:sub-dataset-per-task}) and $\kappa_{i^*}$ is the closest offline meta-training task to $\kappa_{test}$, i.e., 
	\begin{align}
	\kappa_{i^*} &= \mathop{\arg\min}_{\kappa_i\in\gK_{train}} \left\|\kappa_i-\kappa_{test}\right\|_{\infty} \\
	&=\mathop{\arg\min}_{\kappa_i\in\gK_{train}}\max\left(\left\|P^{\kappa_i}(s,a,s') - P^{\kappa_{test}}(s,a,s')\right\|_{\infty}, \left\|R^{\kappa_i}(s,a,r) - R^{\kappa_{test}}(s,a,r)\right\|_{\infty}\right), 
	\end{align}
	in which denoting the i.i.d. offline meta-training tasks sampled from $p(\kappa)$ in $\overline{\gD}^+$ by $\gK_{train}$. From Lemma \ref{lem:thm-3-formal}, as the offline dataset $\overline{\gD}^+$ grows, $\gD_{\kappa_{i^*},\mu_{i^*}}$ and $\gK_{train}$ grow monotonically, for any batch-constrained policy $\pi$ in $\gD_{\kappa_{i^*},\mu_{i^*}}$, i.e., $\forall \pi\in\Pi^{\gD_{\kappa_{i^*},\mu_{i^*}}}$, when executing $\pi$ in an episode of $\kappa_{test}\sim p(\kappa)$, the probability leaving the dataset $\overline{\gD}^+$ is $p_{out}^{\kappa_{test},\gD_{\kappa_{i^*},\mu_{i^*}}}(\pi)$, which asymptotically approaches zero. 
	
	For the first adaptation episode in a meta-testing task $\kappa_{test}\sim p(\kappa)$ with the prior belief $p(\kappa,\mu)=p(\kappa)p(\mu|\kappa)$, there exists a task hypothesis $\left(\kappa_{i^*},\mu_{i^*}\right)$ in the prior $p(\kappa,\mu)$, then due to $\bar{\pi}^{+,T}\left(\cdot\left|s_t,p(\kappa,\mu),\kappa_{i^*},\mu_{i^*}\right.\right)\in\Pi^{\gD_{\kappa_{i^*},\mu_{i^*}}}$ from Definition \ref{def:batch-constrained-meta-pi-Thompson-sampling} and 
	\begin{align}
	p_{out}^{\kappa_{test},\overline{\gD}^+}\left(\bar{\pi}^{+,T},p(\kappa,\mu),\kappa_{i^*},\mu_{i^*}\right) \le p_{out}^{\kappa_{test},\gD_{\kappa_{i^*},\mu_{i^*}}}\left(\left.\bar{\pi}^{+,T}\right|p(\kappa,\mu),\kappa_{i^*},\mu_{i^*}\right),
	\end{align}
	as the offline dataset $\overline{\gD}^+$ grows, executing $\bar{\pi}^{+,T}$ with $\left(p(\kappa,\mu), \kappa_{i^*},\mu_{i^*}\right)$ for the first episode in $\kappa_{test}$ will confine the agent in in-distribution hyper-states with high probability. In the subsequent adaptation episodes with current belief $b_{t'}^{\kappa,\mu}$ in $\kappa_{test}$, the task hypothesis $\left(\kappa_{i^*},\mu_{i^*}\right)$ is also in the belief $b_{t'}^{\kappa,\mu}$ by induction.
	
	Therefore, for each adaptation episode with current belief $b_{t'}^{\kappa,\mu}$ in $\kappa_{test}$, there exists a task hypothesis $(\kappa_i,\mu_i)$ from $b_{t'}^{\kappa,\mu}$, e.g., $\left(\kappa_{i^*},\mu_{i^*}\right)$, executing $\bar{\pi}^{+,T}$ with $\left(b_{t'}^{\kappa,\mu}, \kappa_i, \mu_i\right)$ in $\kappa_{test}$ will confine the agent in in-distribution hyper-states with high probability, as the offline dataset $\overline{\gD}^+$ grows.
\end{proof}

Theorem \ref{thm:ThompsonGenID} indicates that for each adaptation episode, we can sample task hypotheses from the current task belief and execute $\bar{\pi}^{+,T}$ to interact with the environment until finding an in-distribution episode. For example in Figure~\ref{fig:MultiTaskDDMExample}, after offline meta-training, a meta-policy with Thompson sampling $\bar{\pi}^{+,T}$ will perform $a_i$ with a task hypothesis of $\kappa_i$ and expect to receive a reward 1. During online meta-testing, $\kappa_{test}$ is drawn from $p(\kappa)$ and the agent needs to infer the task identification. To achieve in-distribution online adaptation, $\bar{\pi}^{+,T}$ will try various actions according to diverse task hypotheses until sampling an in-distribution episode with a reward 1. Updating the task belief with the in-distribution episode, RL agents can infer and solve this task.

In contrast, when updating task belief using an out-of-distribution episode with a reward 0, the posterior task belief will be out of the offline dataset $\overline{\gD}^+$. Note that offline training paradigm can not well-optimize $\bar{\pi}^{+,T}$ on out-of-distribution states \citep{fujimoto2019off} and policy $\bar{\pi}^{+,T}$ will fail in this case. Moreover, Thompson sampling is very popular in context-based deep meta-RL \citep{rakelly2019efficient} and we will generalize these theoretical implications.

Note that Theorem \ref{thm:ThompsonGenID} considers arbitrary task distribution $p(\kappa)$, since the distance between the closest meta-training task $\kappa_{i^*}$ and $\kappa_{test}$ will asymptotically approach zero with high probability, as the i.i.d. offline meta-training tasks $\gK_{train}$ sampled from $p(\kappa)$ in $\overline{\gD}^+$ grows. 

\subsection{Omitted Assumptions and Propsitions in Section \ref{sec:IDAQ}}\label{appendix:omitted-section-IDAQ}

To analyze return-based uncertainty quantification, we first present a mild assumption: 
\begin{assumption} \label{assumption:return-uncentain}
	In a transformed BAMDP $\overline{M}^{+}$, after effective offline meta-training of meta-policy with Thompson sampling $\bar{\pi}^{+,T}$, with high probability, there exists (accurate or nearly accurate) task hypotheses from the current belief, the online policy evaluation has higher returns on in-distribution episodes than out-of-distribution episodes with incorrect hypotheses.
\end{assumption}

This assumption matches the bias of offline RL \citep{fujimoto2019off} that offline meta-training can not well-optimize meta-policies on out-of-distribution states. 

\begin{restatable}{proposition}{ReturnGenID}\label{prop:ReturnGenID}
	In a transformed BAMDP $\overline{M}^{+}$, after effective offline meta-training of meta-policy with Thompson sampling $\bar{\pi}^{+,T}$, with Assumption \ref{assumption:return-uncentain}, return-based uncertainty quantification $\sQ_{RE}$ can support $\bar{\pi}^{+,T}$ generate in-distribution online adaptation episodes.
\end{restatable}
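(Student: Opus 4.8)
The plan is to derive Proposition~\ref{prop:ReturnGenID} by showing that, under Assumption~\ref{assumption:return-uncentain}, the ordering of episodes induced by $\sQ_{RE}$ separates the in-distribution episodes from the out-of-distribution ones, and then verifying that IDAQ's two-stage threshold mechanism of Eq.~(\ref{eq:in-dis-context}) retains exactly the former. The structural backbone is Theorem~\ref{thm:ThompsonGenID}: in every adaptation episode the current belief contains a (near-accurate) task hypothesis $(\kappa_{i^*},\mu_{i^*})$ — the one associated with the closest meta-training task — for which $\bar{\pi}^{+,T}$ is batch-constrained within the sub-dataset $\gD_{\kappa_{i^*},\mu_{i^*}}$ (Definition~\ref{def:batch-constrained-meta-pi-Thompson-sampling}) and the probability of leaving $\overline{\gD}^+$ (Definition~\ref{def:prob-pi-leave-dataset}) tends to zero as $\overline{\gD}^+$ grows. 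So episodes generated with this hypothesis are in-distribution with high probability, and by Assumption~\ref{assumption:return-uncentain} they have strictly higher return — hence strictly smaller $\sQ_{RE}$ — than any out-of-distribution episode generated under an incorrect hypothesis.

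First I would analyze the \textbf{Reference Stage}. Thompson sampling draws hypotheses from the prior $p(\kappa,\mu)$, which puts positive mass on $(\kappa_{i^*},\mu_{i^*})$; a Chernoff bound then shows that, for $n_r$ large enough, at least some prescribed fraction of the $n_r$ reference episodes are generated under $(\kappa_{i^*},\mu_{i^*})$ and are therefore in-distribution with high probability. Since $\sQ_{RE}$ averages (negated) returns over the $n_e$ episodes sharing a hypothesis and the per-episode return lies in $[0,H]$, a Hoeffding/law-of-large-numbers estimate controls the gap between $\sQ_{RE}$ and the true (negated) online policy evaluation, so the separation given by Assumption~\ref{assumption:return-uncentain} survives estimation. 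Choosing the conservatism level $k$ no larger than this high-probability in-distribution fraction, the bottom $k\%$-quantile threshold $\delta$ falls strictly below the $\sQ_{RE}$ values of every out-of-distribution reference episode; hence the context $\bm{c}_{in}$ produced by Eq.~(\ref{eq:in-dis-context}) consists only of in-distribution episodes, and the induced belief $q(z\,|\,\bm{c}_{in})$ stays in the support of $\overline{\gD}^+$.

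Next I would handle the \textbf{Iterative Updating Stage} by induction on the iteration index, mirroring the inductive step at the end of the proof of Theorem~\ref{thm:ThompsonGenID}. If $\bm{c}_{in}$ so far contains only in-distribution episodes, then $q(z\,|\,\bm{c}_{in})$ still assigns positive probability to $(\kappa_{i^*},\mu_{i^*})$, so each iteration draws this hypothesis with probability bounded below, and conditioned on that event $\bar{\pi}^{+,T}$ produces an in-distribution episode $\tau_j$ with high probability; such a $\tau_j$ satisfies $\sQ_{RE}(\tau_j)\le\delta$ by Assumption~\ref{assumption:return-uncentain} and is admitted, whereas an out-of-distribution episode from an incorrect hypothesis has $\sQ_{RE}(\tau_j)>\delta$ and is filtered out by Eq.~(\ref{eq:in-dis-context}). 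Iterating the draw, an in-distribution episode is generated and appended to $\bm{c}_{in}$ with probability approaching one, so the invariant ``$\bm{c}_{in}$ is a set of in-distribution episodes'' is preserved. This is precisely the claim that $\sQ_{RE}$ supports $\bar{\pi}^{+,T}$ in generating in-distribution online adaptation episodes; the ``as the offline dataset grows / with high probability'' quantifiers are assembled from the corresponding quantifiers in Theorem~\ref{thm:ThompsonGenID} and Lemma~\ref{lem:off-learn-gua}.

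I expect the main obstacle to be the transition from the purely existential statements of Theorem~\ref{thm:ThompsonGenID} and Assumption~\ref{assumption:return-uncentain} to the empirically estimated threshold $\delta$: one must guarantee that the finite reference batch contains enough in-distribution episodes (a concentration argument in $n_r$ and $n_e$, together with an appropriate choice of $k$) so that the empirical bottom $k\%$-quantile lands in the gap between the two return populations rather than inside the out-of-distribution cluster. Once that gap and the placement of $\delta$ are secured, the remaining pieces — belief updating staying in-support, and Thompson sampling eventually hitting the good hypothesis — follow from the induction and concentration arguments already used for Theorem~\ref{thm:ThompsonGenID}.
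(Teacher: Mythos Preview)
Your proposal is correct in its core logic and uses the same two ingredients as the paper --- Theorem~\ref{thm:ThompsonGenID} to guarantee that some hypothesis in the current belief yields in-distribution episodes, and Assumption~\ref{assumption:return-uncentain} to guarantee those episodes dominate the out-of-distribution ones in return, hence are singled out by $\sQ_{RE}$.

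Where you diverge from the paper is in scope and level of detail. The paper treats Proposition~\ref{prop:ReturnGenID} as an almost immediate corollary: its entire proof is three sentences that say (i) by Theorem~\ref{thm:ThompsonGenID} the accurate/near-accurate hypothesis produces in-distribution episodes with high probability, (ii) by Assumption~\ref{assumption:return-uncentain} those episodes have higher online return than out-of-distribution episodes from incorrect hypotheses, and (iii) therefore one can keep sampling hypotheses and use $\sQ_{RE}$ to recognize when an in-distribution episode has been found. There is no analysis of the empirical threshold $\delta$, the bottom-$k\%$ quantile, the reference batch size $n_r$, concentration in $n_e$, or the inductive invariant on $\bm{c}_{in}$ --- the paper's statement and proof operate at the level of ``such a mechanism exists,'' not ``IDAQ's specific two-stage procedure provably works with these sample-complexity guarantees.'' Your route essentially supplies a finite-sample justification for the IDAQ algorithm itself, which is a stronger (and more useful) claim than what the paper actually proves; the price is the additional concentration machinery and the need to quantify how $n_r$, $n_e$, and $k$ must scale, which the paper simply sidesteps by keeping both the assumption and the proposition informal.
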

\begin{proof}
	From Theorem \ref{thm:ThompsonGenID}, meta-policy with Thompson sampling $\bar{\pi}^{+,T}$ and accurate or nearly accurate task hypotheses can generate in-distribution online episodes with high probability. Combining with Assumption \ref{assumption:return-uncentain}, for each adaptation episode, there exists (accurate or nearly accurate) task hypotheses whose online policy evaluation has higher returns. Thus, from the current belief, we can sample task hypotheses and execute $\bar{\pi}^{+,T}$ to interact with the environment until finding an in-distribution episode using $\sQ_{RE}$.
\end{proof}

\newpage
\subsection{Omitted Proof of Lemma \ref{lem:off-learn-gua}}\label{appendix:omitted-proof-Lemma-1}

\begin{restatable}[Dataset Induced Finite-Horizon MDPs]{definition}{DEFDataMDP}\label{def:data-mdp} 
	In a finite-horizon MDP $M$ with an offline dataset $\gD$, a dataset induced finite-horizon MDP is defined by $M^{\gD}=\left(\gS, \gA, \gR, H, P^{M^{\gD}}, R^{M^{\gD}}\right)$, with the same state space, action space, reward space, and horizon as $M$. The transition function is defined as follows: $\forall s,s'\in\gS,a\in\gA$,
	\begin{align}
	P^{M^{\gD}}(s'|s, a) =\left\{
	\begin{aligned}
	&\frac{N(s,a,s')}{N(s,a)}, &\text{ if }N(s,a)>0, \\
	&0, &\text{ otherwise},
	\end{aligned}
	\right.
	\end{align}
	where $N(s,a,s')$ and $N(s,a)$ are the number of times the tuples $(s,a,s')$ and $(s,a)$ are observed in $\gD$, respectively. The reward function is defined by $\forall s\in\gS,a\in\gA,r\in\gR$,
	\begin{align}
	R^{M^{\gD}}(r|s, a) =\left\{
	\begin{aligned}
	&\frac{N(s,a,r)}{N(s,a)}, &\text{ if }N(s,a)>0, \\
	&0, &\text{ otherwise},
	\end{aligned}
	\right.
	\end{align}
	where $N(s,a,r)$ is the number of times the tuple $(s,a,r)$ are observed in $\gD$. The offline policy evaluation in $\gD$ is equal to the policy evaluation in $M_{\gD}$, i.e., for any batch-constrained policy $\pi$, 
	\begin{align}
	\gJ_{\gD}(\pi)=\gJ_{M_{\gD}}(\pi).
	\end{align}
\end{restatable}

Note that dataset induced finite-horizon MDPs $M^{\gD}$ are not defined on supports outside of dataset $\gD$. For simplicity, We set all undefined numbers to 0 in the transition and reward function.

\begin{lemma}[Simulation Lemma for Offline Finite-Horizon MDPs]\label{lem:simulation-lemma-offline}
	In an MDP $M$ with an offline dataset $\gD$, for any batch-constrained policy $\pi\in\Pi^{\gD}$, if
	\begin{align}
		&\max_{s\in\gS,a\in\gA\text{ with } \rho_{\pi}^{M_{\gD}}(s,a)>0}\left\|P^{M_{\gD}}(\cdot|s,a)-P^{M}(\cdot|s,a)\right\|_1\le \epsilon_P, \\
		&\max_{s\in\gS,a\in\gA\text{ with } \rho_{\pi}^{M_{\gD}}(s,a)>0}\left|r^{M_{\gD}}(s,a)-r^{M}(s,a)\right|\le \epsilon_r, \\
		&\max_{s\in\gS,a\in\gA}\max\left(r^{M_{\gD}}(s,a), r^{M}(s,a)\right)\le r_{max}, 
	\end{align}
	where $r^M(s,a)=\E_{\tilde{r}\sim R^M(s,a)}[\tilde{r}]$ and $r^{M_{\gD}}(s,a)=\E_{\tilde{r}\sim R^{M_{\gD}}(s,a)}[\tilde{r}]$, we have
	\begin{align}
	\left|\gJ_{M_{\gD}}(\pi)-\gJ_{M}(\pi)\right|\le H\epsilon_r+ \frac{H(H-1)r_{max}}{2}\epsilon_P.
	\end{align}
\end{lemma}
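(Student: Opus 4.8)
The plan is to prove the lemma by the classical backward-induction (telescoping) argument on the per-timestep value functions of $\pi$ in $M_{\gD}$ and in $M$, the only non-standard ingredient being careful bookkeeping to ensure that every state--action pair the argument touches is reachable by $\pi$ in $M_{\gD}$, so that the hypotheses on $\epsilon_P$ and $\epsilon_r$ are legitimately applicable.

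First I would fix notation: for $h\in[H]$ and $s\in\gS_h$ write $V_\pi^{M_{\gD}}(s)$ and $V_\pi^M(s)$ for the value functions of $\pi$ in $M_{\gD}$ and $M$ respectively (Eq.~\ref{appendix-eq:value-function}), and set $\Delta_h = \max\{\,|V_\pi^{M_{\gD}}(s) - V_\pi^M(s)| : s\in\gS_h,\ \rho_\pi^{M_{\gD}}(s) > 0\,\}$. I would then record the structural fact that since $\pi\in\Pi^{\gD}$ is batch-constrained, whenever $\rho_\pi^{M_{\gD}}(s)>0$ and $\pi(a|s)>0$ we have $\rho_\pi^{M_{\gD}}(s,a) = \rho_\pi^{M_{\gD}}(s)\pi(a|s) > 0$, so both discrepancy bounds apply at $(s,a)$; and if in addition $P^{M_{\gD}}(s'|s,a)>0$ then $\rho_\pi^{M_{\gD}}(s')>0$ by the recursion defining the visitation distribution (Eq.~\ref{appendix-eq:visit-dis}), so the induction never escapes the reachable set. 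Also $\rho_\pi^{M_{\gD}}(s_0) = 1/H > 0$, so $s_0$ is in scope.

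For the base case $h=H-1$, for any in-scope $s$ we get $|V_\pi^{M_{\gD}}(s)-V_\pi^M(s)| = \big|\sum_a\pi(a|s)\,(r^{M_{\gD}}(s,a) - r^M(s,a))\big| \le \epsilon_r$, so $\Delta_{H-1}\le\epsilon_r$. For $h<H-1$ I would expand $V_\pi^{M_{\gD}}(s)-V_\pi^M(s)$ via Eq.~\ref{appendix-eq:value-function}, add and subtract the cross term $\sum_{s'} P^{M_{\gD}}(s'|s,a) V_\pi^M(s')$, and split into three pieces inside $\sum_a \pi(a|s)[\cdots]$: a reward-discrepancy term bounded by $\epsilon_r$; a next-step term $\sum_{s'} P^{M_{\gD}}(s'|s,a)\,(V_\pi^{M_{\gD}}(s') - V_\pi^M(s'))$ bounded by $\Delta_{h+1}$ (every such $s'$ is in scope); and a transition-discrepancy term $\sum_{s'}(P^{M_{\gD}}(s'|s,a) - P^M(s'|s,a)) V_\pi^M(s')$, bounded using $|\langle p-q, v\rangle|\le\|p-q\|_1\max_{s'}|v(s')|$ by $\epsilon_P\cdot\max_{s'\in\gS_{h+1}}|V_\pi^M(s')| \le \epsilon_P (H-h-1) r_{max}$, where the value bound uses $r^M(s,a)\le r_{max}$ everywhere and the fact that $V_\pi^M$ on $\gS_{h+1}$ sums at most $H-h-1$ rewards. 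Averaging over $a$ and maximizing over in-scope $s$ yields the recursion $\Delta_h \le \epsilon_r + \Delta_{h+1} + (H-h-1) r_{max}\epsilon_P$.

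Finally I would unroll the recursion from $h=0$ down to $h=H-1$, giving $\Delta_0 \le \sum_{h=0}^{H-1}\epsilon_r + r_{max}\epsilon_P\sum_{h=0}^{H-2}(H-h-1) = H\epsilon_r + \tfrac{H(H-1)}{2}r_{max}\epsilon_P$; since $\gJ_{M_{\gD}}(\pi) = V_\pi^{M_{\gD}}(s_0)$ and $\gJ_M(\pi) = V_\pi^M(s_0)$ with $s_0$ in scope, we conclude $|\gJ_{M_{\gD}}(\pi) - \gJ_M(\pi)| \le \Delta_0$, the claimed bound. The one place that needs genuine care --- the main (modest) obstacle --- is the support bookkeeping: ensuring the $\epsilon_P,\epsilon_r$ hypotheses are invoked only at $(s,a)$ with $\rho_\pi^{M_{\gD}}(s,a)>0$ and that the $\Delta_{h+1}$-recursion references only reachable states. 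This is exactly where the batch-constrained property of $\pi$ is used, and is what distinguishes this offline variant from the textbook simulation lemma, which assumes $M$ is uniformly well-explored.
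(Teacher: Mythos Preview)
Your proposal is correct and follows essentially the same backward-induction argument as the paper: decompose the value difference into a reward-discrepancy term, a transition-discrepancy term (bounded via $\|P^{M_{\gD}}-P^M\|_1\cdot\|V_\pi^M\|_\infty$), and a propagated next-step term, then induct from $h=H-1$ down to $h=0$. The only cosmetic difference is that you carry a recursion $\Delta_h\le\epsilon_r+\Delta_{h+1}+(H-h-1)r_{max}\epsilon_P$ and unroll it at the end, whereas the paper states the closed-form bound $(H-h)\epsilon_r+\tfrac{(H-h)(H-h-1)r_{max}}{2}\epsilon_P$ and verifies it directly by induction; your explicit support bookkeeping (using the batch-constrained property to ensure every $(s,a)$ touched has $\rho_\pi^{M_{\gD}}(s,a)>0$) is in fact more carefully spelled out than in the paper.
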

\begin{proof}
	Similar to the famous Simulation Lemma in finite-horizon MDPs \citep{FiniteSimulationLemma}, the proof is as follows. Recall value function $\forall h\in[H-1]$, $\forall s\in\gS_h$ (see Eq. (\ref{appendix-eq:value-function})),
	\begin{align}
	V_\pi^{M}(s)=\sum_{a\in\gA}\pi(a|s)\left(r^{M}(s,a)+\sum_{s'\in\bar{\gS}_{h+1}}P^{M}(s'|s,a)V_\pi^{M}(s')\right),
	\end{align}
	$\gJ_{M}(\pi)=V_\pi^{M}\left(s_0\right)$, and $\forall h\in[H]$,
	\begin{align}
	\max_{s\in\bar{\gS}_h}\left|V_\pi^{\bar{\kappa}}(s)\right|\le (H-h) r_{max}.
	\end{align}
	We will prove $\forall h\in[H], \forall s\in\gS_h$ with $\rho_{\pi}^{M_{\gD}}(s)>0$,
	\begin{align}
	\left|V_\pi^{M_{\gD}}(s)-V_\pi^{M}(s)\right| \le (H-h)\epsilon_r+\frac{(H-h)(H-h-1)r_{max}}{2}\epsilon_P
	\end{align}
	by induction. When $h=H-1$, we have $\forall s\in\gS_h$ with $\rho_{\pi}^{M_{\gD}}(s)>0$,
	\begin{align}
	\left|V_\pi^{M_{\gD}}(s)-V_\pi^{M}(s)\right|=\left|\sum_{a\in\gA}\pi(a|s)r^{M_{\gD}}(s,a)-\sum_{a\in\gA}\pi(a|s)r^{M}(s,a)\right| \le \epsilon_r
	\end{align}
	holds. And $\forall h\in[H-1]$, $\forall s\in\gS_h$ with $\rho_{\pi}^{M_{\gD}}(s)>0$,
	\begin{align}
	&~\left|V_\pi^{M_{\gD}}(s)-V_\pi^{M}(s)\right| \\
	=&~\Bigg|\sum_{a\in\gA}\pi(a|s)\left(r^{M_{\gD}}(s,a)+\sum_{s'\in\gS_{h+1}}P^{M_{\gD}}(s'|s,a)V_\pi^{M_{\gD}}(s')\right)-\\
	&~\quad\quad~~\sum_{a\in\gA}\pi(a|s)\left(r^{M}(s,a)+\sum_{s'\in\gS_{h+1}}P^{M}(s'|s,a)V_\pi^{M}(s')\right)\Bigg| \\
	\le&~\sum_{a\in\gA}\pi(a|s)\left|r^{M_{\gD}}(s,a)-r^{M}(s,a)\right| + \\
	&~\sum_{a\in\gA}\pi(a|s)\sum_{s'\in\gS_{h+1}}\left|P^{M_{\gD}}(s'|s,a)V_\pi^{M_{\gD}}(s')-P^{M}(s'|s,a)V_\pi^{M}(s')\right| \\
	\le&~\epsilon_r+ \sum_{a\in\gA}\pi(a|s)\sum_{s'\in\gS_{h+1}}\left|P^{M_{\gD}}(s'|s,a)-P^{M}(s'|s,a)\right|V_\pi^{M}(s')+ \\
	&~\sum_{a\in\gA}\pi(a|s)\sum_{s'\in\gS_{h+1}}P^{M_{\gD}}(s'|s,a)\left|V_\pi^{M_{\gD}}(s')-V_\pi^{M}(s')\right| \\
	\le&~\epsilon_r+(H-(h+1))r_{max}\epsilon_P + \\
	&~\left((H-(h+1))\epsilon_r+\frac{(H-(h+1))(H-(h+1)-1)r_{max}}{2}\epsilon_P\right) \\
	=&~(H-h)\epsilon_r+\frac{(H-h)(H-h-1)r_{max}}{2}\epsilon_P.
	\end{align}
	Thus, 
	\begin{align}
	\left|\gJ_{M_{\gD}}(\pi)-\gJ_{M}(\pi)\right|&=\left|V_\pi^{M_{\gD}}\left(s_0\right)-V_\pi^{M}\left(s_0\right)\right| \le H\epsilon_r+ \frac{H(H-1)r_{max}}{2}\epsilon_P.
	\end{align}
\end{proof}

\begin{lemma}\label{lem:lemma1-formal}
	In an MDP $M$ with an offline dataset $\gD$ collected by a behavior policy $\mu$, for any batch-constrained policy $\pi\in\Pi^{\gD}$, $\forall \delta\in (0,1]$, with probability $1-\delta$,
	\begin{align}
	\left|\gJ_{M_{\gD}}(\pi)-\gJ_{M}(\pi)\right|\le H^2\left|\gS\right|\sqrt{\frac{\log\left(\frac{1}{\delta}\right)+\log\left(2\left|\gS\right|^2\left|\gA\right|\right)}{Kd_\mu^{M_{\gD}}}},
	\end{align}
	where $K$ is the number of trajectories in the dataset $\gD$ and $d_\mu^{M_{\gD}}$ is the minimal visitation state-action distribution induced by the behavior policy $\mu$ in $M_{\gD}$ (see Eq. (\ref{appendix-eq:dm})).
\end{lemma}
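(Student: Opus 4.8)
The plan is to reduce, via the offline Simulation Lemma (Lemma~\ref{lem:simulation-lemma-offline}), to controlling the per-$(s,a)$ deviations of the empirical model $M_{\gD}$ from $M$, and then to bound those deviations by Hoeffding/Chernoff-type concentration on the event that every state--action pair a batch-constrained policy can reach in $M_{\gD}$ is sufficiently covered by the dataset. Since rewards satisfy $r_t\in[0,1]$ almost surely we may take $r_{\max}=1$ in Lemma~\ref{lem:simulation-lemma-offline}, so it suffices to exhibit, on a $(1-\delta)$-probability event, bounds $\epsilon_r$ and $\epsilon_P$ on $\max_{(s,a):\rho_\pi^{M_{\gD}}(s,a)>0}|r^{M_{\gD}}(s,a)-r^{M}(s,a)|$ and on $\max_{(s,a):\rho_\pi^{M_{\gD}}(s,a)>0}\|P^{M_{\gD}}(\cdot|s,a)-P^{M}(\cdot|s,a)\|_1$, and then to plug them into $|\gJ_{M_{\gD}}(\pi)-\gJ_{M}(\pi)|\le H\epsilon_r+\tfrac{H(H-1)}{2}\epsilon_P$.

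First I would lay out the sampling structure. Because the timestep partition $\{\gS_h\}$ is disjoint, for $s\in\gS_h$ the count $N(s,a)$ is the number of the $K$ logged episodes whose $h$-th state--action pair is $(s,a)$, hence $N(s,a)\sim\mathrm{Binomial}\bigl(K,\,H\rho_\mu^{M}(s,a)\bigr)$, and conditioned on $N(s,a)=n$ the $n$ observed next states (resp.\ rewards) are i.i.d.\ draws from $P^{M}(\cdot|s,a)$ (resp.\ $R^{M}(\cdot|s,a)$) by the Markov property and independence of episodes. The good event is then the intersection of: (a) a multiplicative-Chernoff bound, union-bounded over the at most $|\gS||\gA|$ pairs, guaranteeing $N(s,a)\gtrsim K\,d_\mu^{M_{\gD}}$ for every pair covered by $\mu$ in $M_{\gD}$; and (b) Hoeffding's inequality applied conditionally on $N(s,a)$ to each coordinate $P^{M_{\gD}}(s'|s,a)$ and to $r^{M_{\gD}}(s,a)$, union-bounded over the at most $2|\gS|^2|\gA|$ relevant $(s,a,s')$ / $(s,a)$ indices, giving $|P^{M_{\gD}}(s'|s,a)-P^{M}(s'|s,a)|\le\sqrt{\log(2|\gS|^2|\gA|/\delta)/(2N(s,a))}$ and the analogous reward bound simultaneously. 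Summing the coordinatewise transition bound over $s'\in\gS$ and inserting the count lower bound yields $\epsilon_P\lesssim|\gS|\sqrt{(\log(1/\delta)+\log(2|\gS|^2|\gA|))/(K d_\mu^{M_{\gD}})}$ and $\epsilon_r\lesssim\sqrt{(\log(1/\delta)+\log(2|\gS|^2|\gA|))/(K d_\mu^{M_{\gD}})}$.

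Next I would verify that these bounds actually cover every $(s,a)$ that contributes, i.e.\ that $\mathrm{supp}(\rho_\pi^{M_{\gD}})\subseteq\mathrm{supp}(\rho_\mu^{M_{\gD}})$ whenever $\pi\in\Pi^{\gD}$. This is an induction on $h$: if $s\in\gS_h$ lies in both supports and $\pi(a|s)>0$, then $(s,a)\in\gD$ by the batch-constraint, so some logged episode took action $a$ at $s$ at step $h$; since that episode was generated by $\mu$ we get $\mu(a|s)>0$, hence $\rho_\mu^{M_{\gD}}(s,a)=\rho_\mu^{M_{\gD}}(s)\mu(a|s)>0$, and any $s'$ with $P^{M_{\gD}}(s'|s,a)>0$ then has $\rho_\mu^{M_{\gD}}(s')>0$. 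Thus every $(s,a)$ with $\rho_\pi^{M_{\gD}}(s,a)>0$ satisfies $\rho_\mu^{M_{\gD}}(s,a)\ge d_\mu^{M_{\gD}}$ and so enjoys the good-event bounds; substituting $\epsilon_r,\epsilon_P$ and $r_{\max}=1$ into Lemma~\ref{lem:simulation-lemma-offline} and absorbing numerical constants produces the claimed $H^2|\gS|\sqrt{(\log(1/\delta)+\log(2|\gS|^2|\gA|))/(K d_\mu^{M_{\gD}})}$.

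The step I expect to be the main obstacle is (a): the uniform lower bound $N(s,a)\gtrsim K d_\mu^{M_{\gD}}$ is delicate because $M_{\gD}$ --- and hence both $d_\mu^{M_{\gD}}$ and the set of pairs covered in $M_{\gD}$ --- is itself a dataset-dependent random object, so $d_\mu^{M_{\gD}}$ cannot be treated as a fixed parameter inside a Chernoff bound. The resolution I would pursue is to show that on the good event the empirical occupancy $N(s,a)/(KH)$ and the model occupancy $\rho_\mu^{M_{\gD}}(s,a)$ agree up to a multiplicative constant for every pair covered in $M_{\gD}$: their discrepancy is governed by exactly the per-state action-frequency fluctuations already being controlled, and the regime in which that control is too weak to close this gap is precisely the regime in which the claimed bound exceeds the trivial bound $|\gJ_{M_{\gD}}(\pi)-\gJ_{M}(\pi)|\le H$ and is therefore vacuous. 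With that reconciliation in hand, the remaining ingredients --- Hoeffding for $\hat P$ and $\hat r$, the triangle inequality to pass from coordinatewise estimates to the $\ell_1$-distance, the union bound, and substitution into the Simulation Lemma --- are routine.
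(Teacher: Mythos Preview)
Your proposal follows essentially the same route as the paper: reduce via the offline Simulation Lemma to per-$(s,a)$ model-error bounds, control those by Hoeffding-type concentration together with a union bound over at most $2|\gS|^2|\gA|$ indices, convert coordinatewise transition estimates to an $\ell_1$ bound via the factor $|\gS|$, and substitute into $H\epsilon_r+\tfrac{H(H-1)}{2}\epsilon_P\le H^2|\gS|\sqrt{\cdots}$.

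The one noteworthy technical difference is in the concentration step. You propose a two-stage argument (multiplicative Chernoff on $N(s,a)$, then Hoeffding conditioned on the count). The paper instead marginalizes the conditional Hoeffding bound directly over the binomial distribution of $N(s,a)$: writing $p$ for the per-episode probability of visiting $(s,a)$, it computes $\E\bigl[2\exp(-2N(s,a)\epsilon^2)\bigr]=2\bigl(1-p+p\,e^{-2\epsilon^2}\bigr)^{K}\le 2\bigl(1-p\epsilon^2\bigr)^{K}\le 2\exp(-Kp\epsilon^2)$, and then substitutes $p\ge d_\mu^{M_\gD}$. This single-shot marginalization is slightly cleaner and avoids budgeting $\delta$ between the count event and the deviation event, but both approaches yield the same final bound up to constants. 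Your support-inclusion induction (that $\rho_\pi^{M_\gD}(s,a)>0\Rightarrow\rho_\mu^{M_\gD}(s,a)\ge d_\mu^{M_\gD}$) makes explicit what the paper asserts in one line, and the measurability concern you raise about $d_\mu^{M_\gD}$ being data-dependent is real and is simply not addressed in the paper's proof; your proposed resolution (that in the regime where this matters the bound is vacuous anyway) is more careful than what the paper does.
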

\begin{proof}
	$\forall s,s'\in\gS, a\in\gA$ with $\rho_{\pi}^{M_{\gD}}(s,a)>0$, note that $\rho_{\pi}^{M_{\gD}}(s,a)\ge d_\mu^{M_{\gD}}$ and according to Binomial theorem and Hoeffding’s inequality, $\forall \epsilon\in[0,1]$,
	\begin{align}
	\sP\left(\left|P^{M_{\gD}}(s'|s,a)-P^M(s'|s,a) \right|\ge \epsilon\right) &\le 2\left(1-d_\mu^{M_{\gD}}+d_\mu^{M_{\gD}}\exp\left(-2\epsilon^2\right)\right)^{K} \\
	&\le 2\left(1-d_\mu^{M_{\gD}}\epsilon^2\right)^{K}
	\end{align}
	and
	\begin{align}
	\sP\left(\left|r^{M_{\gD}}(s,a)-r^M(s,a) \right|\ge \epsilon\right) &\le 2\left(1-d_\mu^{M_{\gD}}\epsilon^2\right)^{K}.
	\end{align}
	Thus, using union bound, $\forall \delta\in (0,1]$, with probability $1-\delta$, denote
	\begin{align}
	\epsilon_P&=\max_{s\in\gS,a\in\gA\text{ with } \rho_{\pi}^{M_{\gD}}(s,a)>0}\left\|P^{M_{\gD}}(\cdot|s,a)-P^{M}(\cdot|s,a)\right\|_1 \\
	&\le\max_{s\in\gS,a\in\gA\text{ with } \rho_{\pi}^{M_{\gD}}(s,a)>0}\left|\gS\right|\left\|P^{M_{\gD}}(\cdot|s,a)-P^{M}(\cdot|s,a)\right\|_{\infty} \\ &\le\left|\gS\right|\sqrt{\frac{\log\left(\frac{1}{\delta}\right)+\log\left(2\left|\gS\right|^2\left|\gA\right|\right)}{Kd_\mu^{M_{\gD}}}},
	\end{align}
	\begin{align}
	\epsilon_r&=\max_{s\in\gS,a\in\gA\text{ with }\rho_{\pi}^{M_{\gD}}(s,a)>0}\left|r^{M_{\gD}}(s,a)-r^M(s,a) \right| &\le\sqrt{\frac{\log\left(\frac{1}{\delta}\right)+\log\left(2\left|\gS\right|\left|\gA\right|\right)}{Kd_\mu^{M_{\gD}}}},
	\end{align}
	and $r_{max}=1$, thus, according to Lemma \ref{lem:simulation-lemma-offline} (a variant of the simulation lemma in offline RL), we have 
	\begin{align}
	\left|\gJ_{M_{\gD}}(\pi)-\gJ_{M}(\pi)\right|&\le \left(H+\frac{H(H-1)}{2}\left|\gS\right|\right)\sqrt{\frac{\log\left(\frac{1}{\delta}\right)+\log\left(2\left|\gS\right|^2\left|\gA\right|\right)}{Kd_\mu^{M_{\gD}}}} \\
	&\le H^2\left|\gS\right|\sqrt{\frac{\log\left(\frac{1}{\delta}\right)+\log\left(2\left|\gS\right|^2\left|\gA\right|\right)}{Kd_\mu^{M_{\gD}}}}
	\end{align}
\end{proof}

\OffLearnGua*
\begin{proof}
	From Lemma \ref{lem:lemma1-formal}, as the size of an offline dataset $\left|\gD\right|=KH$ grows, the gap between approximate offline policy evaluation $\gJ_{\gD}(\pi)$ and accurate policy evaluation $\gJ_{M}(\pi)$ will asymptotically approach to 0.
\end{proof}

\newpage
\subsection{Omitted Lemmas for Theorem \ref{thm:ThompsonGenID}}\label{appendix:omitted-proof-Thm-3}

\begin{lemma}\label{lem:prob-pi-leave-dataset-lem-1}
	In an MDP $M$ and an arbitrary offline dataset $\gD$, for each policy $\pi$, 
	\begin{align}
		p_{out}^{M,\gD}(\pi) \le H\left(\left\|\rho_{\pi}^M(s)-\rho_{\pi}^{M_{\gD}}(s)\right\|_1 + \left\|\rho_{\pi}^M(s,a,r)-\rho_{\pi}^{M_{\gD}}(s,a,r)\right\|_1\right),
	\end{align}
	where $\rho_{\pi}^{M}(s,a,r)$ is the visitation distribution of $(s,a,r)$ in $M$, as defined in Eq. (\ref{appendix-eq:visit-dis-r}).
\end{lemma}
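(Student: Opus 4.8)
The plan is to bound the event ``$\tau_H$ leaves $\gD$'' by a union over the $H$ timesteps, and then to recognize each single-timestep escape probability as a slice of the $L^1$ gap between the occupancy measures of $\pi$ in $M$ and in $M_{\gD}$.

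First I would record the pointwise inequality, valid for every trajectory $\tau_H=(s_0,a_0,r_0,\dots,s_{H-1},a_{H-1},r_{H-1})$,
\[
\mathbbm{1}\left[\tau_H\text{ leaves }\gD\right]\;\le\;\sum_{t\in[H]}\Big(\mathbbm{1}\left[s_t\notin\gD\right]+\mathbbm{1}\left[(s_t,a_t,r_t)\notin\gD\right]\Big),
\]
which is just a union bound over the defining event of Definition~\ref{def:prob-pi-leave-dataset}. Taking $\E_{\tau_H\sim p^M_\pi}[\,\cdot\,]$ turns the left-hand side into $p_{out}^{M,\gD}(\pi)$ and the right-hand side into a sum of per-timestep escape probabilities.

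Next I would convert those per-timestep probabilities into occupancy mass. Because the level sets $\gS_0,\dots,\gS_{H-1}$ are disjoint, a short induction on $h$ using the recursion (\ref{appendix-eq:visit-dis}) gives $\rho_\pi^M(s)=\frac1H\,\E_{\tau_H\sim p^M_\pi}\!\left[\mathbbm{1}[s_h=s]\right]$ for $s\in\gS_h$, and likewise $\rho_\pi^M(s,a,r)=\frac1H\,\E_{\tau_H\sim p^M_\pi}\!\left[\mathbbm{1}[(s_h,a_h,r_h)=(s,a,r)]\right]$ by (\ref{appendix-eq:visit-dis-r}). Hence the step-$t$ contribution equals $H\sum_{s\in\gS_t,\,s\notin\gD}\rho_\pi^M(s)$, and similarly for the triples. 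The key structural fact I would establish (again by induction on the timestep, this time run inside $M_{\gD}$) is that $\rho_\pi^{M_{\gD}}$ is supported inside the dataset: by the construction of $M_{\gD}$ in Definition~\ref{def:data-mdp}, any transition or reward never witnessed in $\gD$ carries probability zero, so $\rho_\pi^{M_{\gD}}(s)>0\Rightarrow s\in\gD$ and $\rho_\pi^{M_{\gD}}(s,a,r)>0\Rightarrow (s,a,r)\in\gD$. Consequently, for $s\notin\gD$ one may write $\rho_\pi^M(s)=\big|\rho_\pi^M(s)-\rho_\pi^{M_{\gD}}(s)\big|$, and analogously for triples.

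Finally I would sum over $t$: since each state lies in exactly one level set $\gS_t$ with $t\in[H]$,
\[
H\sum_{t\in[H]}\sum_{s\in\gS_t,\,s\notin\gD}\rho_\pi^M(s)\;=\;H\sum_{s\notin\gD}\big|\rho_\pi^M(s)-\rho_\pi^{M_{\gD}}(s)\big|\;\le\;H\left\|\rho_\pi^M(s)-\rho_\pi^{M_{\gD}}(s)\right\|_1,
\]
and the parallel estimate bounds the $(s,a,r)$ terms by $H\left\|\rho_\pi^M(s,a,r)-\rho_\pi^{M_{\gD}}(s,a,r)\right\|_1$; adding the two yields the claim. The main point requiring care — and essentially the only obstacle — is the behavior of $M_{\gD}$ off the dataset support, where its ``transition'' and ``reward'' rows are the zero vector rather than genuine distributions; I would make explicit that this only sub-normalizes $\rho_\pi^{M_{\gD}}$ (occupancy mass can leak away but never appear off the support), which is exactly what the support property and the pointwise bound $\rho_\pi^M(s)\ge\rho_\pi^M(s)-\rho_\pi^{M_{\gD}}(s)$ need; in particular no coverage assumption on $\gD$ and no batch-constrainedness of $\pi$ are used.
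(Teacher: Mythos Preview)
Your proposal is correct and follows essentially the same route as the paper: a union bound over timesteps, the observation that $\rho_\pi^{M_{\gD}}$ vanishes off the dataset (so $\rho_\pi^M(s)=|\rho_\pi^M(s)-\rho_\pi^{M_{\gD}}(s)|$ for $s\notin\gD$), and then a pass to the full $L^1$ norm. The paper's version is terser---it phrases the union bound as ``represent each leaving trajectory by its first outlier'' and writes the intermediate sum over $\{s:\rho_\pi^{M_{\gD}}(s)=0\}$ rather than $\{s:s\notin\gD\}$---but the logic is the same; your explicit treatment of the sub-normalization of $M_{\gD}$ off-support is a detail the paper leaves implicit.
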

\begin{proof}
	For each $\tau_H$ leaving $\gD$, we use the first outlier data ($s_t \not\in \gD$ or $(s_t, a_t, r_t) \not\in \gD$) to represent $\tau_H$. Thus,
	\begin{align}
		p_{out}^{M,\gD}(\pi) &= \sum_{\tau_H} p^{M}_{\pi}\left(\tau_H\right)\mathbbm{1}\left[\exists t\in[H] \text{ s.t. } s_t \not\in \gD \text{ or } (s_t, a_t, r_t) \not\in \gD\right] \\
		&\le H\left(\sum_{s\in\gS\text{ with } \rho_{\pi}^{M_{\gD}}(s)=0} \rho_{\pi}^{M}(s)+\sum_{s\in\gS,a\in\gA,r\in\gR\text{ with } \rho_{\pi}^{M_{\gD}}(s,a,r)=0} \rho_{\pi}^{M}(s,a,r)\right) \\
		&\le H\left(\left\|\rho_{\pi}^M(s)-\rho_{\pi}^{M_{\gD}}(s)\right\|_1 + \left\|\rho_{\pi}^M(s,a,r)-\rho_{\pi}^{M_{\gD}}(s,a,r)\right\|_1\right).
	\end{align}
\end{proof}

\begin{lemma}\label{lem:thm-3-formal}
	In a transformed BAMDP $\overline{M}^{+}$ with an offline multi-task dataset $\overline{\gD}^+$ collected by task-dependent behavior policies $[\mu]$, denoting the i.i.d. offline meta-training tasks sampled from $p(\kappa)$ in $\overline{\gD}^+$ by $\gK_{train}$, for each meta-testing task $\kappa_{test}\sim p(\kappa)$, and denoting the closest offline meta-training task to $\kappa_{test}$ by $\kappa_{i^*}$, i.e., 
	\begin{align}\label{appendix-eq:closest-meta-train-task}
	\kappa_{i^*} &= \mathop{\arg\min}_{\kappa_i\in\gK_{train}} \left\|\kappa_i-\kappa_{test}\right\|_{\infty} \\ \label{appendix-eq:task-distance-linfty}
	&=\mathop{\arg\min}_{\kappa_i\in\gK_{train}}\max\left(\left\|P^{\kappa_i}(s,a,s') - P^{\kappa_{test}}(s,a,s')\right\|_{\infty}, \left\|R^{\kappa_i}(s,a,r) - R^{\kappa_{test}}(s,a,r)\right\|_{\infty}\right), 
	\end{align}
	then for any batch-constrained policy $\pi$ in $\gD_{\kappa_{i^*},\mu_{i^*}}$, where $\gD_{\kappa_{i^*},\mu_{i^*}}$ is a sub-dataset collected in $\overline{\gD}^+$ (see Definition \ref{def:sub-dataset-per-task}), i.e., $\forall\pi\in\Pi^{\gD_{\kappa_{i^*},\mu_{i^*}}}$,
	\begin{align}
	&~p_{out}^{\kappa_{test},\gD_{\kappa_{i^*},\mu_{i^*}}}(\pi) \\
	\le&~2H^2\left|\gS\right|^2\left|\gA\right|\left|\gR\right|\left(\underbrace{\sqrt{\frac{\log\left(\frac{1}{\delta}\right)+\log\left(2\left|\gS\right|^2\left|\gA\right|\right)}{K_{\kappa_{i^*},\mu_{i^*}}\cdot d_\mu^{M_{\gD_{\kappa_{i^*},\mu_{i^*}}}}}}}_{\substack{\text{\small Asymptotically approaches zero,} \\ \text{\small when $\gD_{\kappa_{i^*},\mu_{i^*}}$ is sufficiently large}}}+\underbrace{2\left(\frac{\log\left(\frac{1}{\delta}\right)}{\left|\gK_{train}\right|}\right)^{\frac{1}{\left|\gS\right|\left|\gA\right|\left(\left|\gS\right|+\left|\gR\right|\right)}}}_{\substack{\text{\small Asymptotically approaches zero,} \\ \text{\small when $\gK_{train}$ is sufficiently large}}}\right),
	\end{align}
	where $K_{\kappa_{i^*},\mu_{i^*}}$ is the number of trajectories in the sub-dataset $\gD_{\kappa_{i^*},\mu_{i^*}}$, $d_\mu^{M_{\gD_{\kappa_{i^*},\mu_{i^*}}}}$ is the minimal visitation state-action distribution induced by the behavior policy $\mu$ in $M_{\gD_{\kappa_{i^*},\mu_{i^*}}}$ (see Eq. (\ref{appendix-eq:dm})), and $\left|\gK_{train}\right|$ is the number of i.i.d. offline meta-training tasks sampled from $p(\kappa)$ in $\overline{\gD}^+$.
\end{lemma}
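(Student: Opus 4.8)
\emph{Proof plan.} Write $\gD_* := \gD_{\kappa_{i^*},\mu_{i^*}}$, $K_* := K_{\kappa_{i^*},\mu_{i^*}}$, and let $M_*$ be the dataset‑induced MDP of $\gD_*$ (Definition~\ref{def:data-mdp}). The plan is to control $p_{out}^{\kappa_{test},\gD_*}(\pi)$ by the discrepancy between the visitation distributions that $\pi$ induces in the \emph{true} test MDP $\kappa_{test}$ and in $M_*$, and then to split that discrepancy, through the triangle inequality at $\kappa_{i^*}$, into a \emph{task‑approximation} piece (how far $\kappa_{i^*}$ is from $\kappa_{test}$) and a \emph{concentration} piece (how far $M_*$ is from $\kappa_{i^*}$). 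Since $\pi\in\Pi^{\gD_*}$ is batch‑constrained and every trajectory in $\gD_*$ starts at the fixed initial state $s_0$, running $\pi$ in $M_*$ never leaves $\gD_*$, so $\rho_\pi^{M_*}$ is well defined and supported on $\gD_*$; applying Lemma~\ref{lem:prob-pi-leave-dataset-lem-1} (using Definition~\ref{def:prob-pi-leave-dataset}) with the MDP $\kappa_{test}$ and the dataset $\gD_*$ gives
\begin{align}
p_{out}^{\kappa_{test},\gD_*}(\pi)\le H\Big(\big\|\rho_\pi^{\kappa_{test}}(s)-\rho_\pi^{M_*}(s)\big\|_1+\big\|\rho_\pi^{\kappa_{test}}(s,a,r)-\rho_\pi^{M_*}(s,a,r)\big\|_1\Big),
\end{align}
so it suffices to bound $\|\rho_\pi^{\kappa_{test}}-\rho_\pi^{\kappa_{i^*}}\|_1$ and $\|\rho_\pi^{\kappa_{i^*}}-\rho_\pi^{M_*}\|_1$ for both the state marginal and the $(s,a,r)$ marginal.

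For the \textbf{task‑approximation} piece I would run the telescoping argument behind the simulation lemma (Lemma~\ref{lem:simulation-lemma-offline}), but tracking per‑layer state distributions instead of value functions: since $\kappa_{test}$ and $\kappa_{i^*}$ share $\gS,\gA,\gR$ and $\pi$ is fixed, the layer‑$h$ distributions of $\pi$ under the two MDPs drift apart by at most the layer‑$h$ transition $\ell_1$‑error (at most $|\gS|\,\|\kappa_{i^*}-\kappa_{test}\|_\infty$) per step, so $\|\rho_\pi^{\kappa_{test}}-\rho_\pi^{\kappa_{i^*}}\|_1$ is a fixed polynomial in $H,|\gS|,|\gA|,|\gR|$ times $\|\kappa_{i^*}-\kappa_{test}\|_\infty$ (the $(s,a,r)$ marginal contributing one extra $|\gR|$‑type factor from the reward error). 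It then remains to bound $\|\kappa_{i^*}-\kappa_{test}\|_\infty$, which is exactly the nearest‑meta‑training‑task estimate (the covering argument behind Lemma~\ref{lem:meta-task-dis-bound}): the pair $(P^{\kappa},R^{\kappa})$ is a point of the cube $[0,1]^{d}$ with $d=|\gS||\gA|(|\gS|+|\gR|)$ — the exact count of its coordinates — and its $\ell_\infty$‑distance is the task distance of Eq.~\eqref{appendix-eq:task-distance-linfty}; I would partition this cube into cells of side $\epsilon$, call a cell \emph{heavy} if its $p(\kappa)$‑mass is at least $\gamma$ and \emph{light} otherwise, note that a heavy cell fails to contain any of the $|\gK_{train}|$ i.i.d.\ training tasks with probability at most $(1-\gamma)^{|\gK_{train}|}\le e^{-\gamma|\gK_{train}|}$ while $\kappa_{test}$ lies in the union of light cells with probability at most $\epsilon^{-d}\gamma$, and balance $\gamma$ and $\epsilon$ against $\delta$ to obtain $\|\kappa_{i^*}-\kappa_{test}\|_\infty\le 2\big(\log(1/\delta)/|\gK_{train}|\big)^{1/d}$ with probability at least $1-\delta$ — the second summand.

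For the \textbf{concentration} piece I would mirror the proof of Lemma~\ref{lem:lemma1-formal}: each $(s,a)$ in the support of $\rho_\pi^{M_*}$ lies in $\gD_*$ and hence has behavior‑policy visitation at least $d_\mu^{M_*}$ (Eq.~\eqref{appendix-eq:dm}), so Hoeffding's inequality together with a union bound over $\gS\times\gA\times(\gS\cup\gR)$ shows the empirical transition and reward functions of $M_*$ are within $\sqrt{(\log(1/\delta)+\log(2|\gS|^2|\gA|))/(K_*\,d_\mu^{M_*})}$ of those of $\kappa_{i^*}$ on that support, with probability at least $1-\delta$; the same telescoping then turns this into a bound on $\|\rho_\pi^{\kappa_{i^*}}-\rho_\pi^{M_*}\|_1$, which is the first summand. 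Combining the two pieces, and folding the factor $H$ from Lemma~\ref{lem:prob-pi-leave-dataset-lem-1}, the telescoping factor, and the $\ell_\infty\!\to\!\ell_1$ conversions into the prefactor $2H^2|\gS|^2|\gA||\gR|$ (a union bound over the two high‑probability events only affects constants), yields the stated inequality.

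I expect the main obstacle to be the nearest‑task covering argument: one must cope with an \emph{arbitrary} prior $p(\kappa)$, so cells need not carry comparable mass, and the heavy/light split must be tuned carefully to produce the claimed polynomial rate with exponent $1/(|\gS||\gA|(|\gS|+|\gR|))$ rather than a mere $o(1)$ bound. A minor additional point that must be handled explicitly is that a batch‑constrained $\pi$ is only specified on the support of $\gD_*$; this is harmless because $M_*$ keeps $\pi$ on that support and Lemma~\ref{lem:prob-pi-leave-dataset-lem-1} already charges all off‑support mass to $p_{out}$, but it needs to be said when $\rho_\pi^{M_*}$ is introduced.
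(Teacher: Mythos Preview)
Your plan matches the paper's proof almost line for line: apply Lemma~\ref{lem:prob-pi-leave-dataset-lem-1}, insert $\kappa_{i^*}$ by the triangle inequality, bound the concentration piece $\|\rho_\pi^{\kappa_{i^*}}-\rho_\pi^{M_*}\|_1$ via Hoeffding plus a visitation‑distribution simulation lemma (the paper packages this as Lemma~\ref{lem:offline-rho-l1}), and bound the task‑approximation piece $\|\rho_\pi^{\kappa_{test}}-\rho_\pi^{\kappa_{i^*}}\|_1$ via the same simulation‑lemma telescoping (Lemma~\ref{lem:meta-task-rho-l1}) together with the nearest‑task estimate (Lemma~\ref{lem:meta-task-dis-bound}). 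The prefactor bookkeeping you describe is also the paper's.

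The one substantive difference is your sketch of the nearest‑task estimate. The paper does \emph{not} use a heavy/light cell split. Instead it proves a clean two‑sample bound (Lemma~\ref{lem:xy-distance-bound}): for i.i.d.\ $X,Y\in[0,1]^d$, partition the cube into $|C|\le(2/\epsilon)^d$ cells with masses $p_c$ and observe
\[
\sP\big[\|X-Y\|_\infty<\epsilon\big]\;\ge\;\sP[X,Y\text{ in the same cell}]\;=\;\sum_c p_c^2\;\ge\;\frac{1}{|C|}\;\ge\;\Big(\frac{\epsilon}{2}\Big)^d,
\]
the inequality being Cauchy--Schwarz. Applying this to each pair $(\kappa_i,\kappa_{test})$ and then taking the product over $i$ yields the stated rate $2(\log(1/\delta)/|\gK_{train}|)^{1/d}$ directly. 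Your heavy/light argument, as you wrote it, gives a failure probability of order $N\gamma+e^{-\gamma|\gK_{train}|}$ with $N\approx(2/\epsilon)^d$; balancing $\gamma$ forces $N\lesssim \delta|\gK_{train}|/\log(1/\delta)$ and hence $\epsilon\gtrsim\big(\log(1/\delta)/(\delta\,|\gK_{train}|)\big)^{1/d}$, which carries an extra $\delta^{-1/d}$ compared to the lemma's bound. You correctly flagged this step as the main obstacle; the missing idea is precisely the Cauchy--Schwarz lower bound on $\sum_c p_c^2$, which exploits that $\kappa_{test}$ and the training tasks share the \emph{same} distribution and removes the need to distinguish heavy from light cells.
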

\begin{proof}
	According to Lemma \ref{lem:prob-pi-leave-dataset-lem-1},
	\begin{align}
	&~p_{out}^{\kappa_{test},\gD_{\kappa_{i^*},\mu_{i^*}}}(\pi) \\
	\le&~ H\left(\left\|\rho_{\pi}^{M_{\gD_{\kappa_{i^*},\mu_{i^*}}}}(s)-\rho_{\pi}^{\kappa_{test}}(s)\right\|_1 + \left\|\rho_{\pi}^{M_{\gD_{\kappa_{i^*},\mu_{i^*}}}}(s,a,r)-\rho_{\pi}^{\kappa_{test}}(s,a,r)\right\|_1\right) \\
	\le&~H\left(\left\|\rho_{\pi}^{M_{\gD_{\kappa_{i^*},\mu_{i^*}}}}(s)-\rho_{\pi}^{\kappa_{i^*}}(s)\right\|_1 + \left\|\rho_{\pi}^{M_{\gD_{\kappa_{i^*},\mu_{i^*}}}}(s,a,r)-\rho_{\pi}^{\kappa_{i^*}}(s,a,r)\right\|_1\right) + \\
	&~H\left(\left\|\rho_{\pi}^{\kappa_{i*}}(s)-\rho_{\pi}^{\kappa_{test}}(s)\right\|_1 + \left\|\rho_{\pi}^{\kappa_{i*}}(s,a,r)-\rho_{\pi}^{\kappa_{test}}(s,a,r)\right\|_1\right).
	\end{align}
	\paragraph{Part I} Similar with Lemma \ref{lem:lemma1-formal}, $\forall \delta\in (0,1]$, with probability $1-\delta$, denote
	\begin{align}
		\epsilon_P&=\max_{s\in\gS,a\in\gA\text{ with } \rho_{\pi}^{M_{\gD_{\kappa_{i^*},\mu_{i^*}}}}(s,a)>0}\left\|P^{M_{\gD_{\kappa_{i^*},\mu_{i^*}}}}(\cdot|s,a)-P^{\kappa_{i^*}}(\cdot|s,a)\right\|_1 \\ &\le\left|\gS\right|\sqrt{\frac{\log\left(\frac{1}{\delta}\right)+\log\left(2\left|\gS\right|^2\left|\gA\right|\right)}{K_{\kappa_{i^*},\mu_{i^*}}\cdot d_\mu^{M_{\gD_{\kappa_{i^*},\mu_{i^*}}}}}}, \\
		\epsilon_r&=\max_{s\in\gS,a\in\gA\text{ with }\rho_{\pi}^{M_{\gD_{\kappa_{i^*},\mu_{i^*}}}}(s,a)>0}\left|r^{M_{\gD_{\kappa_{i^*},\mu_{i^*}}}}(s,a)-r^{\kappa_{i^*}}(s,a) \right| \\
		&\le\sqrt{\frac{\log\left(\frac{1}{\delta}\right)+\log\left(2\left|\gS\right|\left|\gA\right|\right)}{K_{\kappa_{i^*},\mu_{i^*}}\cdot d_\mu^{M_{\gD_{\kappa_{i^*},\mu_{i^*}}}}}},
	\end{align}
	thus, from Lemma \ref{lem:offline-rho-l1}, we have 
	\begin{align}
		&~H\left(\left\|\rho_{\pi}^{M_{\gD_{\kappa_{i^*},\mu_{i^*}}}}(s)-\rho_{\pi}^{\kappa_{i^*}}(s)\right\|_1 + \left\|\rho_{\pi}^{M_{\gD_{\kappa_{i^*},\mu_{i^*}}}}(s,a,r)-\rho_{\pi}^{\kappa_{i^*}}(s,a,r)\right\|_1\right) \\
		\le&~ 2H^2\left|\gS\right|^2\left|\gA\right|\left|\gR\right|\sqrt{\frac{\log\left(\frac{1}{\delta}\right)+\log\left(2\left|\gS\right|^2\left|\gA\right|\right)}{K_{\kappa_{i^*},\mu_{i^*}}\cdot d_\mu^{M_{\gD_{\kappa_{i^*},\mu_{i^*}}}}}}.
	\end{align}
	\paragraph{Part II} From Lemma \ref{lem:meta-task-dis-bound}, $\forall \delta\in (0,1]$, with probability $1-\delta$, denote
	\begin{align}
		\tilde{\epsilon}_P &=\max_{s\in\gS,a\in\gA}\left\|P^{\kappa_{i^*}}(\cdot|s,a)-P^{\kappa_{test}}(\cdot|s,a)\right\|_1 \\
		&\le \left|\gS\right| \left\|\kappa_{i^*}-\kappa_{test}\right\|_{\infty} \\ 
		&\le 2\left|\gS\right| \left(\frac{\log\left(\frac{1}{\delta}\right)}{\left|\gK_{train}\right|}\right)^{\frac{1}{\left|\gS\right|\left|\gA\right|\left(\left|\gS\right|+\left|\gR\right|\right)}}, \\
		\tilde{\epsilon}_R &=\max_{s\in\gS,a\in\gA,r\in\gR}\left|R^{\kappa_{i^*}}(r|s,a)-R^{\kappa_{test}}(r|s,a)\right| \\
		&\le \left\|\kappa_{i^*}-\kappa_{test}\right\|_{\infty} \\ 
		&\le 2 \left(\frac{\log\left(\frac{1}{\delta}\right)}{\left|\gK_{train}\right|}\right)^{\frac{1}{\left|\gS\right|\left|\gA\right|\left(\left|\gS\right|+\left|\gR\right|\right)}},
	\end{align}
	then from Lemma \ref{lem:meta-task-rho-l1}, we have 
	\begin{align}
	&~H\left(\left\|\rho_{\pi}^{\kappa_{i*}}(s)-\rho_{\pi}^{\kappa_{test}}(s)\right\|_1 + \left\|\rho_{\pi}^{\kappa_{i*}}(s,a,r)-\rho_{\pi}^{\kappa_{test}}(s,a,r)\right\|_1\right) \\
	\le&~ 4H^2\left|\gS\right|^2\left|\gA\right|\left|\gR\right|\left(\frac{\log\left(\frac{1}{\delta}\right)}{\left|\gK_{train}\right|}\right)^{\frac{1}{\left|\gS\right|\left|\gA\right|\left(\left|\gS\right|+\left|\gR\right|\right)}}.
	\end{align}
	\paragraph{Overall} Combining Part I and Part II, we have 
	\begin{align}
		&~p_{out}^{\kappa_{test},\gD_{\kappa_{i^*},\mu_{i^*}}}(\pi) \\
		\le&~H\left(\left\|\rho_{\pi}^{M_{\gD_{\kappa_{i^*},\mu_{i^*}}}}(s)-\rho_{\pi}^{\kappa_{i^*}}(s)\right\|_1 + \left\|\rho_{\pi}^{M_{\gD_{\kappa_{i^*},\mu_{i^*}}}}(s,a,r)-\rho_{\pi}^{\kappa_{i^*}}(s,a,r)\right\|_1\right) + \\
		&~H\left(\left\|\rho_{\pi}^{\kappa_{i*}}(s)-\rho_{\pi}^{\kappa_{test}}(s)\right\|_1 + \left\|\rho_{\pi}^{\kappa_{i*}}(s,a,r)-\rho_{\pi}^{\kappa_{test}}(s,a,r)\right\|_1\right) \\
		\le&~2H^2\left|\gS\right|^2\left|\gA\right|\left|\gR\right|\left(\sqrt{\frac{\log\left(\frac{1}{\delta}\right)+\log\left(2\left|\gS\right|^2\left|\gA\right|\right)}{K_{\kappa_{i^*},\mu_{i^*}}\cdot d_\mu^{M_{\gD_{\kappa_{i^*},\mu_{i^*}}}}}}+2\left(\frac{\log\left(\frac{1}{\delta}\right)}{\left|\gK_{train}\right|}\right)^{\frac{1}{\left|\gS\right|\left|\gA\right|\left(\left|\gS\right|+\left|\gR\right|\right)}}\right).
	\end{align}
\end{proof}

\subsubsection{Detailed Lemmas (Part I)}

\begin{lemma}\label{lem:offline-rho-l1}
	In an MDP $M$ with an offline dataset $\gD$, for any batch-constrained policy $\pi\in\Pi^{\gD}$, if
	\begin{align}
	&\max_{s\in\gS,a\in\gA\text{ with } \rho_{\pi}^{M_{\gD}}(s,a)>0}\left\|P^{M_{\gD}}(\cdot|s,a)-P^{M}(\cdot|s,a)\right\|_1\le \epsilon_P, \\
	&\max_{s\in\gS,a\in\gA,r\in\gR\text{ with } \rho_{\pi}^{M_{\gD}}(s,a)>0}\left|R^{M_{\gD}}(r|s,a)-R^{M}(r|s,a)\right|\le \epsilon_R, 
	\end{align}
	we have
	\begin{align}
	&~\left\|\rho_{\pi}^{M}(s)-\rho_{\pi}^{M_{\gD}}(s)\right\|_1 + \left\|\rho_{\pi}^{M}(s,a,r)-\rho_{\pi}^{M_{\gD}}(s,a,r)\right\|_1 \\
	\le&~ \left|\gS\right|\left(\left|\gA\right|\left|\gR\right|+1\right)\frac{H-1}{2} \epsilon_P + \left|\gS\right|\left|\gA\right|\left|\gR\right|\epsilon_R.
	\end{align}
\end{lemma}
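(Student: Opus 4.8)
The plan is to propagate the occupancy‑measure error forward through the $H$ timestep layers of an episode. Write $\delta_h := \sum_{s\in\gS_h}\left|\rho_\pi^{M}(s)-\rho_\pi^{M_{\gD}}(s)\right|$ for the layer‑$h$ contribution, so that $\left\|\rho_\pi^{M}(s)-\rho_\pi^{M_{\gD}}(s)\right\|_1=\sum_{h=0}^{H-1}\delta_h$. Since both MDPs share the deterministic initial state $s_0$ and the definition in Eq.~(\ref{appendix-eq:visit-dis}) gives $\rho_\pi^{M}(s_0)=\rho_\pi^{M_{\gD}}(s_0)=\frac1H$, we have $\delta_0=0$. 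For $h\ge1$, expand both $\rho_\pi^{M}(s)$ and $\rho_\pi^{M_{\gD}}(s)$ via the one‑step recursion over $(\tilde s,\tilde a)\in\gS_{h-1}\times\gA$, insert and subtract the cross term $\rho_\pi^{M_{\gD}}(\tilde s)\,\pi(\tilde a|\tilde s)\,P^{M}(s|\tilde s,\tilde a)$, and split into (i) a ``propagated error'' term $\sum_{\tilde s,\tilde a}\pi(\tilde a|\tilde s)\left|\rho_\pi^{M}(\tilde s)-\rho_\pi^{M_{\gD}}(\tilde s)\right|P^{M}(s|\tilde s,\tilde a)$ and (ii) a ``model error'' term $\sum_{\tilde s,\tilde a}\pi(\tilde a|\tilde s)\,\rho_\pi^{M_{\gD}}(\tilde s)\left|P^{M}(s|\tilde s,\tilde a)-P^{M_{\gD}}(s|\tilde s,\tilde a)\right|$.

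Summing (i) over $s\in\gS_h$ and using $\sum_{s\in\gS_h}P^{M}(s|\tilde s,\tilde a)=1$ together with $\sum_{\tilde a}\pi(\tilde a|\tilde s)=1$ collapses it to exactly $\delta_{h-1}$. For (ii), the key point is that $\pi\in\Pi^{\gD}$ is batch‑constrained, so $\pi(\tilde a|\tilde s)>0$ forces $(\tilde s,\tilde a)\in\gD$, whence $P^{M_{\gD}}(\cdot|\tilde s,\tilde a)$ is a genuine distribution supported on $\gS_h$; moreover whenever $\rho_\pi^{M_{\gD}}(\tilde s)>0$ and $\pi(\tilde a|\tilde s)>0$ we have $\rho_\pi^{M_{\gD}}(\tilde s,\tilde a)>0$, so the first hypothesis gives $\sum_{s\in\gS_h}\left|P^{M}(s|\tilde s,\tilde a)-P^{M_{\gD}}(s|\tilde s,\tilde a)\right|=\left\|P^{M}(\cdot|\tilde s,\tilde a)-P^{M_{\gD}}(\cdot|\tilde s,\tilde a)\right\|_1\le\epsilon_P$ — and on the complement the weight $\pi(\tilde a|\tilde s)\rho_\pi^{M_{\gD}}(\tilde s)$ vanishes. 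Bounding the leftover $\sum_{\tilde s,\tilde a}\pi(\tilde a|\tilde s)\rho_\pi^{M_{\gD}}(\tilde s)$ crudely by $\left|\gS\right|$ yields $\delta_h\le\delta_{h-1}+\left|\gS\right|\epsilon_P$, and unrolling over the $H$ layers gives $\left\|\rho_\pi^{M}(s)-\rho_\pi^{M_{\gD}}(s)\right\|_1\le\left|\gS\right|\frac{H-1}{2}\epsilon_P$ (a factor $\left|\gS\right|$ can in fact be dropped using the per‑layer mass $\frac1H$, but this crude form suffices).

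For the $(s,a,r)$ part, I would write $\rho_\pi^{M}(s,a,r)-\rho_\pi^{M_{\gD}}(s,a,r)=\pi(a|s)\bigl[(\rho_\pi^{M}(s)-\rho_\pi^{M_{\gD}}(s))R^{M}(r|s,a)+\rho_\pi^{M_{\gD}}(s)(R^{M}(r|s,a)-R^{M_{\gD}}(r|s,a))\bigr]$, take absolute values, and sum over $(s,a,r)$. Bounding the first group crudely by $\left|\gA\right|\left|\gR\right|\left\|\rho_\pi^{M}(s)-\rho_\pi^{M_{\gD}}(s)\right\|_1\le\left|\gS\right|\left|\gA\right|\left|\gR\right|\frac{H-1}{2}\epsilon_P$, and the second group by converting the $L^\infty$ reward hypothesis to $L^1$ at cost $\left|\gR\right|$ (so $\sum_r\left|R^{M}-R^{M_{\gD}}\right|\le\left|\gR\right|\epsilon_R$ wherever $\rho_\pi^{M_{\gD}}(s,a)>0$, again using batch‑constrainedness so that $R^{M_{\gD}}(\cdot|s,a)$ is a distribution there and the zero‑weight argument elsewhere) and bounding $\sum_{s,a}\pi(a|s)\rho_\pi^{M_{\gD}}(s)$ by $\left|\gS\right|\left|\gA\right|$, gives $\le\left|\gS\right|\left|\gA\right|\left|\gR\right|\epsilon_R$. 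Adding this to the state bound produces the claimed $\left|\gS\right|(\left|\gA\right|\left|\gR\right|+1)\frac{H-1}{2}\epsilon_P+\left|\gS\right|\left|\gA\right|\left|\gR\right|\epsilon_R$. I expect the only delicate point to be the bookkeeping around the ``$\rho_\pi^{M_{\gD}}(s,a)>0$'' restriction in the two hypotheses — one must verify that the associated weights are zero on its complement, which is precisely where $\pi\in\Pi^{\gD}$ enters (guaranteeing $P^{M_{\gD}},R^{M_{\gD}}$ are proper distributions on the support reachable by $\pi$); the telescoping and the layer‑structure identities are otherwise routine.
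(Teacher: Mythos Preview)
Your approach is genuinely different from the paper's. The paper does not telescope the occupancy recursion directly; instead it introduces, for each fixed pair $(\hat s,\hat a)$, an auxiliary reward $\tilde r(s,a)=\frac1H\mathbbm{1}[s=\hat s,a=\hat a]$, observes that $\rho_\pi^{M}(\hat s,\hat a)$ and $\rho_\pi^{M_\gD}(\hat s,\hat a)$ are exactly the returns of $\pi$ in the two MDPs equipped with reward $\tilde r$, and then invokes the offline Simulation Lemma (Lemma~\ref{lem:simulation-lemma-offline}) with $\epsilon_r=0$ and $r_{\max}=\frac1H$ to obtain the \emph{pointwise} bound $\bigl|\rho_\pi^M(\hat s,\hat a)-\rho_\pi^{M_\gD}(\hat s,\hat a)\bigr|\le\frac{H-1}{2}\epsilon_P$. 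Summing this pointwise bound over $\gS$ (respectively $\gS\times\gA\times\gR$) is what brings in the cardinality factors. Your direct layer-by-layer propagation is more elementary and, done carefully, actually yields a sharper $L^1$ bound on the state term (no $|\gS|$), since it avoids the lossy ``pointwise then sum'' step.

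That said, there is a real arithmetic slip in your unrolling. From $\delta_0=0$ and $\delta_h\le\delta_{h-1}+|\gS|\epsilon_P$ you get $\delta_h\le h\,|\gS|\epsilon_P$, hence
\[
\sum_{h=0}^{H-1}\delta_h\;\le\;|\gS|\epsilon_P\cdot\frac{H(H-1)}{2},
\]
which is a factor of $H$ larger than the $|\gS|\frac{H-1}{2}\epsilon_P$ you claim. The crude bound does \emph{not} suffice. What does work is precisely the per-layer mass observation you mention but dismiss: since $\pi\in\Pi^{\gD}$ keeps $P^{M_\gD}(\cdot|\tilde s,\tilde a)$ stochastic on the relevant support, $\sum_{\tilde s\in\gS_{h-1}}\rho_\pi^{M_\gD}(\tilde s)=\frac1H$, so your term (ii) is bounded by $\frac{\epsilon_P}{H}$ rather than $|\gS|\epsilon_P$. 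This gives $\delta_h\le\delta_{h-1}+\frac{\epsilon_P}{H}$ and hence $\sum_h\delta_h\le\frac{H-1}{2}\epsilon_P\le|\gS|\frac{H-1}{2}\epsilon_P$. With that correction the rest of your proposal goes through: the subsequent crude bounds on the $(s,a,r)$ terms (multiplying by $|\gA||\gR|$ and by $|\gS||\gA|$) are loose but valid and reproduce the lemma's stated constants.
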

\begin{proof}
	For each $\hat{s}\in\gS, \hat{a}\in\gA$, create an auxiliary reward function $\tilde{r}(s,a):\gS\times\gA\rightarrow[0,1]$: $\forall s\in\gS,a\in\gA$,
	\begin{align}
	\tilde{r}(s,a) =\left\{
	\begin{aligned}
	&\frac{1}{H}, &\text{ if }s=\hat{s}\text{ and }a=\hat{a}, \\
	&0, &\text{ otherwise}.
	\end{aligned}
	\right.
	\end{align}
	Denote $\widehat{M}=\left(\gS, \gA, \gR, H, P^{M},\tilde{r} \right)$ and $\widehat{M}_{\gD}=\left(\gS, \gA, \gR, H, P^{M_{\gD}},\tilde{r} \right)$. Using the offline Simulation Lemma shown in Lemma \ref{lem:simulation-lemma-offline}, for any batch-constrained policy $\pi\in\Pi^{\gD}$,
	\begin{align}
	\rho_\pi^{M}(\hat{s},\hat{a}) = \gJ_{\widehat{M}}(\pi)\quad\text{and}\quad\rho_\pi^{M_{\gD}}(\hat{s},\hat{a}) = \gJ_{\widehat{M}_{\gD}}(\pi).
	\end{align}
	Thus, $\epsilon_r = 0, r_{max} = \frac{1}{H}$, and
	\begin{align}
	\left|\rho_\pi^{M}(\hat{s},\hat{a})-\rho_\pi^{M_{\gD}}(\hat{s},\hat{a})\right| &\le \left|\gJ_{\widehat{M}}(\pi)-\gJ_{\widehat{M}_{\gD}}(\pi)\right|\\
	&\le H\epsilon_r+ \frac{H(H-1)r_{max}}{2}\epsilon_P \\
	&=\frac{H-1}{2} \epsilon_P.
	\end{align}
	Similarly, we have $\forall s\in\gS$,
	\begin{align}
	\left|\rho_\pi^{M}(s)-\rho_\pi^{M_{\gD}}(s)\right| \le \frac{H-1}{2} \epsilon_P.
	\end{align}
	For any $s\in\gS, a\in\gA, r\in\gR$, 
	\begin{align}
	&~\left|\rho_\pi^{M}(s,a,r)-\rho_\pi^{M_{\gD}}(s,a,r)\right| \\
	=&~ \left|\rho_\pi^{M}(s,a)R^M(r|s,a)-\rho_\pi^{M_{\gD}}(s,a)R^{M_{\gD}}(r|s,a)\right| \\
	\le&~ \left|\rho_\pi^{M}(s,a)-\rho_\pi^{M_{\gD}}(s,a)\right|R^M(r|s,a)+\rho_\pi^{M_{\gD}}(s,a)\left|R^M(r|s,a)-R^{M_{\gD}}(r|s,a)\right| \\
	\le&~\left|\rho_\pi^{M}(s,a)-\rho_\pi^{M_{\gD}}(s,a)\right| + \left|R^M(r|s,a)-R^{M_{\gD}}(r|s,a)\right| \\
	\le&~\frac{H-1}{2} \epsilon_P + \epsilon_R.
	\end{align}
	Therefore,
	\begin{align}
	&~\left\|\rho_{\pi}^{M}(s)-\rho_{\pi}^{M_{\gD}}(s)\right\|_1 + \left\|\rho_{\pi}^{M}(s,a,r)-\rho_{\pi}^{M_{\gD}}(s,a,r)\right\|_1 \\
	\le&~ \left|\gS\right|\frac{H-1}{2} \epsilon_P + \left|\gS\right|\left|\gA\right|\left|\gR\right|\left(\frac{H-1}{2} \epsilon_P + \epsilon_R\right) \\
	=&~\left|\gS\right|\left(\left|\gA\right|\left|\gR\right|+1\right)\frac{H-1}{2} \epsilon_P + \left|\gS\right|\left|\gA\right|\left|\gR\right|\epsilon_R.
	\end{align}
\end{proof}

\subsubsection{Detailed Lemmas (Part II)}

\begin{lemma}[Simulation Lemma for Finite-Horizon MDPs]\label{lem:simulation-lemma}
	Given a pair of finite-horizon MDPs $M_1$ and $M_2$ with the same state space $\gS$, same action space $\gA$, same reward function $\gR$, and same horizon $H$. If
	\begin{align}
	&\max_{s\in\gS,a\in\gA}\left\|P^{M_1}(\cdot|s,a)-P^{M_2}(\cdot|s,a)\right\|_1\le \tilde{\epsilon}_P, \\
	&\max_{s\in\gS,a\in\gA}\left|r^{M_1}(s,a)-r^{M_2}(s,a)\right|\le \tilde{\epsilon}_r, \\
	&\max_{s\in\gS,a\in\gA}\max\left(r^{M_1}(s,a), r^{M_2}(s,a)\right)\le r_{max}, 
	\end{align}
	where $r^{M_1}(s,a)=\E_{\tilde{r}\sim R^{M_1}(s,a)}[\tilde{r}]$ and $r^{M_2}(s,a)=\E_{\tilde{r}\sim R^{M_2}(s,a)}[\tilde{r}]$, we have
	\begin{align}
	\left|\gJ_{M_1}(\pi)-\gJ_{M_2}(\pi)\right|\le H\tilde{\epsilon}_r+ \frac{H(H-1)r_{max}}{2}\tilde{\epsilon}_P.
	\end{align}
\end{lemma}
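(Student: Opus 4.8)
The plan is to mirror, almost verbatim, the proof of Lemma~\ref{lem:simulation-lemma-offline}, the only simplification being that we no longer restrict to a batch-constrained policy or to state-action pairs in the support of a dataset: the three hypotheses now bound the model mismatch uniformly over all $(s,a)\in\gS\times\gA$, so the layer-wise induction goes through for an arbitrary policy $\pi$ and every reachable state. Fix $\pi$ and prove by backward induction on $h$ that for all $h\in[H]$ and all $s\in\gS_h$,
\begin{align}
\left|V_\pi^{M_1}(s)-V_\pi^{M_2}(s)\right|\le (H-h)\tilde{\epsilon}_r + \frac{(H-h)(H-h-1)r_{max}}{2}\tilde{\epsilon}_P .
\end{align}

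The base case $h=H-1$ is immediate: $V_\pi^{M_i}(s)=\sum_{a\in\gA}\pi(a|s)\,r^{M_i}(s,a)$, so the difference is at most $\max_{s,a}\left|r^{M_1}(s,a)-r^{M_2}(s,a)\right|\le\tilde{\epsilon}_r$, which matches the claimed bound since the $\tilde{\epsilon}_P$ coefficient vanishes. For the inductive step at layer $h<H-1$, expand both value functions via the Bellman recursion \eqref{appendix-eq:value-function}, add and subtract the cross term $\sum_{s'\in\gS_{h+1}}P^{M_1}(s'|s,a)V_\pi^{M_2}(s')$, and bound the three resulting pieces: (i) the reward gap contributes $\tilde{\epsilon}_r$; (ii) $\sum_{s'}\bigl(P^{M_1}(s'|s,a)-P^{M_2}(s'|s,a)\bigr)V_\pi^{M_2}(s')$ is bounded, via H\"older's inequality together with the crude value bound $\max_{s'\in\gS_{h+1}}\left|V_\pi^{M_2}(s')\right|\le (H-h-1)r_{max}$, by $(H-h-1)r_{max}\tilde{\epsilon}_P$; and (iii) $\sum_{s'}P^{M_1}(s'|s,a)\bigl|V_\pi^{M_1}(s')-V_\pi^{M_2}(s')\bigr|$ is a convex combination of layer-$(h{+}1)$ errors, hence bounded by the induction hypothesis at layer $h+1$. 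Summing (i)--(iii), averaging over $a\sim\pi(\cdot|s)$, and collecting the $\tilde{\epsilon}_P$ coefficients reproduces the layer-$h$ bound, completing the induction.

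Finally, instantiating the inductive claim at $h=0$ and $s=s_0$ yields $\left|\gJ_{M_1}(\pi)-\gJ_{M_2}(\pi)\right|=\left|V_\pi^{M_1}(s_0)-V_\pi^{M_2}(s_0)\right|\le H\tilde{\epsilon}_r+\frac{H(H-1)r_{max}}{2}\tilde{\epsilon}_P$, as desired. I do not expect a real obstacle here: this is the textbook simulation lemma and is strictly easier than Lemma~\ref{lem:simulation-lemma-offline}. The only points that need a little care are the bookkeeping of the telescoping $\tilde{\epsilon}_P$ coefficients (verifying $(H-h-1)+\tfrac{(H-h-1)(H-h-2)}{2}=\tfrac{(H-h)(H-h-1)}{2}$) and making sure that in step (ii) one uses the value-magnitude bound for layer $h+1$, namely $(H-h-1)r_{max}$, rather than the layer-$h$ bound.
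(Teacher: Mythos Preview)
Your proposal is correct and follows essentially the same approach as the paper: backward induction on the layer $h$ with the same per-layer bound, the same base case, and the same add-and-subtract decomposition of the Bellman recursion into a reward-gap term, a transition-gap term controlled via the value-magnitude bound $(H-h-1)r_{max}$, and a convex combination of the next-layer errors. The paper's proof is in fact more terse than yours and explicitly refers back to Lemma~\ref{lem:simulation-lemma-offline} for the details, exactly as you anticipated.
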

\begin{proof}
	Similar with famous Simulation Lemma in finite-horizon MDPs \citep{FiniteSimulationLemma} and the offline variant shown in Lemma \ref{lem:simulation-lemma-offline}, we will prove $\forall h\in[H], \forall s\in\gS_h$, 
	\begin{align}
	\left|V_\pi^{M_1}(s)-V_\pi^{M_2}(s)\right| \le (H-h)\tilde{\epsilon}_r+\frac{(H-h)(H-h-1)r_{max}}{2}\tilde{\epsilon}_P
	\end{align}
	by induction. When $h=H-1$, we have $\forall s\in\gS_h$, $\left|V_\pi^{M_1}(s)-V_\pi^{M_2}(s)\right|\le \tilde{\epsilon}_r$
	holds. And $\forall h\in[H-1]$, $\forall s\in\gS_h$,
	\begin{align}
	&~\left|V_\pi^{M_1}(s)-V_\pi^{M_2}(s)\right| \\
	\le&~\sum_{a\in\gA}\pi(a|s)\left|r^{M_1}(s,a)-r^{M_2}(s,a)\right| + \\
	&~\sum_{a\in\gA}\pi(a|s)\sum_{s'\in\gS_{h+1}}\left|P^{M_1}(s'|s,a)V_\pi^{M_1}(s')-P^{M_2}(s'|s,a)V_\pi^{M_2}(s')\right| \\
	\le&~(H-h)\tilde{\epsilon}_r+\frac{(H-h)(H-h-1)r_{max}}{2}\tilde{\epsilon}_P.
	\end{align}
	Thus, 
	\begin{align}
	\left|\gJ_{M_1}(\pi)-\gJ_{M_2}(\pi)\right|&=\left|V_\pi^{M_1}\left(s_0\right)-V_\pi^{M_2}\left(s_0\right)\right| \\
	&\le H\tilde{\epsilon}_r+ \frac{H(H-1)r_{max}}{2}\tilde{\epsilon}_P.
	\end{align}
\end{proof}

\begin{lemma}\label{lem:meta-task-rho-l1}
	Given a pair of finite-horizon MDPs $M_1$ and $M_2$ with the same state space, same action space, same reward function, and same horizon. If
	\begin{align}
	&\max_{s\in\gS,a\in\gA}\left\|P^{M_1}(\cdot|s,a)-P^{M_2}(\cdot|s,a)\right\|_1\le \tilde{\epsilon}_P, \\
	&\max_{s\in\gS,a\in\gA,r\in\gR}\left|R^{M_1}(r|s,a)-R^{M_2}(r|s,a)\right|\le \tilde{\epsilon}_R, 
	\end{align}
	we have
	\begin{align}
	&~\left\|\rho_{\pi}^{M_1}(s)-\rho_{\pi}^{M_2}(s)\right\|_1 + \left\|\rho_{\pi}^{M_1}(s,a,r)-\rho_{\pi}^{M_2}(s,a,r)\right\|_1 \\
	\le&~ \left|\gS\right|\left(\left|\gA\right|\left|\gR\right|+1\right)\frac{H-1}{2} \tilde{\epsilon}_P + \left|\gS\right|\left|\gA\right|\left|\gR\right|\tilde{\epsilon}_R.
	\end{align}
\end{lemma}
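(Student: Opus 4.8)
The plan is to mirror the proof of Lemma~\ref{lem:offline-rho-l1} almost verbatim, replacing the dataset-induced MDP $M_{\gD}$ by the second MDP $M_2$ and invoking the non-offline Simulation Lemma (Lemma~\ref{lem:simulation-lemma}) in place of its offline counterpart. The one device that makes everything go through is the observation that a single entry of a visitation distribution is itself the policy evaluation of a suitably scaled indicator reward, so a bound on visitation gaps reduces to a bound on value gaps, which is exactly what Lemma~\ref{lem:simulation-lemma} provides.

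First I would bound $\|\rho_{\pi}^{M_1}(s)-\rho_{\pi}^{M_2}(s)\|_1$. Fix $\hat s\in\gS$ and introduce the auxiliary reward $\tilde r(s,a)=\tfrac1H\,\mathbbm{1}[s=\hat s]$, which is deterministic and the \emph{same} in both MDPs. Let $\widehat M_1,\widehat M_2$ be the MDPs obtained from $M_1,M_2$ by substituting $\tilde r$ for the reward; then Eq.~(\ref{eq:pi-eva}) gives $\gJ_{\widehat M_i}(\pi)=\rho_{\pi}^{M_i}(\hat s)$. Applying Lemma~\ref{lem:simulation-lemma} with $\tilde\epsilon_r=0$ and $r_{max}=1/H$ yields $|\rho_{\pi}^{M_1}(\hat s)-\rho_{\pi}^{M_2}(\hat s)|\le\tfrac{H-1}{2}\tilde\epsilon_P$; the same argument with $\tilde r(s,a)=\tfrac1H\,\mathbbm{1}[s=\hat s,\,a=\hat a]$ gives $|\rho_{\pi}^{M_1}(\hat s,\hat a)-\rho_{\pi}^{M_2}(\hat s,\hat a)|\le\tfrac{H-1}{2}\tilde\epsilon_P$ for every pair. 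Summing over $\hat s$ bounds the state term by $|\gS|\tfrac{H-1}{2}\tilde\epsilon_P$.

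Next I would handle the $(s,a,r)$ term through the factorization $\rho_{\pi}^{M_i}(s,a,r)=\rho_{\pi}^{M_i}(s,a)\,R^{M_i}(r|s,a)$. A triangle-inequality split, combined with $R^{M_1}(r|s,a)\le 1$, $\rho_{\pi}^{M_2}(s,a)\le 1$, the state-action visitation bound just obtained, and the hypothesis $|R^{M_1}(r|s,a)-R^{M_2}(r|s,a)|\le\tilde\epsilon_R$, gives $|\rho_{\pi}^{M_1}(s,a,r)-\rho_{\pi}^{M_2}(s,a,r)|\le\tfrac{H-1}{2}\tilde\epsilon_P+\tilde\epsilon_R$ for each triple. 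Summing over the $|\gS||\gA||\gR|$ triples and adding the state-term bound produces exactly $|\gS|(|\gA||\gR|+1)\tfrac{H-1}{2}\tilde\epsilon_P+|\gS||\gA||\gR|\tilde\epsilon_R$, as claimed.

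I do not expect a serious obstacle; the only point requiring care is that $M_1$ and $M_2$ need not share the same reward \emph{distribution} (the phrase ``same reward function'' in the statement should be read as ``same reward space''), so Lemma~\ref{lem:simulation-lemma} cannot be applied directly to $(s,a,r)$-visitations. The workaround is precisely the one above: use the Simulation Lemma only for state and state-action visitations, where the auxiliary indicator reward is common and hence $\tilde\epsilon_r=0$, and absorb the reward mismatch afterwards via the explicit factorization $\rho(s,a,r)=\rho(s,a)R(r|s,a)$. A minor bookkeeping check is that the normalization in Eqs.~(\ref{appendix-eq:visit-dis})--(\ref{appendix-eq:visit-dis-r}) and (\ref{eq:pi-eva}) is such that the $1/H$ factor in $\tilde r$ indeed makes $\gJ$ equal the corresponding visitation entry.
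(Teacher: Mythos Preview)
Your proposal is correct and follows essentially the same approach as the paper: the paper's proof simply says ``Similar with Lemma~\ref{lem:offline-rho-l1}, \ldots\ using Simulation Lemma~\ref{lem:simulation-lemma}'' and then states the pointwise bounds and the final sum, so you have in fact written out more detail than the paper provides. Your side remark that ``same reward function'' should be read as ``same reward space'' is also apt, since the statement explicitly allows $R^{M_1}\neq R^{M_2}$ via $\tilde\epsilon_R$.
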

\begin{proof}
	Similar with Lemma \ref{lem:offline-rho-l1}, $\forall s\in\gS, a\in\gA, r\in\gR$, using Simulation Lemma \ref{lem:simulation-lemma}, we have 
	\begin{align}
		\left|\rho_\pi^{M_1}(s)-\rho_\pi^{M_2}(s)\right| \le \frac{H-1}{2} \tilde{\epsilon}_P\quad\text{and}\quad\left|\rho_\pi^{M_1}(s,a)-\rho_\pi^{M_2}(s,a)\right| \le \frac{H-1}{2} \tilde{\epsilon}_P,
	\end{align}
	and
	\begin{align}
		\left|\rho_\pi^{M}(s,a,r)-\rho_\pi^{M_{\gD}}(s,a,r)\right| \le \frac{H-1}{2} \tilde{\epsilon_P} + \tilde{\epsilon}_R.
	\end{align}
	Therefore,
	\begin{align}
	&~\left\|\rho_{\pi}^{M_1}(s)-\rho_{\pi}^{M_2}(s)\right\|_1 + \left\|\rho_{\pi}^{M_1}(s,a,r)-\rho_{\pi}^{M_2}(s,a,r)\right\|_1 \\
	\le&~\left|\gS\right|\left(\left|\gA\right|\left|\gR\right|+1\right)\frac{H-1}{2} \tilde{\epsilon}_P + \left|\gS\right|\left|\gA\right|\left|\gR\right|\tilde{\epsilon}_R.
	\end{align}
\end{proof}

\begin{lemma}\label{lem:xy-distance-bound}
	Let $X, Y$ be two i.i.d. random vectors that take values in $[0,1]^n$, $n\in\mathbb{N^+}$. For any $\epsilon\in(0, 1]$, we have
	\begin{align}
		\sP\left[\max_{i\in[n]}\left|X_i-Y_i\right|\geq\epsilon\right]\leq1-\left(\frac{\epsilon}{2}\right)^n.
	\end{align}
\end{lemma}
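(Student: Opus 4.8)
The plan is to reduce the claim to a pigeonhole estimate for the ``collision probability'' of an i.i.d.\ pair on a fine grid. Since the event in the statement is the complement of $\left\{\max_{i\in[n]}|X_i-Y_i|<\epsilon\right\}$, it is enough to show
\begin{align*}
\sP\!\left[\max_{i\in[n]}|X_i-Y_i|<\epsilon\right]\ \ge\ \left(\tfrac{\epsilon}{2}\right)^{n}.
\end{align*}
First I would pick an integer $m$ with $1/\epsilon<m\le 2/\epsilon$; such an $m$ exists because the half-open interval $(1/\epsilon,2/\epsilon]$ has length $1/\epsilon\ge 1$ (this is where $\epsilon\le 1$ enters) and hence contains an integer. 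Then partition $[0,1]$ into the $m$ consecutive subintervals of length $1/m$ (taking the last one closed so that $1$ is covered), and let $C_1,\dots,C_N$ with $N=m^{n}$ be the induced product partition of $[0,1]^{n}$. Each cell has $\ell_\infty$-diameter exactly $1/m<\epsilon$, so whenever $X$ and $Y$ lie in the same cell we have $\max_{i\in[n]}|X_i-Y_i|\le 1/m<\epsilon$.

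Next I would write $p_j$ for the common value $\sP[X\in C_j]=\sP[Y\in C_j]$, so that $\sum_{j=1}^{N}p_j=1$. Because the cells are disjoint and $X,Y$ are independent and identically distributed,
\begin{align*}
\sP\!\left[\max_{i\in[n]}|X_i-Y_i|<\epsilon\right]\ \ge\ \sum_{j=1}^{N}\sP[X\in C_j]\,\sP[Y\in C_j]\ =\ \sum_{j=1}^{N}p_j^{2}\ \ge\ \frac{1}{N}\Big(\sum_{j=1}^{N}p_j\Big)^{2}=\frac{1}{m^{n}},
\end{align*}
where the last inequality is Cauchy--Schwarz (equivalently QM--AM for the $p_j$). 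Since $m\le 2/\epsilon$ forces $m^{n}\le(2/\epsilon)^{n}$, this gives $\sP[\max_i|X_i-Y_i|<\epsilon]\ge(\epsilon/2)^{n}$, and passing to the complement yields the claimed bound $\sP[\max_i|X_i-Y_i|\ge\epsilon]\le 1-(\epsilon/2)^{n}$.

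I do not expect a real obstacle in this lemma; the one point that needs care is the grid-granularity bookkeeping --- we need cells whose diameter is \emph{strictly} below $\epsilon$ (so that lying in a common cell forces the strict inequality that defines the event) while simultaneously bounding the number of cells by $(2/\epsilon)^{n}$, and the constraint $1/\epsilon<m\le 2/\epsilon$ (valid precisely because $\epsilon\le 1$) threads exactly that needle. No absolute continuity or regularity of the common law is needed, since the argument only uses the finitely many numbers $p_j=\mu(C_j)$; everything else is the standard collision-probability computation.
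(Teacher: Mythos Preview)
Your proof is correct and follows essentially the same strategy as the paper: partition $[0,1]^n$ into a grid of axis-aligned cells small enough that two points in the same cell are within $\epsilon$ in $\ell_\infty$, then lower-bound the collision probability $\sum_j p_j^2$ via Cauchy--Schwarz by $1/N\ge(\epsilon/2)^n$. The only cosmetic difference is the grid bookkeeping --- the paper uses $\lceil 1/\epsilon\rceil^n$ cells of side $\epsilon$ and bounds $\lceil 1/\epsilon\rceil<2/\epsilon$, whereas you use $m^n$ cells of side $1/m$ with $m\in(1/\epsilon,2/\epsilon]$; both routes land on the same $(2/\epsilon)^n$ cell count.
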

\begin{proof}
	Denote an auxiliary set
	\begin{align}
		V=\left\{x\in\mathbb R^n\middle|\max_{i\in[n]}\left|x_i\right|<\frac{\epsilon}{2}\right\},
	\end{align}
	then if $X,Y\in V$, we must have
	\begin{align}
		\max_{i\in[n]}\left|X_i-Y_i\right|<\epsilon.
	\end{align}
	For any $c\in \sN^n$, denote
	\begin{align}
		V^c=V+v^c,\quad\text{where}\quad v^c_i=\left(c_i+\frac{1}{2}\right)\epsilon,\quad \forall i\in[n].
	\end{align}
	We may construct a set of such cosets of $V$ as follows:
	\begin{align}
		S=\left\{V^c|c\in C\right\},\quad\text{where}\quad
		C=\left\{c\in\mathbb{N}^n\middle|c_i\in\left[ \left\lceil\frac{1}{\epsilon}\right\rceil\right]\right\}.
	\end{align}
	There are several properties related to these constructions:
	\begin{itemize}
		\item For any $c\in\mathbb N^n$, if $X,Y\in V^c$, $\max_{i\in[n]}|X_i-Y_i|<\epsilon$.
		\item The union of sets in $S$ contains $[0,1]^n$
		\item Any two different sets in $S$ are disjoint.
	\end{itemize}
	The only loophole is that we have not considered points in boundaries $\partial V^c\ (V^c\in S)$. These boundaries can be decomposed into disjoint union of hyperplanes in $\mathbb R^n$. For each one of the hyperplanes, arbitrarily designate it to an adjacent $V^c\in S$. New $V^c$s are the union of the original one and the hyperplanes designated to it. Note that 
	\begin{align}
		\sum_{c\in C}\left[X\in V^c\right]=1.
	\end{align}
	Therefore,
	\begin{align}
		\sP\left[\max_{i\in[n]}|X_i-Y_i|\ge\epsilon\right]&\le 1-\sum_{c\in C}\sP\left[X\in V^c\right]\sP\left[Y\in V^c\right] \\
		&= 1-\sum_{c\in C}\sP\left[X\in V^c\right]^2\\
		&\le 1-\frac{1}{|C|}\left(\sum_{c\in C}\sP\left[X\in V^c\right]\right)^2 \\
		&= 1-\frac{1}{|C|}.
	\end{align}
	Since $\frac{1}{\epsilon}\ge 1$, we have
	\begin{align}
		|C|&=\left\lceil\frac{1}{\epsilon}\right\rceil^n<\left(1+\frac{1}{\epsilon}\right)^n\le\left(\frac{2}{\epsilon}\right)^n\quad\text{and}\quad
		\sP\left[\max_{i\in[n]}\left|X_i-Y_i\right|\geq\epsilon\right]\leq1-\left(\frac{\epsilon}{2}\right)^n.
	\end{align}
\end{proof}

\begin{lemma}\label{lem:meta-task-dis-bound}
	In a transformed BAMDP $\overline{M}^{+}$ with an offline multi-task dataset $\overline{\gD}^+$, for any meta-testing task $\kappa_{test}\sim p(\kappa)$, $\forall \delta \in (0, 1]$, with probability $1-\delta$, we have 
	\begin{align}
		&~\left\|\kappa_{i^*}-\kappa_{test}\right\|_{\infty} \\
		=&~\max\left(\left\|P^{\kappa_{i^*}}(s,a,s') - P^{\kappa_{test}}(s,a,s')\right\|_{\infty}, \left\|R^{\kappa_{i^*}}(s,a,r) - R^{\kappa_{test}}(s,a,r)\right\|_{\infty}\right) \\
		\le&~ 2\left(\frac{\log\left(\frac{1}{\delta}\right)}{\left|\gK_{train}\right|}\right)^{\frac{1}{\left|\gS\right|\left|\gA\right|\left(\left|\gS\right|+\left|\gR\right|\right)}},
	\end{align}
	where $\kappa_{i^*}\in\gK_{train}$ is the closest offline meta-training task to $\kappa_{test}$ (see Eq. (\ref{appendix-eq:closest-meta-train-task})), $\left\|\kappa_{i^*}-\kappa_{test}\right\|_{\infty}$ is the distance between $\kappa_{i^*}$ and $\kappa_{test}$ (see Eq. (\ref{appendix-eq:task-distance-linfty})), and $\gK_{train}$ is the i.i.d. offline meta-training tasks sampled from $p(\kappa)$ in $\overline{\gD}^+$.
\end{lemma}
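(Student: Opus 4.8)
The plan is to view a task as a point in a Euclidean box and reduce the claim to a nearest–neighbor estimate among i.i.d.\ samples, which is exactly what Lemma~\ref{lem:xy-distance-bound} is built to supply. Identify a task $\kappa$ with the vector obtained by stacking its transition parameters $(P^{\kappa}(s'|s,a))_{s,a,s'}$ and its reward parameters $(R^{\kappa}(r|s,a))_{s,a,r}$; this vector lies in $[0,1]^{n}$ with $n=|\gS||\gA|(|\gS|+|\gR|)$, and $\|\kappa_{i^*}-\kappa_{test}\|_{\infty}$ in the sense of \eqref{appendix-eq:task-distance-linfty} is precisely the coordinatewise sup-distance of these vectors. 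Since $\kappa_{test}$ and all tasks in $\gK_{train}$ are i.i.d.\ draws from $p(\kappa)$, writing $m:=|\gK_{train}|$ and $\epsilon:=2(\log(1/\delta)/m)^{1/n}$, the goal is $\sP[\min_{\kappa_j\in\gK_{train}}\|\kappa_j-\kappa_{test}\|_{\infty}>\epsilon]\le\delta$. If $\epsilon\ge 1$ there is nothing to prove, since every sup-distance on $[0,1]^{n}$ is at most $1$; so from now on $\epsilon<1$, which is equivalent to $m>2^{n}\log(1/\delta)$.

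First I would condition on $\kappa_{test}$ and exploit independence of the training tasks: with $\rho_{\epsilon}(\kappa):=\sP_{\kappa'\sim p}[\|\kappa'-\kappa\|_{\infty}\le\epsilon]$ one has $\sP[\|\kappa_{i^*}-\kappa_{test}\|_{\infty}>\epsilon\mid\kappa_{test}]=(1-\rho_{\epsilon}(\kappa_{test}))^{m}$. To lower bound $\rho_{\epsilon}$, I would reuse the cube partition from the proof of Lemma~\ref{lem:xy-distance-bound}: cover $[0,1]^{n}$ by the $|C|=\lceil1/\epsilon\rceil^{n}<(2/\epsilon)^{n}$ translates $\{V^{c}\}$ of the side-$\epsilon$ cube, noting that any two points sharing a cube $V^{c}$ lie within sup-distance $<\epsilon$. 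Hence $\rho_{\epsilon}(\kappa)\ge p_{c(\kappa)}$, where $c(\kappa)$ indexes the cube containing $\kappa$ and $p_{c}:=\sP_{\kappa'\sim p}[\kappa'\in V^{c}]$; averaging over $\kappa_{test}\sim p$ then gives $\sP[\|\kappa_{i^*}-\kappa_{test}\|_{\infty}>\epsilon]\le\sum_{c}p_{c}(1-p_{c})^{m}$. Intuitively: if $\kappa_{test}$ falls into a cube that also contains some training task, the nearest training task is already within $\epsilon$.

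What remains is to show $\sum_{c}p_{c}(1-p_{c})^{m}\le\delta$, and this is the main obstacle. The target rate suggests bounding this quantity by $(1-(\epsilon/2)^{n})^{m}\le\exp(-m(\epsilon/2)^{n})=\exp(-\log(1/\delta))=\delta$, using $(\epsilon/2)^{n}=\log(1/\delta)/m$; to make this rigorous one would want a per-cube lower bound on $p_{c}$, whereas Lemma~\ref{lem:xy-distance-bound} only delivers this on average (its proof gives $\sum_{c}p_{c}^{2}\ge1/|C|\ge(\epsilon/2)^{n}$). So the honest route combines the covering count $|C|<(2/\epsilon)^{n}=m/\log(1/\delta)$ with the $\epsilon<1$ clipping: split the occupied cubes into ``heavy'' ones (large $p_{c}$, where $(1-p_{c})^{m}\le e^{-mp_{c}}$ is tiny) and ``light'' ones (where $p_{c}(1-p_{c})^{m}\le p_{c}$ and there are at most $|C|$ of them), and optimize over the threshold. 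The delicate point is precisely the passage from the average control $\sum_{c}p_{c}^{2}\ge(\epsilon/2)^{n}$ to control of the higher moment $\sum_{c}p_{c}(1-p_{c})^{m}$: the crude bounds $\sum_{c}p_{c}(1-p_{c})^{m}\le|C|/(em)$ and $\le e^{-m/|C|}$ are only of order $1/\log(1/\delta)$, so both $|C|\lesssim m/\log(1/\delta)$ and $m>2^{n}\log(1/\delta)$ must be used together to drive the estimate down to $\delta$. Once that estimate is in hand, substituting $\epsilon=2(\log(1/\delta)/m)^{1/n}$ yields $\|\kappa_{i^*}-\kappa_{test}\|_{\infty}\le\epsilon$ with probability at least $1-\delta$, and the two displayed forms in the statement follow because $\|P^{\kappa_{i^*}}-P^{\kappa_{test}}\|_{\infty}$ and $\|R^{\kappa_{i^*}}-R^{\kappa_{test}}\|_{\infty}$ are the sup-norms of two sub-blocks of the stacked vector whose overall sup-norm is their maximum.
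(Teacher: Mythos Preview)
The paper's argument is much shorter than your outline: it applies Lemma~\ref{lem:xy-distance-bound} to each pair to get $\sP[\|\kappa_i-\kappa_{test}\|_\infty\ge\epsilon]\le 1-(\epsilon/2)^n$, then asserts
\[
\sP\Bigl[\min_{\kappa_i\in\gK_{train}}\|\kappa_i-\kappa_{test}\|_\infty\ge\epsilon\Bigr]=\prod_{\kappa_i\in\gK_{train}}\sP[\|\kappa_i-\kappa_{test}\|_\infty\ge\epsilon]\le\bigl(1-(\epsilon/2)^n\bigr)^{m},
\]
and solves $(1-(\epsilon/2)^n)^m\le e^{-m(\epsilon/2)^n}=\delta$ for $\epsilon$. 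Your route diverges precisely because you (correctly) notice that these $m$ events all share the random anchor $\kappa_{test}$ and are therefore not independent; Lemma~\ref{lem:xy-distance-bound} only supplies an \emph{unconditional} pairwise bound, which cannot simply be raised to the $m$-th power. Conditioning and reusing the cube partition instead lands you at $\sum_c p_c(1-p_c)^m$.

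The gap in your proposal is that you never close the estimate $\sum_c p_c(1-p_c)^m\le\delta$: you record that the crude bounds $|C|/(em)$ and $e^{-m/|C|}$ only reach order $1/\log(1/\delta)$ and then assert that the two controls ``must be used together'' without giving an actual argument. This obstacle is not an artifact of loose bounding on your side. For a two-point distribution with cube-masses $1-\alpha,\alpha$ and $\alpha<(\epsilon/2)^n$, one has $\sum_c p_c(1-p_c)^m\ge\alpha(1-\alpha)^m$, which exceeds $(1-(\epsilon/2)^n)^m$ once $m$ is large; and in that example your conditioning bound is exact for the true failure probability. So the paper's displayed product identity is itself unjustified (and the resulting inequality false in general), and the obstacle you have honestly isolated is genuine: neither your approach nor the paper's closes it without weakening the rate or adding hypotheses on $p(\kappa)$.
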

\begin{proof}
	From Lemma \ref{lem:xy-distance-bound}, we set $n=\left|\gS\right|\left|\gA\right|\left(\left|\gS\right|+\left|\gR\right|\right)$, then $\forall\epsilon\in(0,1], \forall\kappa_i\in\gK_{train}$,
	\begin{align}
		\sP\left[\left\|\kappa_i-\kappa_{test}\right\|_{\infty}\ge\epsilon\right]\le 1-\left(\frac{\epsilon}{2}\right)^{n}.
	\end{align}
	Hence
	\begin{align}
		\sP\left[\mathop{\arg\min}_{\kappa_i\in\gK_{train}}\left\|\kappa_i-\kappa_{test}\right\|_{\infty}\ge\epsilon\right]&=\prod_{\kappa_i\in\gK_{train}}\sP\left[\left\|\kappa_i-\kappa_{test}\right\|_{\infty} \ge \epsilon\right] \\
		&\le \left(1-\left(\frac{\epsilon}{2}\right)^{n}\right)^{\left|\gK_{train}\right|}.
	\end{align}
	Therefore, $\forall \delta \in (0, 1]$, with probability $1-\delta$,
	\begin{align}
		\mathop{\arg\min}_{\kappa_i\in\gK_{train}}\left\|\kappa_i-\kappa_{test}\right\|_{\infty} \le 2\left(\frac{\log\left(\frac{1}{\delta}\right)}{\left|\gK_{train}\right|}\right)^{\frac{1}{\left|\gS\right|\left|\gA\right|\left(\left|\gS\right|+\left|\gR\right|\right)}}.
	\end{align}
\end{proof}

\newpage
\section{Formal loss function of an Ensemble of Reward and Dynamics Models}\label{appendix-sec:mse-loss}

In Section \ref{quant}, we will optimize an ensemble of $L$ reward and dynamics models $\left\{r_{\phi_i}(s,a,z), p_{\psi_i}(s,a,z)\right\}_{i=1}^{L}$ on the multi-task dataset $\left\{\gD_z\right\}$ with an latent task embedding $z$. We will minimize the MSE loss function
\begin{align}
L_M(\phi,\psi,z) =&~ \mathop{\E}_{(s,a,r,s')\sim\gD_z}\left[\frac{1}{L}\sum_{i=1}^L\left(\left(r-r_{\phi}^i(s,a,z)\right)^2  + \left\|s'-p_{\psi}^i(s,a,z)\right\|_2^2\right)\right]
\end{align}
to train the parameters $\{\phi_i,\psi_i\}_{i=1}^{L}$ with the given latent task embedding $z$.

\section{Hyper-Parameter Settings}
\label{exp-app1}
\textbf{Environment Settings.} Table \ref{tab:env-hyper} shows hyper-parameter settings for the task sets used in our experiments. Most hyper-parameters are adopted from previous works \citep{li2020focal,mitchell2021offline}. For all task sets, 80\% of the tasks are meta-training tasks, and the remaining 20\% tasks are meta-testing tasks.

\begin{table}[H]
	\centering
	\caption{Environment parameter settings. }
	\begin{tabular}{l|c|c|c|c}
		\toprule
		Environment & Episode Length & \makecell[c]{\# of Adaptation\\Episodes} & \# of Tasks & \makecell[c]{\# of Trajectories\\per Task}  \\
		\midrule
		All Meta-World Envs&  500 &  10 &  50 & 45   \\
        Cheetah-Vel & 200 & 10 & 100  & 45\\
		Point-Robot & 20 & 20 & 100 & 45\\
		Point-Robot-Sparse & 20 & 20 & 100 & 45\\
		\bottomrule
	\end{tabular}
	\label{tab:env-hyper}
\end{table}

\textbf{IDAQ hyper-parameter settings.} Table \ref{sample-table} shows IDAQ's hyper-parameter settings. Most hyper-parameters are adopted from FOCAL \citep{li2020focal}. We set $n_{e}$ to 1 as the evaluation environments are all nearly deterministic.

\begin{table}[H]
	\centering
\caption{Detailed hyper-parameter settings for IDAQ.}
  \label{sample-table}
\begin{tabular}{c|c}
		\toprule
    {Hyper-Parameter}                                                                                          &{Hyper-Parameter Values}           \\   
		\midrule 
{batch size }    & {256 }                                                                                                                                  \\
{meta batch size}                                                                                      & {16}                                                                                                             \\
{learning rates for  dual critic }                                                                                      & {1e-4}                                                                                                             \\
{learning rates for  all other components }                                                                                      & {3e-4}                                                                                                             \\
{network structure for all components}                                                                     & { three fully connected layers with 200 units} \\

{optimizer}                                                                                          & {adam}                                                                                                             \\

{discount}                                                                                           & {0.99}                                                                                                             \\
{latent size}                                                                                           & {20}                                                                                                             \\
{reward scale}                                                                                       & {100 for point envs, 1 for all other envs}          \\
{$n_r$}                                                                                       & 1/2 of total adaptation episodes          \\
{$n_i$}                                                                                       & 1/2 of total adaptation episodes  \\
{$k$}                                                                                       & {10 for point envs, 20 for all other envs}  \\
{$L$}                                                                                       & {4}  \\
{$n_{e}$}                                                                                       &1    \\
		\bottomrule
\end{tabular}

\end{table}

\newpage

\section{Didactic Example on Distribution Shift}\label{sec:didactic-example}

To empirically demonstrate the distributional shift problem proposed in Section \ref{sec:theory}, we introduce Point-Robot, a simple 2D navigation task set commonly used in meta-RL \citep{rakelly2019efficient,zhang2021metacure}. Figure \ref{fig:dataset} illustrates the distribution mismatch between offline meta-training and online adaptation, as the dataset is collected by task-dependent behavior policies. As a consequence, directly performing adaptation with the online collected trajectories leads to poor adaptation performance (see FOCAL in Figure \ref{fig:example-performance}). Figure \ref{fig:uncertainty} shows that the \textbf{Prediction Variance} quantification cannot correctly detect OOD data, as there is a large error in uncertainty estimation. The \textbf{Return-based} quantification fixes this problem by greedily selecting trajectories. As shown in Figure \ref{fig:dataset}, at the end of the reference stage, the \textbf{Return-based} quantification selects the red trajectory as it has the highest return. After the reference stage, IDAQ iteratively optimizes the posterior belief to get the final policy.  As shown in Figure \ref{fig:example-performance}, IDAQ+Return achieves comparable performance to FOCAL with expert context and significantly outperforms FOCAL with online adaptation. 

To further investigate why the \textbf{Prediction Variance} quantification fails to identify in-distribution trajectories, we demonstrate the ensemble's uncertainty estimation on the first 10 trajectories collected in the reference stage. As shown in Figure \ref{uncer1}, the \textbf{Prediction Variance} quantification cannot accurately estimate the distance to the offline dataset, and fail to identify in-distribution data. On the other hand, the \textbf{Return-based} quantification can successfully select in-distribution data with its greedy selection mechanism.

Figure \ref{vvv} shows IDAQ+Return, FOCAL, and IDAQ+Prediction Variance's adaptation visualization (Episode 11-20) after the reference phase in adaptation (Episode 1-10). Results demonstrate that while the \textbf{Return-based} quantification is able to identify in-distribution data and achieve superior adaptation performance, FOCAL utilizes all the 10 trajectories for adaptation, and cannot correctly update task belief due to the distributional shift problem. The \textbf{Prediction Variance} quantification, as discussed above, fails to correctly select the in-distribution trajectory, and thus cannot successfully reach the goal.

\begin{figure*}[h]
	\centering
\subfigure[IDAQ+Return]{\includegraphics[width=0.3\linewidth]{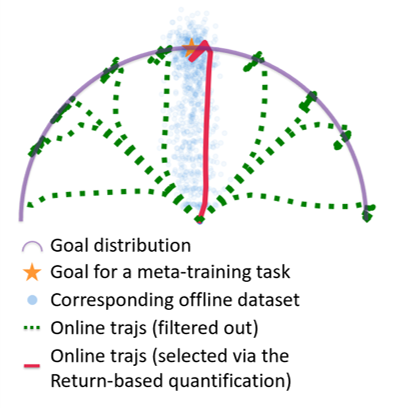}\label{fig:dataset}}
\subfigure[IDAQ+Prediction Variance]
{\includegraphics[width=0.3\linewidth]{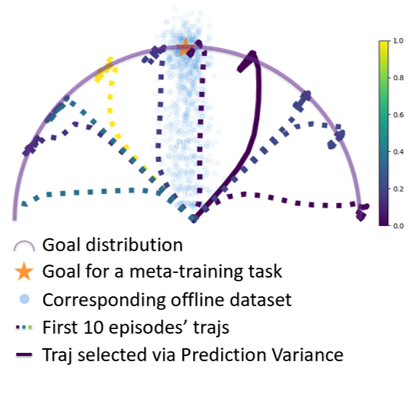}\label{fig:uncertainty}} 
\hspace{0.05in}
\subfigure[Learning Curve]{\includegraphics[width=0.35\linewidth]{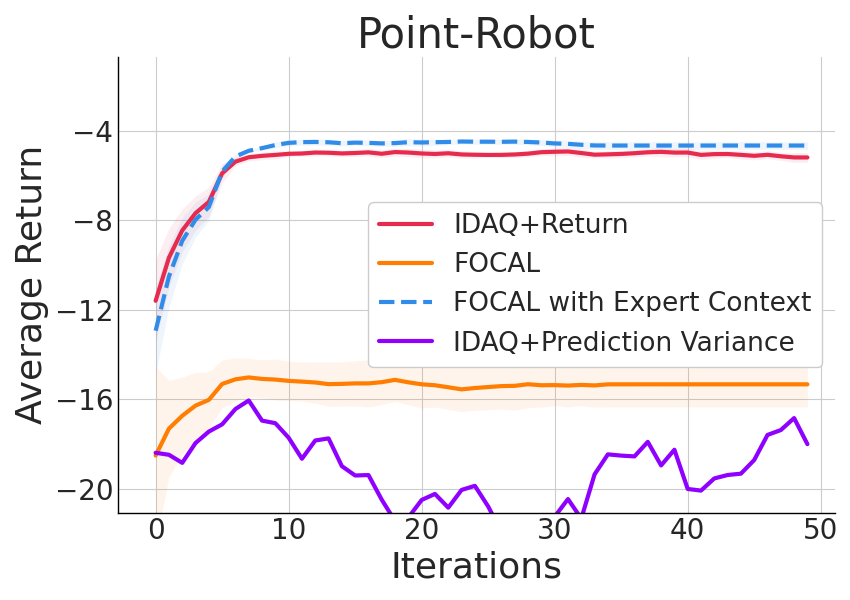}\label{fig:example-performance}} 
	\caption{(a) Illustration of data distribution mismatch between offline meta-training (blue) and IDAQ+\textbf{Return}'s online adaptation (green and red trajectories). (b) The agent fails to identify in-distribution trajectories via the \textbf{Prediction Variance} quantification. Trajectories are colored with corresponding normalized uncertainty estimation. (c) Adaptation performance of IDAQ+Return, FOCAL, FOCAL with expert context, and IDAQ+Prediction Variance.} 
\end{figure*}

\newpage

\begin{figure}[H]
	\centering
\subfigure[IDAQ+Return]{\includegraphics[width=\linewidth]{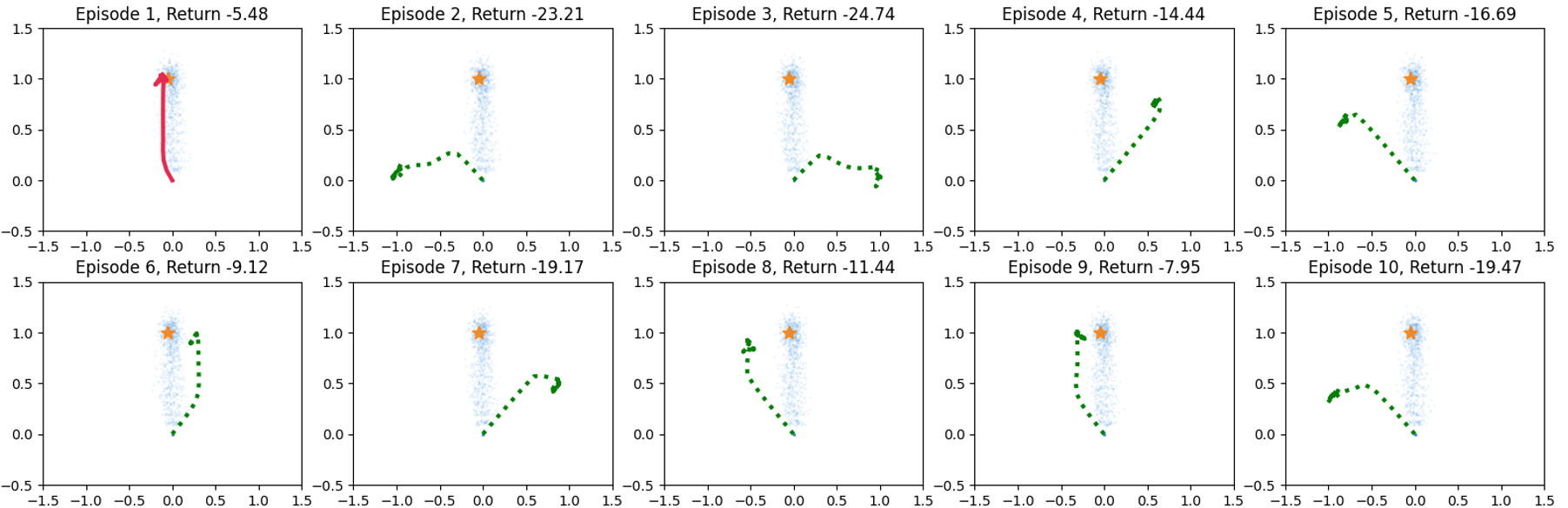}\label{v11}}
\subfigure[IDAQ+ Prediction Variance]
{\includegraphics[width=\linewidth]{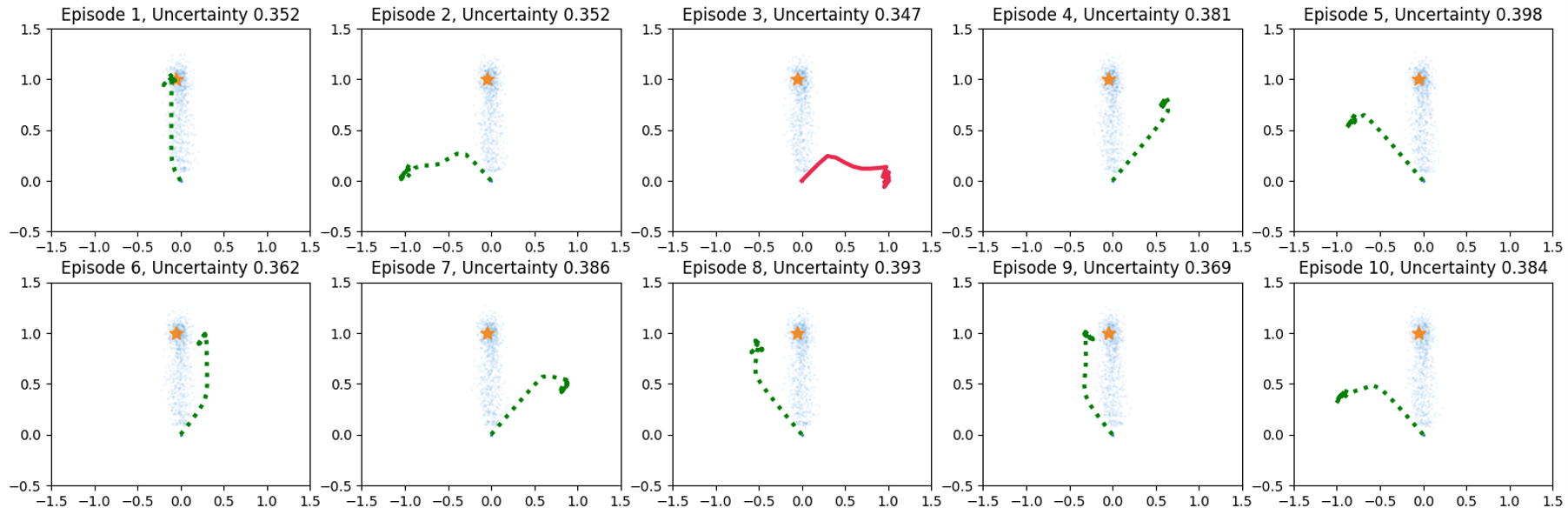}\label{v22}}
\subfigure{\includegraphics[width=\linewidth]{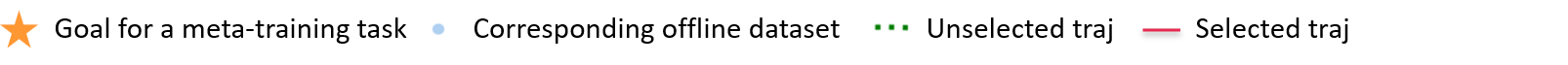}\label{v33}} 
	\caption{Visualization of the \textbf{Return-based} quantification and the \textbf{Prediction Variance} quantification's trajectory selection. (a) The \textbf{Return-based} quantification successfully selects the in-distribution trajectory. (b) The \textbf{Prediction Variance} quantification cannot identify in-distribution data, as its uncertainty estimation is not accurate.
 }
 \label{uncer1}
\end{figure}


\begin{figure}[H]
	\centering
\subfigure[IDAQ+Return]{\includegraphics[width=\linewidth]{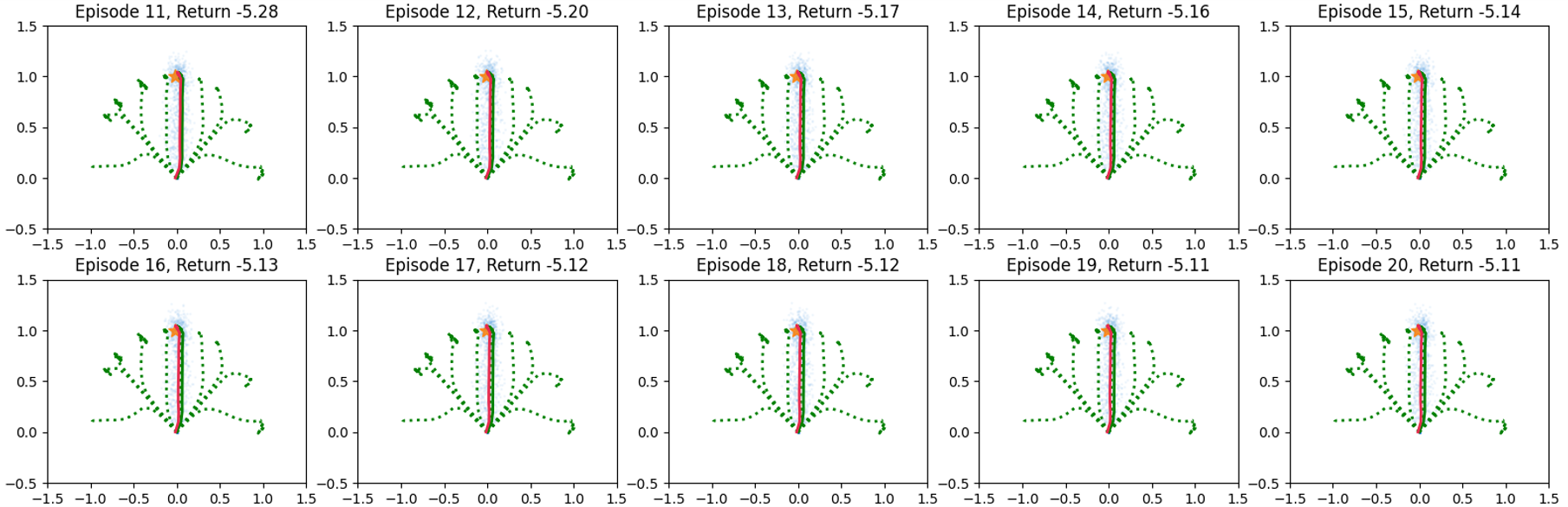}\label{v1}}
\subfigure[FOCAL]
{\includegraphics[width=\linewidth]{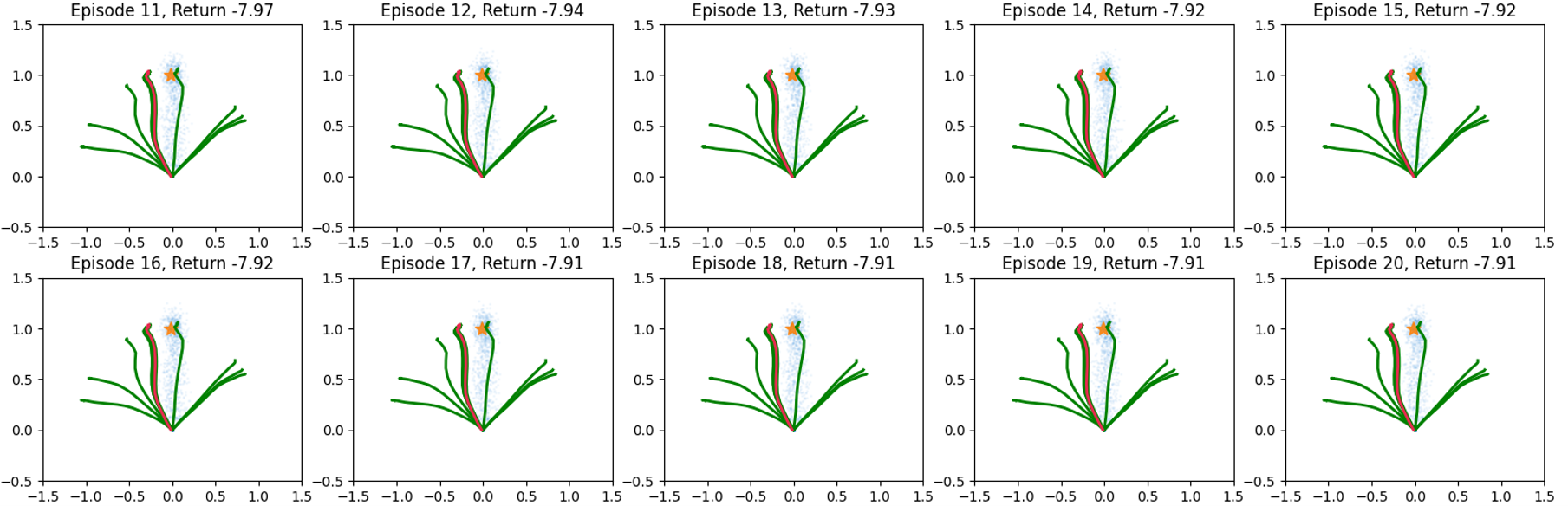}\label{v2}} 
\subfigure[IDAQ+Prediction Variance]
{\includegraphics[width=\linewidth]{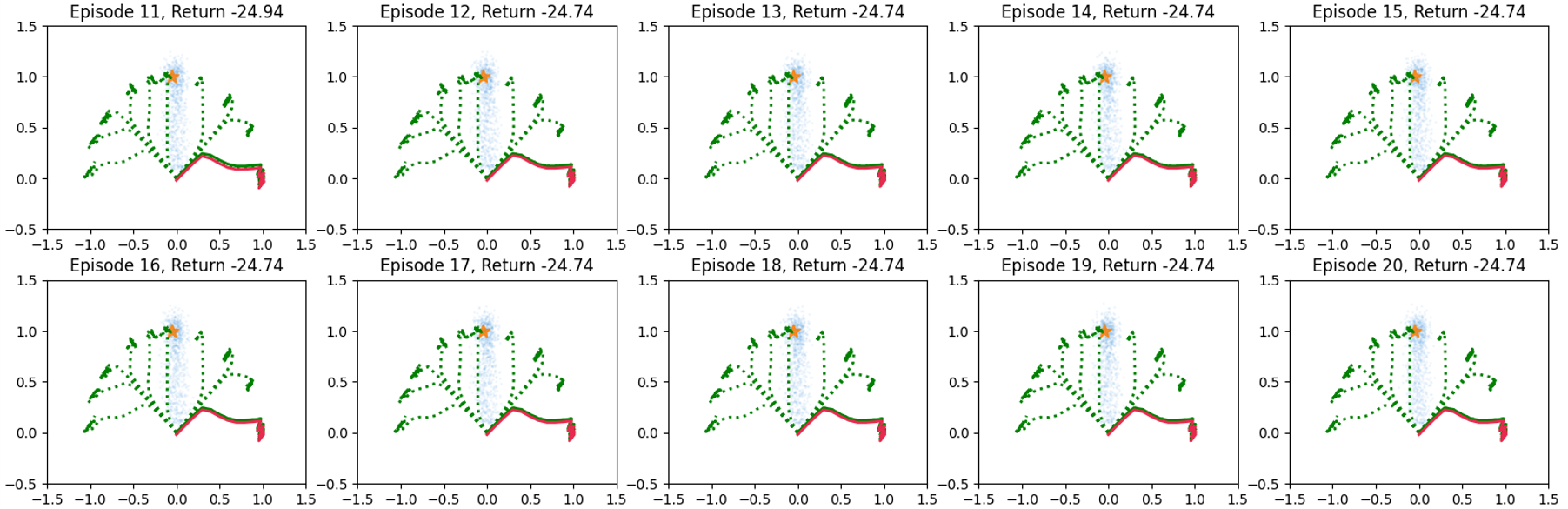}\label{v4}}
\subfigure{\includegraphics[width=\linewidth]{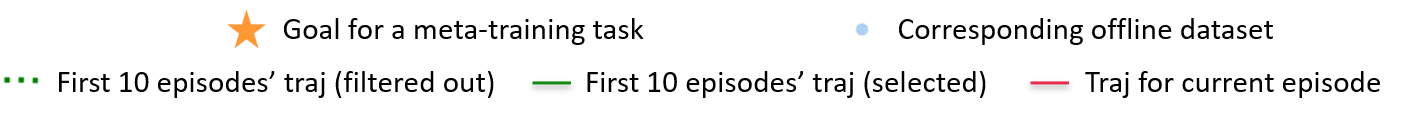}\label{v3}} 
	\caption{Visualization of IDAQ+Return, FOCAL and IDAQ+Prediction Variance's adaptation in Episode 11-20 on the Point-Robot environment. (a) The \textbf{Return-based} quantification successfully selects the in-distribution trajectory and keeps improving in adaptation. (b) FOCAL suffers from the out-of-distribution problem, and cannot correctly update posterior belief, leading to poor performance. (c) The \textbf{Prediction Variance} quantification method cannot identify in-distribution data, and also suffers from the distributional shift problem.
 }
 \label{vvv}
\end{figure}

\newpage

\section{Formal Definition of Minimal Distance Between Episode and Dataset}
\label{formal}
In Section \ref{equant} we demonstrate the minimal distance between episodes and datasets. To give a formal definition of this distance measure, we first transform an episode $\tau$ of length $H$ to a vector $v^{\tau} $ as:
\begin{equation}
    v^{\tau}=\langle s_0,a_0,r_0,s_1,a_1,r_1,...,s_{H-1},a_{H-1},r_{H-1}\rangle.
    \end{equation}
$v^{\tau}$ contains information about $\tau$'s reward and transition.
Then we define the normalized distance $d(\tau_1,\tau_2)$ between episode $\tau_1$ and episode $\tau_2$ as:
\begin{equation}
    d(\tau_1,\tau_2)=\frac{\left|v^{\tau_1}-v^{\tau_2}\right|_2}{\left|v^{\tau_2}\right|_2},
\end{equation}
where $|\cdot|_2$ is the L2 distance. $d(\tau_1,\tau_2)$ measures distance between $\tau_1$ and $\tau_2$ normalized by $v^{\tau_2}$. Finally, the minimal distance $d_{min}(\tau,B)$ between episode $\tau$ and dataset $\gD$ is defined as:
\begin{equation}
    d_{min}(\tau,\gD)=\min_{\tau' \in \gD}d(\tau,\tau').
\end{equation}
$d_{min}$ is calculated by finding the episode $\tau'$ in $\gD$ that has the minimal normalized distance to $\tau$.

\section{Additional Experiment Results}
\label{exp-app2}
\subsection{Additional Visualization Results}
\label{avr}
Figure \ref{ppee}, \ref{ppvv} and \ref{gg} show how the three quantifications behave on the tasks evaluated in Section \ref{equant}. Results demonstrate that the \textbf{Return-based} quantification achieves the best performance on identifying in-distribution episodes, which enables IDAQ to achieve superior adaptation performance. The \textbf{Prediction Error} quantification can identify in-distribution episodes in simple tasks like Reach-V2, Drawer-Close-V2, and Point-Robot, but fails in more complex tasks. The \textbf{Prediction Variance} quantification fails to identify in-distribution episodes in these eight tasks. These results correspond to the performance demonstrated in Table \ref{tab:51}.

\begin{figure}[H]
	\centering
\subfigure{\includegraphics[width=0.24\linewidth]{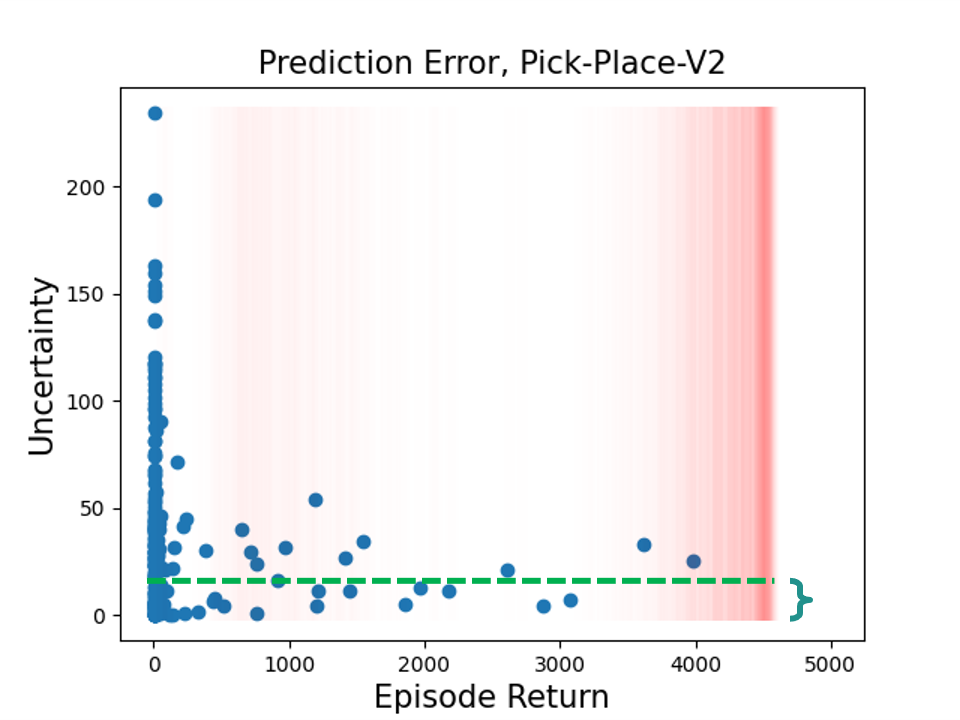}}
\subfigure{\includegraphics[width=0.24\linewidth]{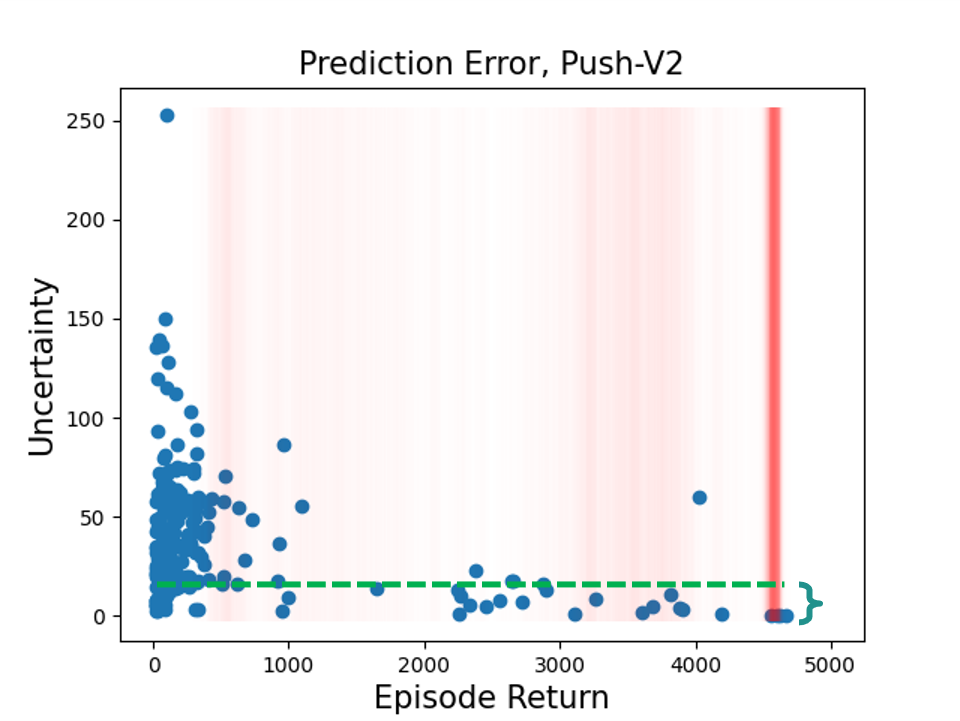}}
\subfigure{\includegraphics[width=0.24\linewidth]{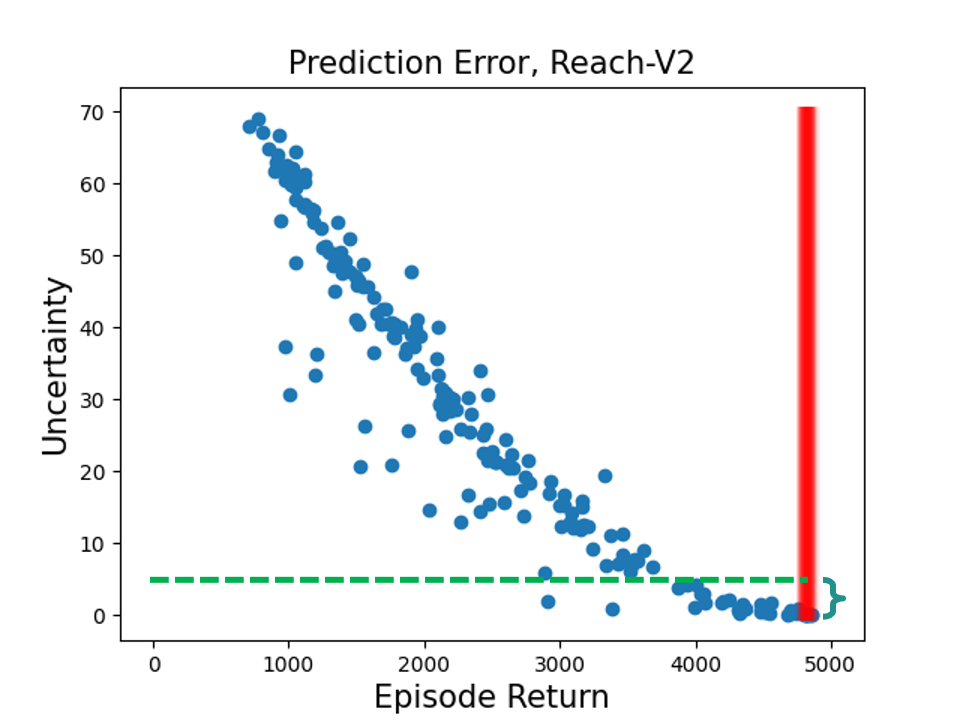}}
\subfigure{\includegraphics[width=0.24\linewidth]{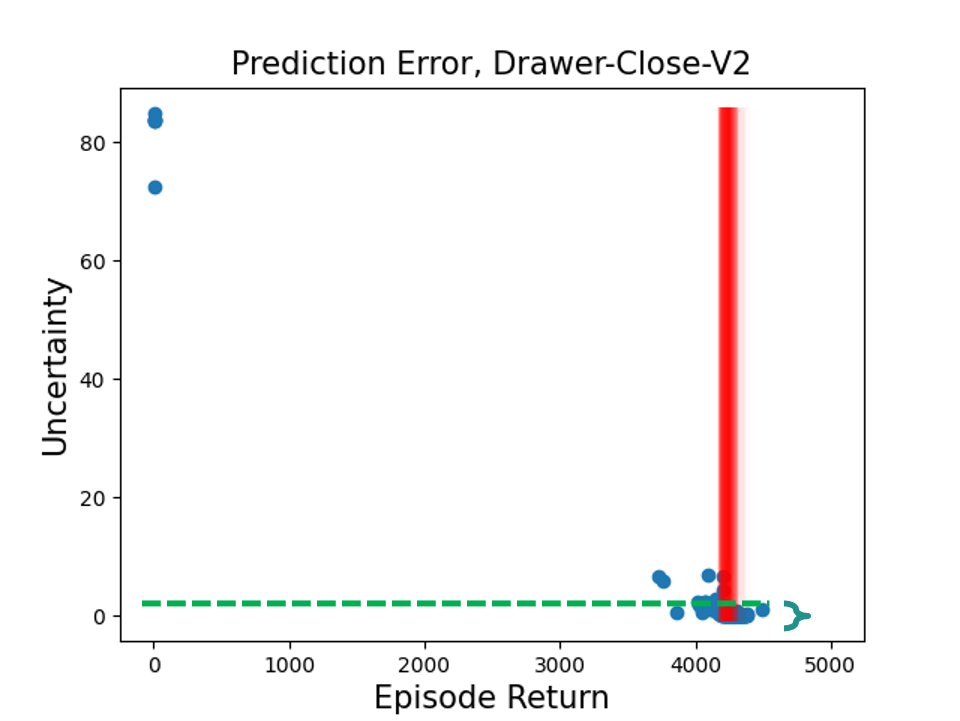}}
\subfigure{\includegraphics[width=0.24\linewidth]{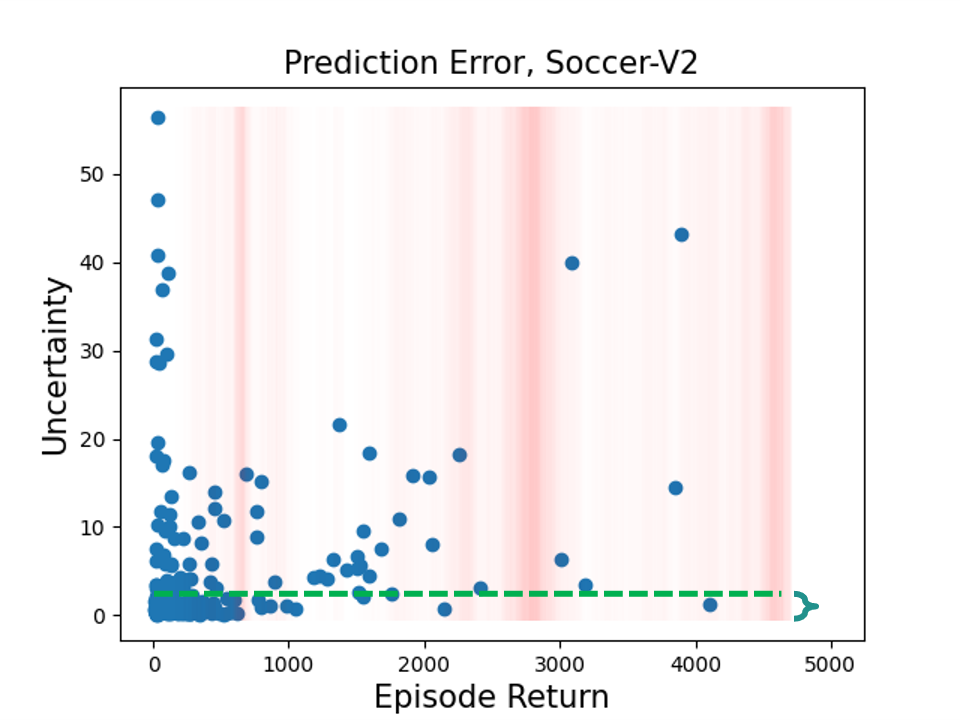}}
\subfigure{\includegraphics[width=0.24\linewidth]{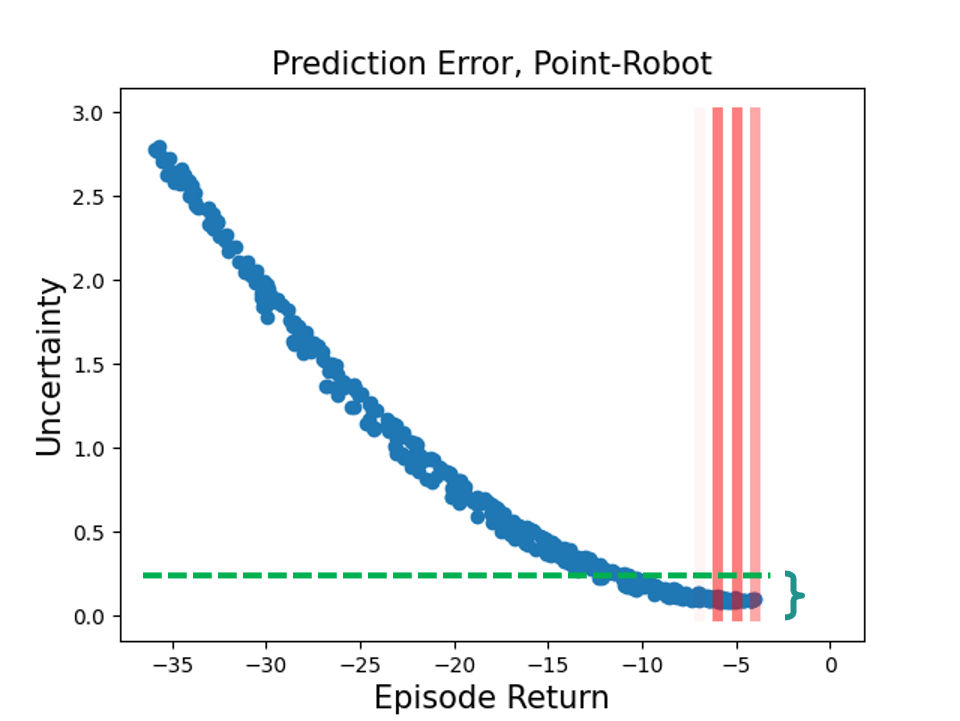}}
\subfigure{\includegraphics[width=0.24\linewidth]{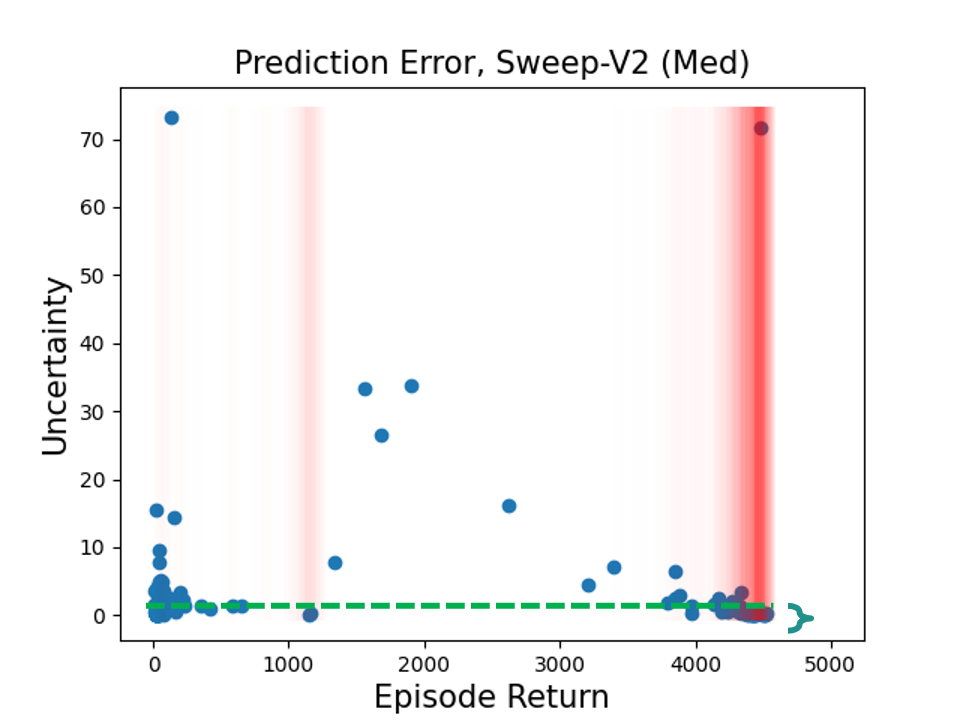}}
\subfigure{\includegraphics[width=0.24\linewidth]{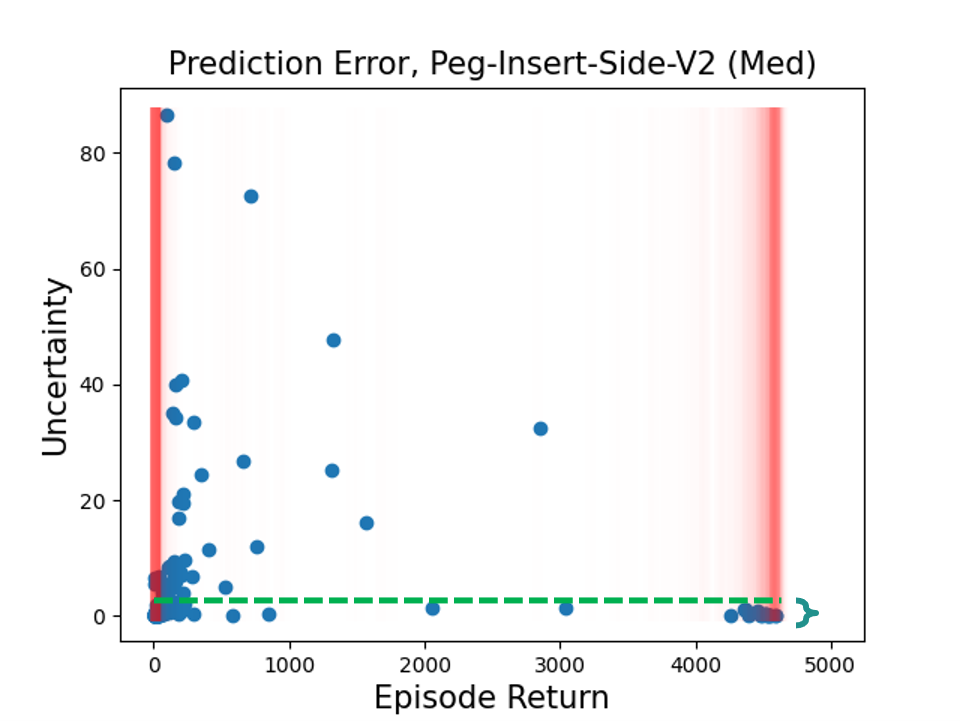}}
\subfigure{\includegraphics[width=0.7\linewidth]{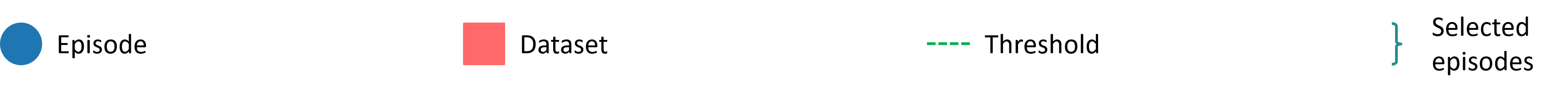}}
	\caption{Visualization of the \textbf{Prediction Error} quantification's behavior on various tasks. It cannot find a good reference threshold to distinguish in-distribution episodes.
 }
 \label{ppee}
\end{figure}

\begin{figure}[H]
	\centering
\subfigure{\includegraphics[width=0.24\linewidth]{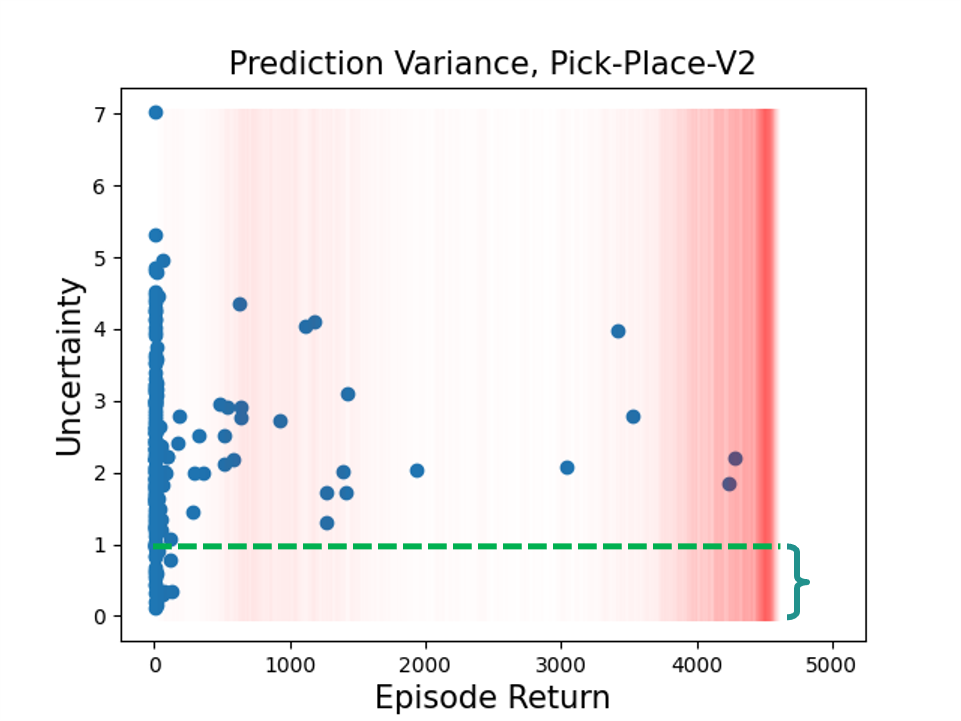}}
\subfigure{\includegraphics[width=0.24\linewidth]{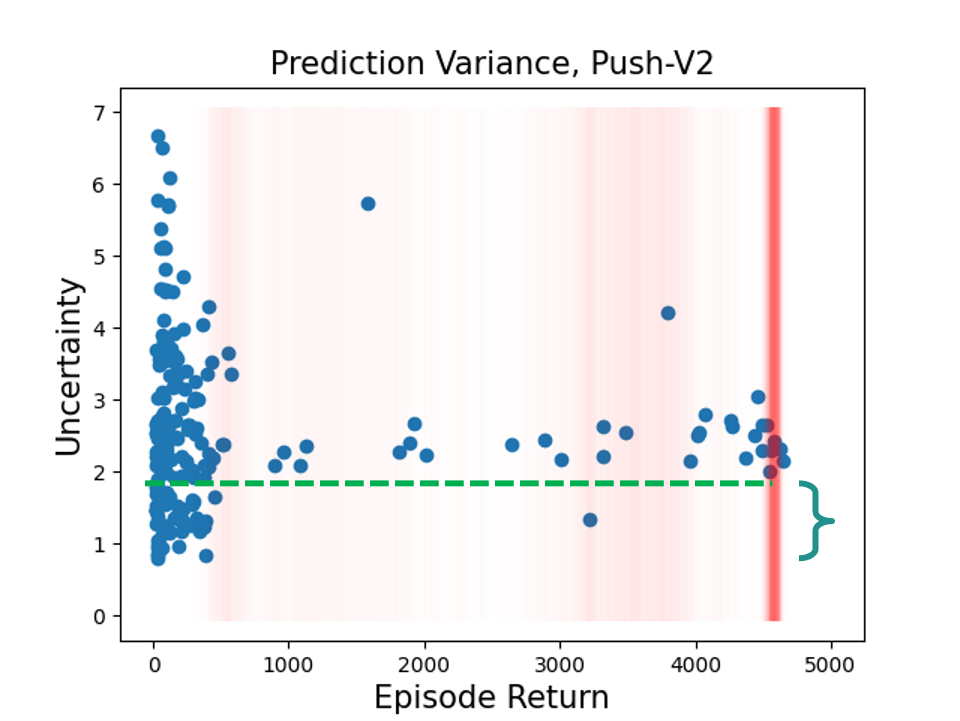}}
\subfigure{\includegraphics[width=0.24\linewidth]{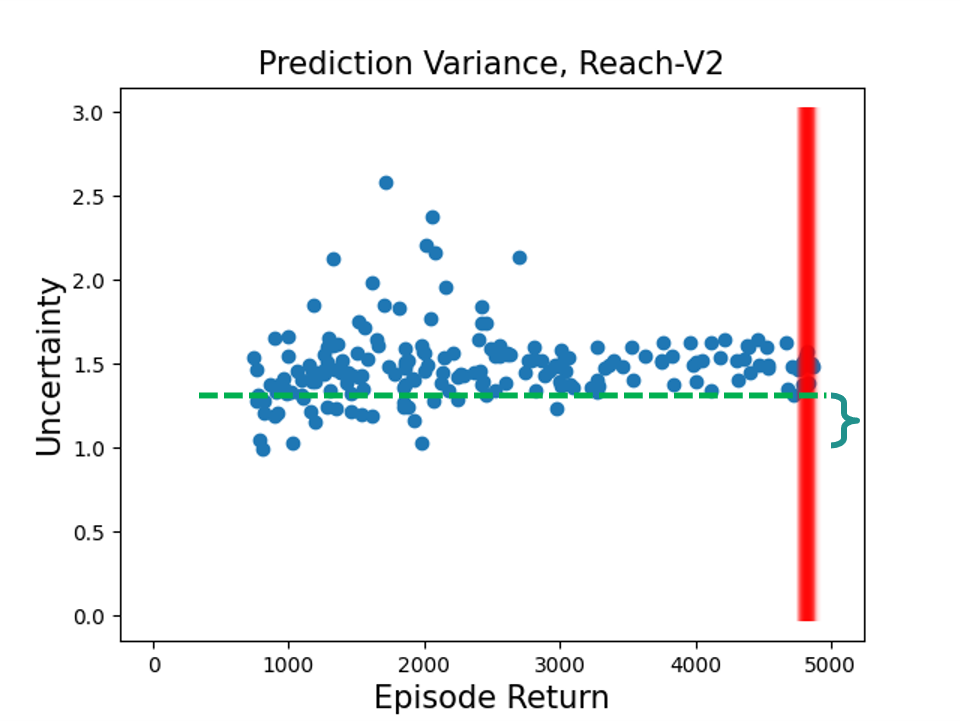}}
\subfigure{\includegraphics[width=0.24\linewidth]{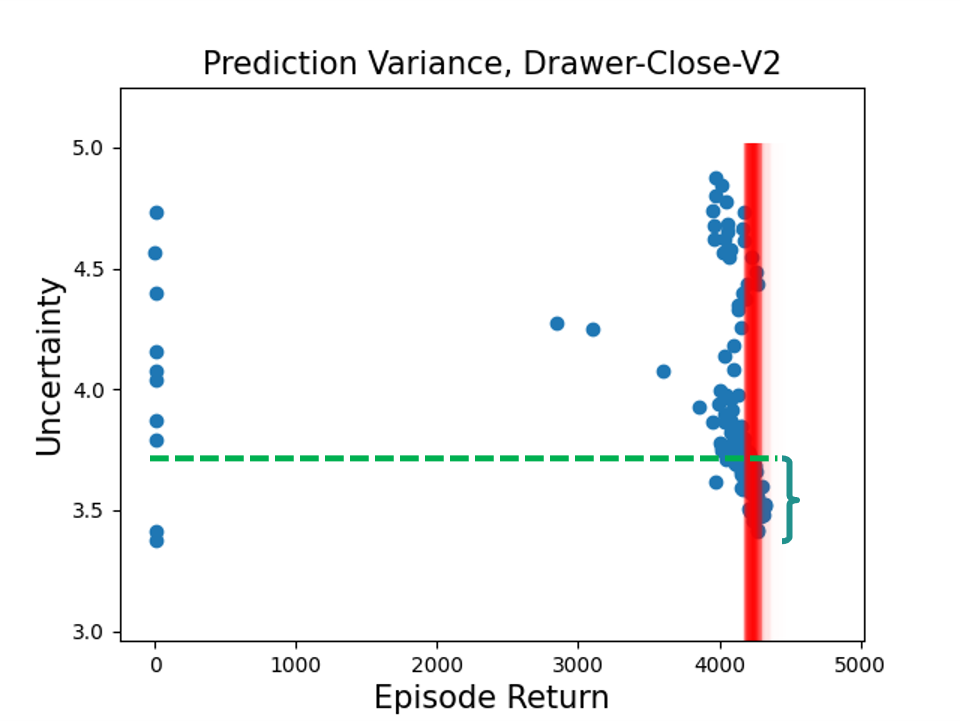}}
\subfigure{\includegraphics[width=0.24\linewidth]{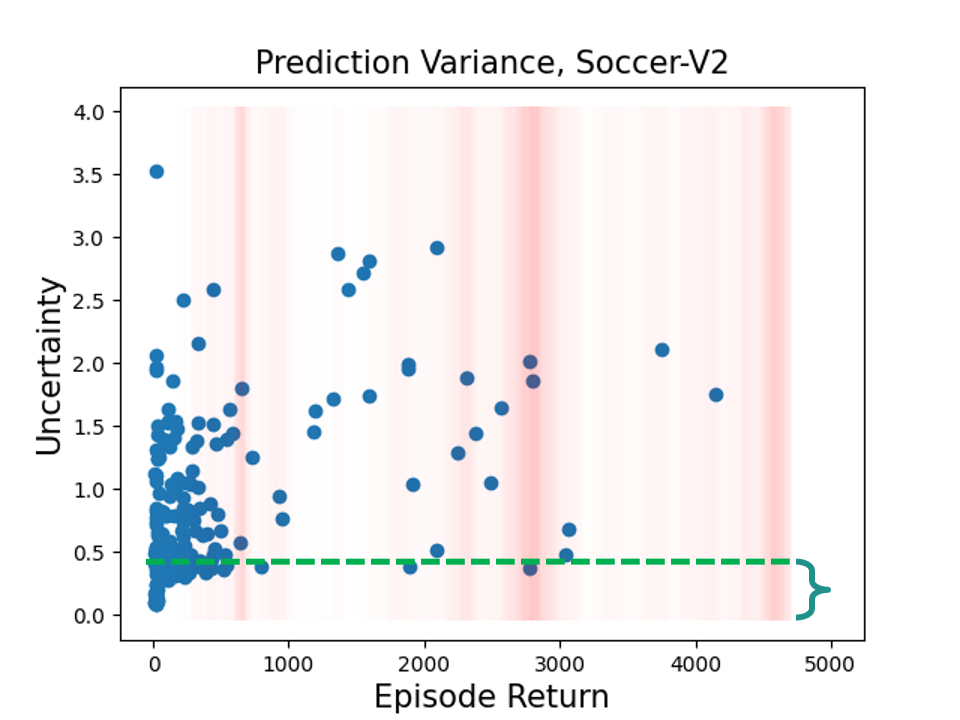}}
\subfigure{\includegraphics[width=0.24\linewidth]{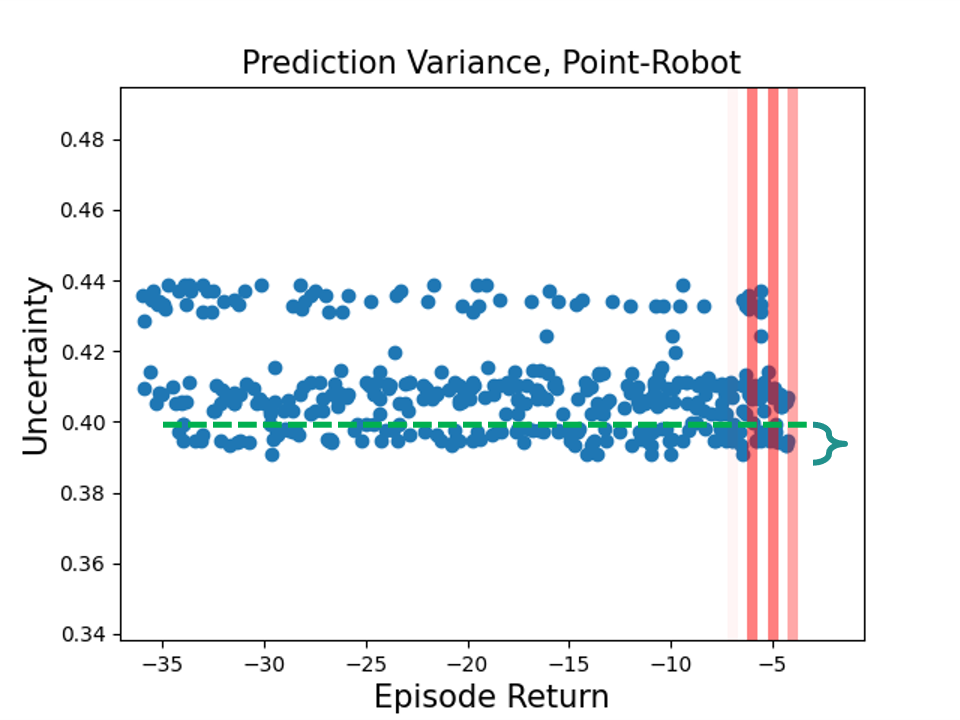}}
\subfigure{\includegraphics[width=0.24\linewidth]{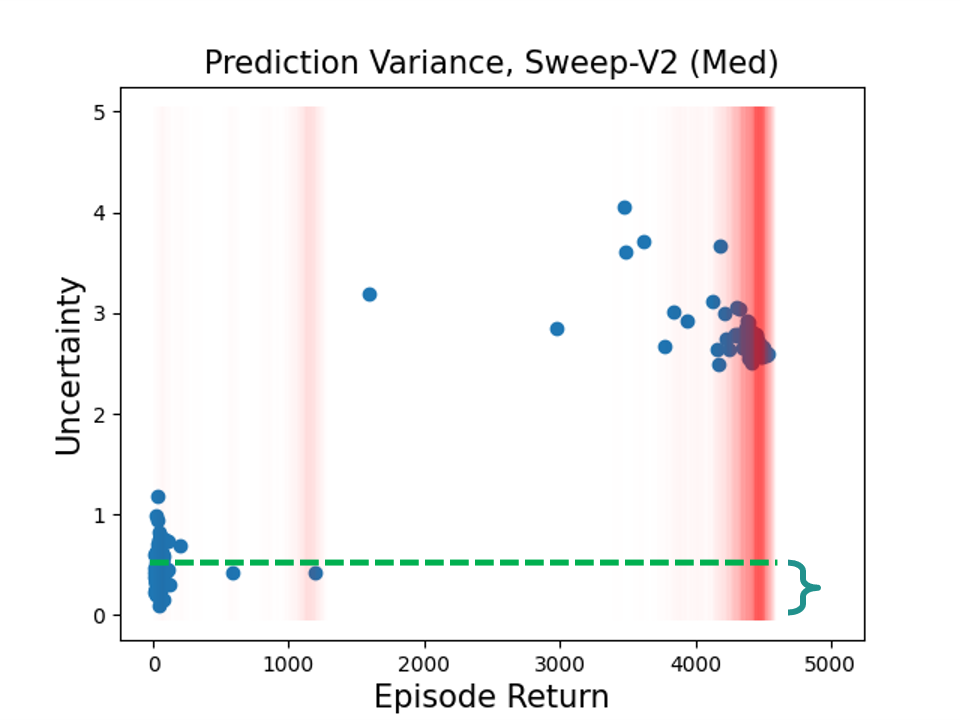}}
\subfigure{\includegraphics[width=0.24\linewidth]{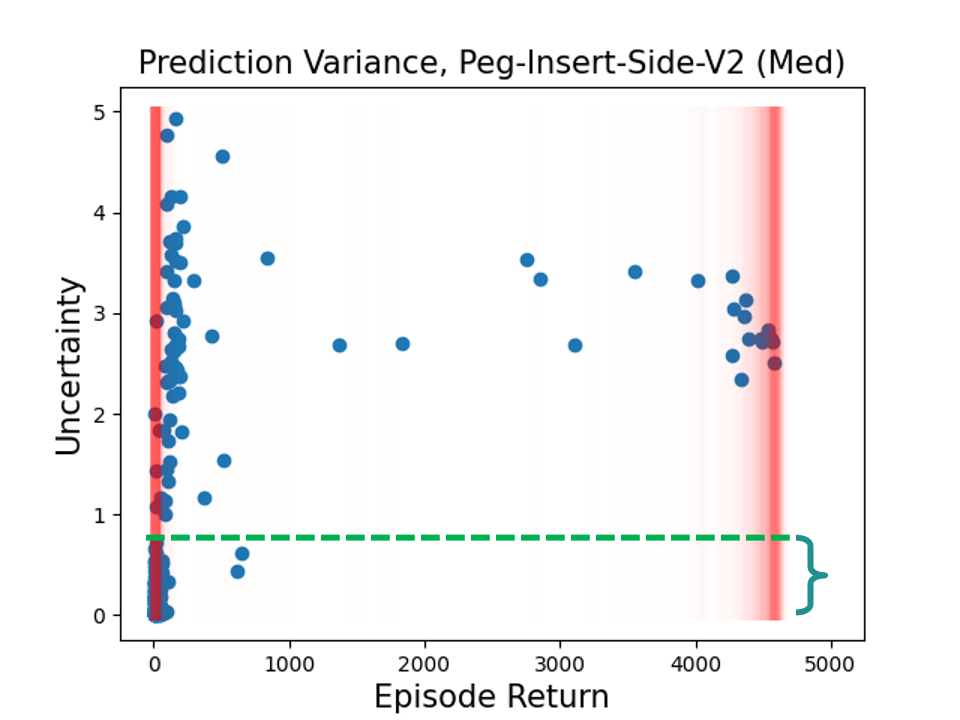}}
\subfigure{\includegraphics[width=0.7\linewidth]{figures/lll3.png}} 
	\caption{Visualization of the \textbf{Prediction Variance} quantification's behavior on various tasks. It may suffer from situations with higher prediction error and lower prediction variance in the medium or expert datasets.
 }
 \label{ppvv}
\end{figure}

\begin{figure}[H]
	\centering
\subfigure{\includegraphics[width=0.24\linewidth]{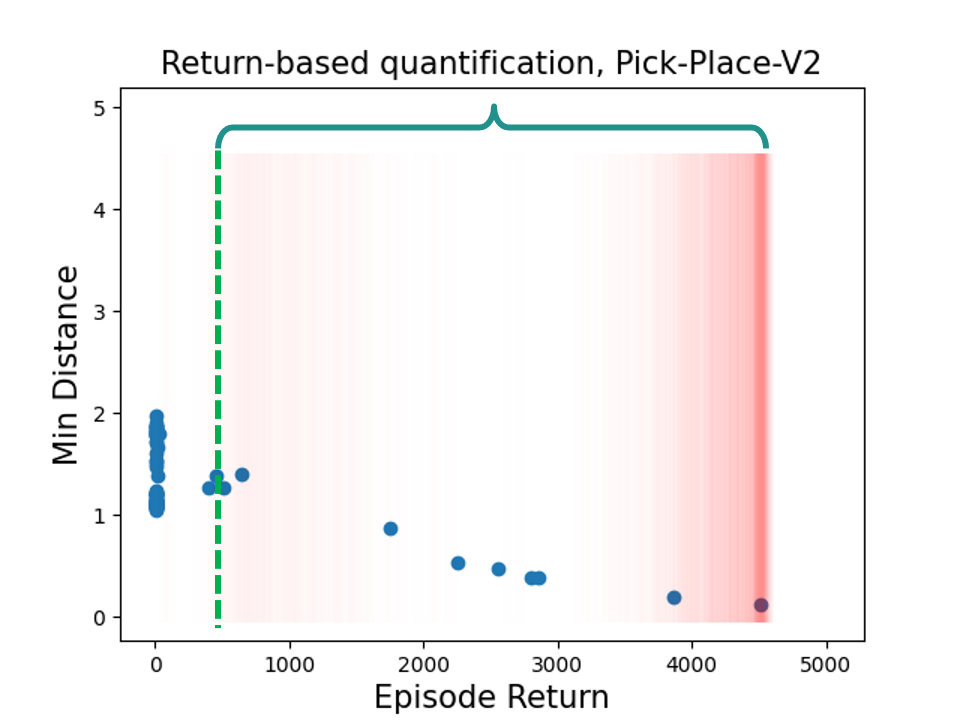}}
\subfigure{\includegraphics[width=0.24\linewidth]{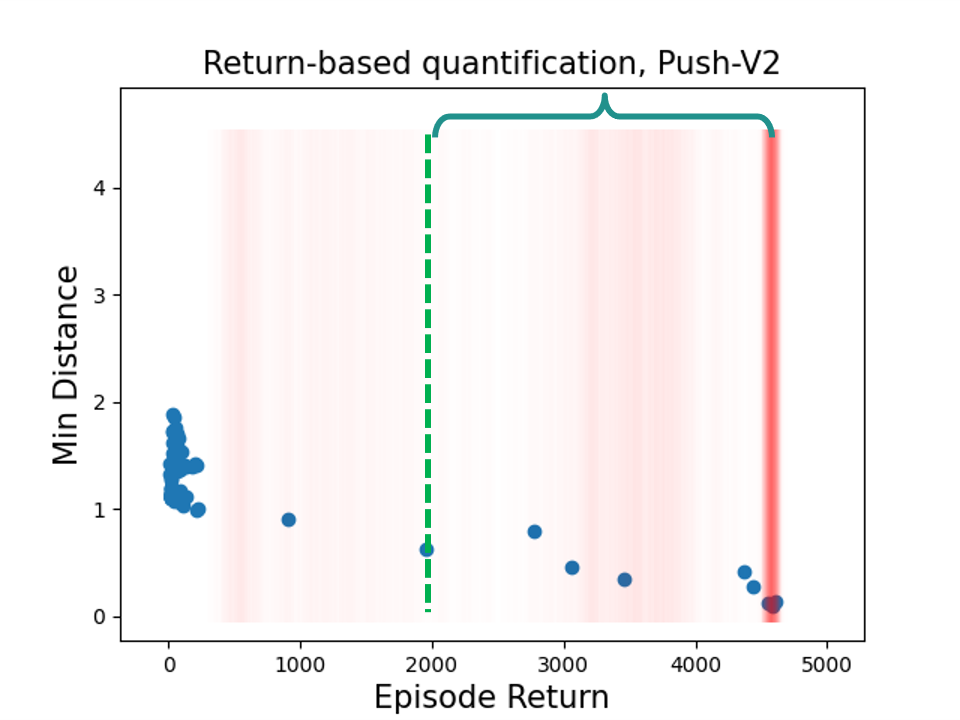}}
\subfigure{\includegraphics[width=0.24\linewidth]{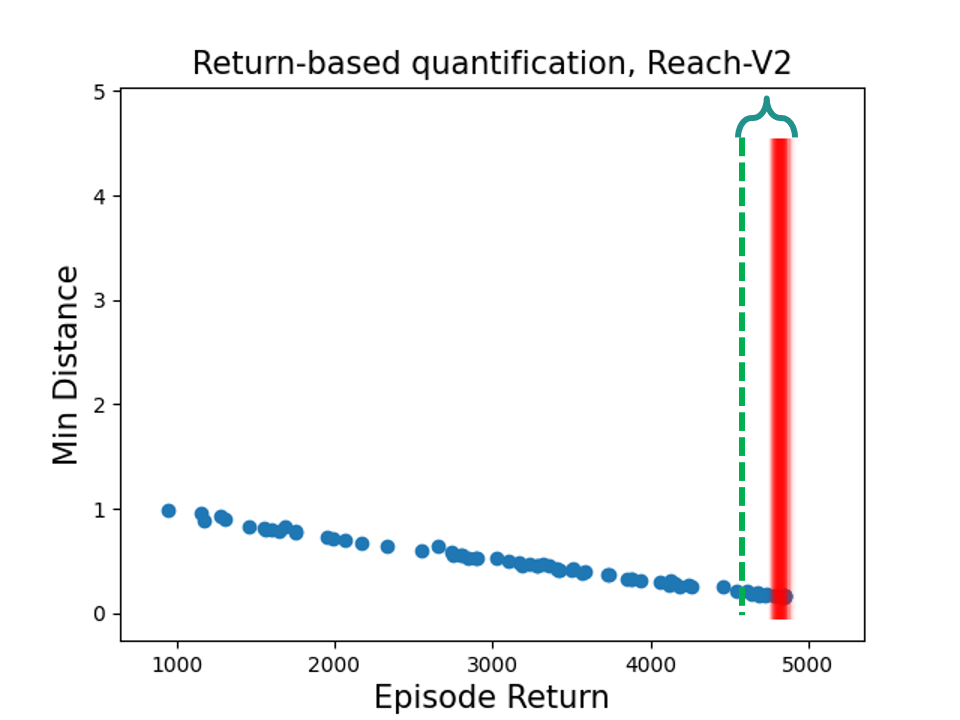}}
\subfigure{\includegraphics[width=0.24\linewidth]{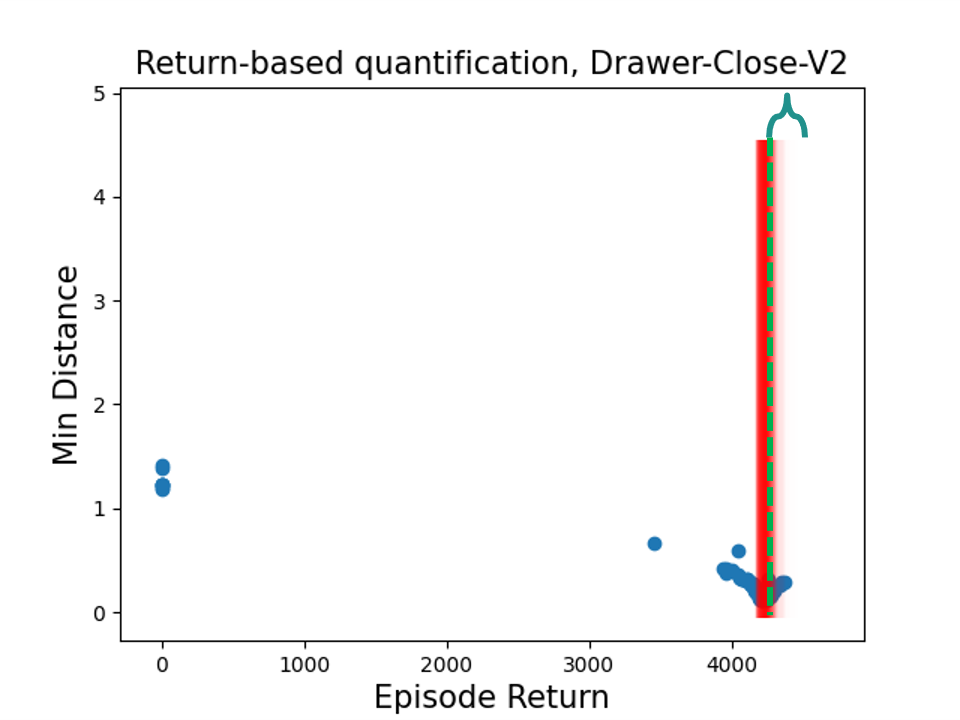}}
\subfigure{\includegraphics[width=0.24\linewidth]{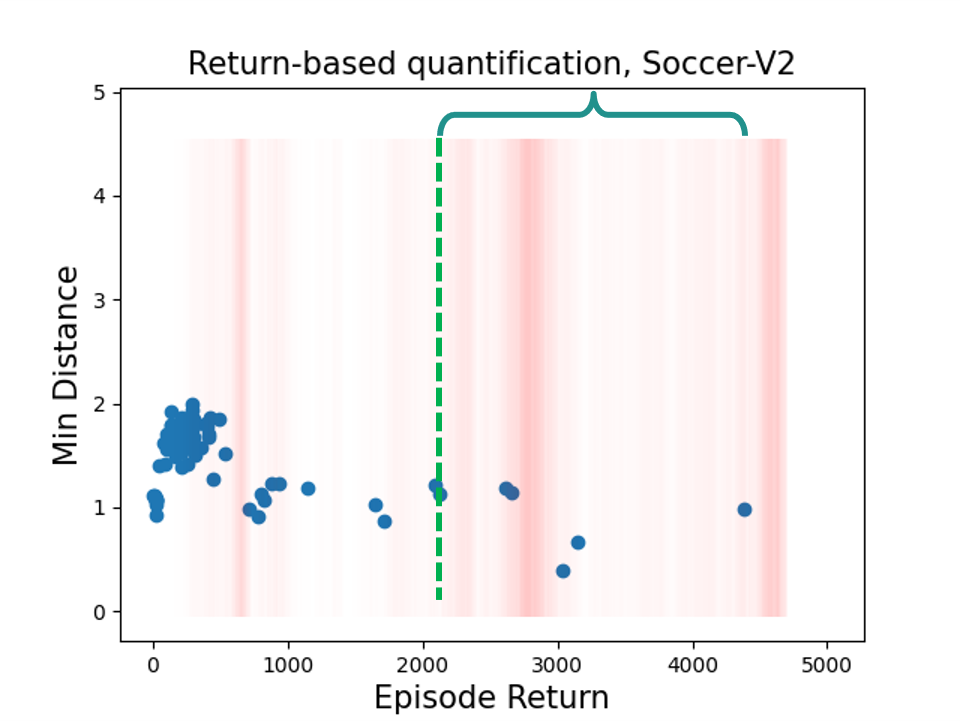}}
\subfigure{\includegraphics[width=0.24\linewidth]{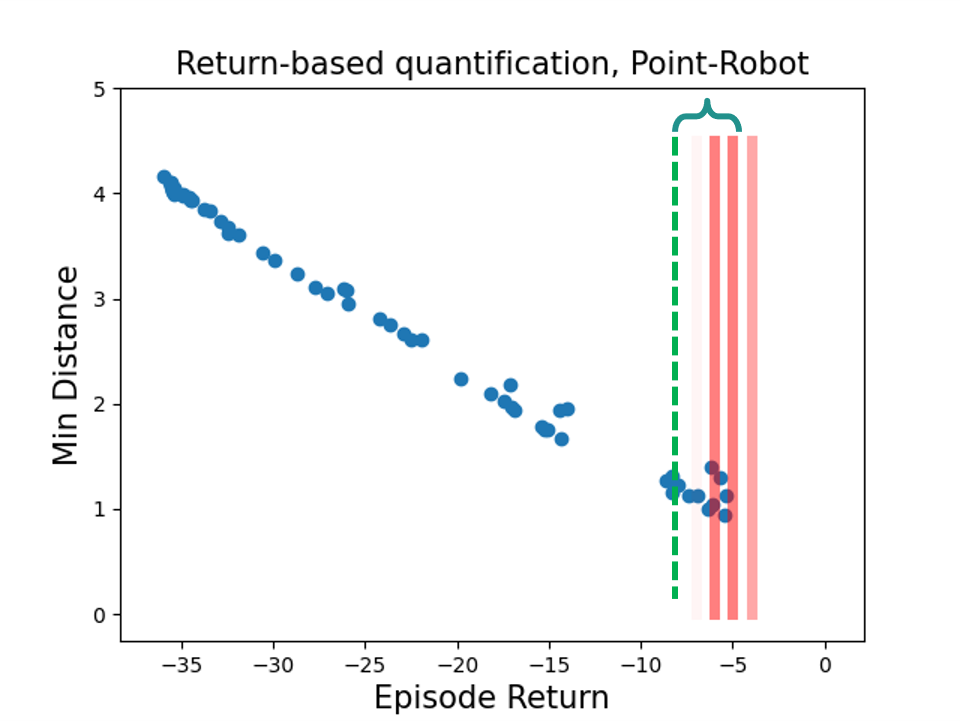}}
\subfigure{\includegraphics[width=0.24\linewidth]{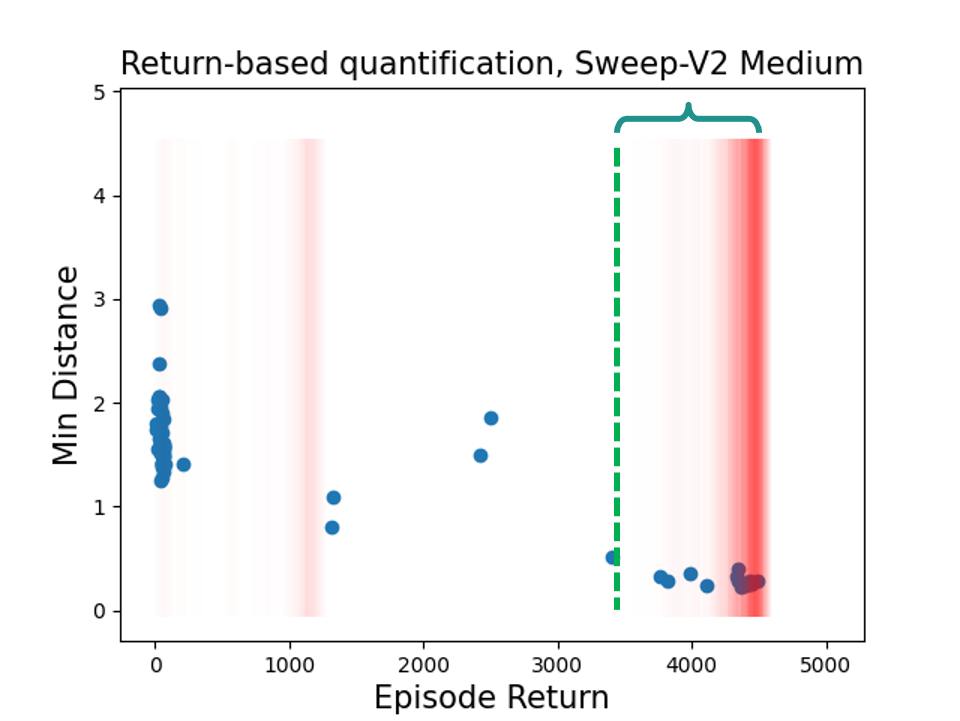}}
\subfigure{\includegraphics[width=0.24\linewidth]{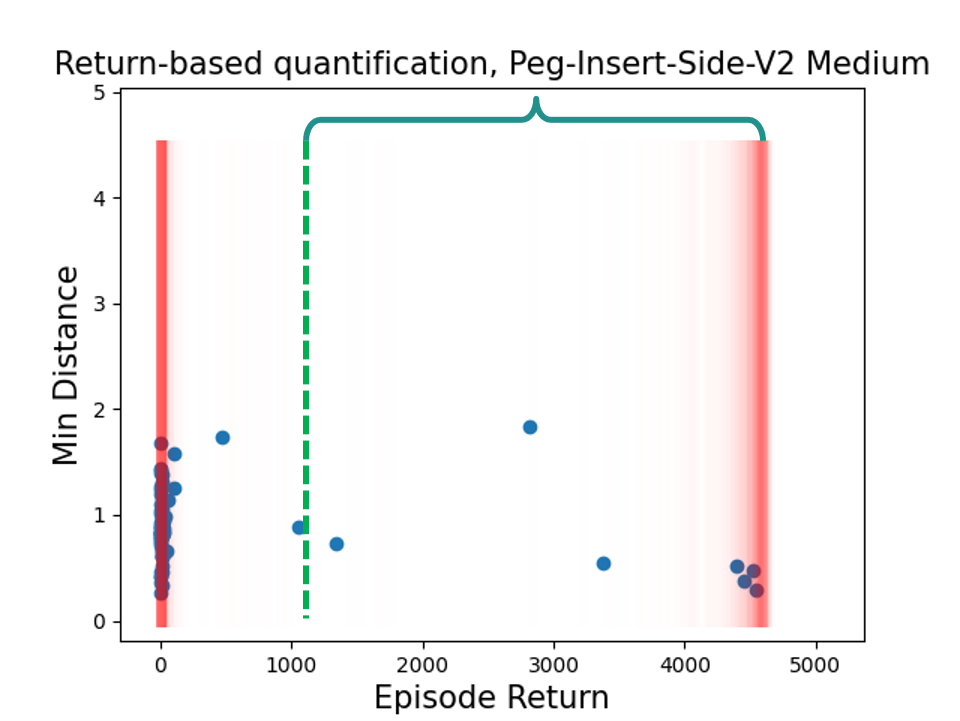}}
\subfigure{\includegraphics[width=0.7\linewidth]{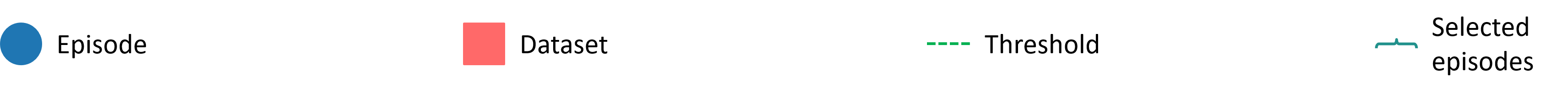}} 
	\caption{Visualization of the \textbf{Return-based} quantification's behavior on various tasks. It can successfully identify in-distribution episodes.
 }
 \label{gg}
\end{figure}



\subsection{Detailed Algorithm Performance on All Tasks}
\label{ss2}
Table \ref{tab:full}, Table \ref{tab:full2}, Table \ref{tab:full3} and Table \ref{tab:full4} show baselines' online adaptation and offline performance on all 50 Meta-World ML1 task sets and MuJoCo tasks, respectively. IDAQ significantly outperforms baselines with online adaptation, and achieves better or comparable performance to baselines with offline adaptation.

\begin{table}
	\centering
	\caption{Comparison between FOCAL, MACAW, and BOReL with online adaptation and IDAQ on Meta-World.}
	\begin{tabular}{l|c|c|c|c}
		\toprule
		Environment & IDAQ & FOCAL & MACAW &BOReL \\
		\midrule
   {Coffee-Push}&\textbf{1.26}$~\pm~$0.13&0.66$~\pm~$0.07&0.01$~\pm~$0.01&0.00$~\pm~$0.00\\
   	{Faucet-Close} &  \textbf{1.12}$~\pm~$0.01 &  1.06$~\pm~$0.02 &  0.07$~\pm~$0.01 &0.13$~\pm~$0.03 \\
   {Door-Unlock} & \textbf{1.11}$~\pm~$0.02 &  0.97$~\pm~$0.03 &  0.11$~\pm~$0.01 &0.13$~\pm~$0.03 \\
	
   {Plate-Slide-Side}&\textbf{1.07}$~\pm~$0.08 &0.70$~\pm~$0.14&0.00$~\pm~$0.00&0.00$~\pm~$0.00\\  
   {Faucet-Open} & \textbf{1.05}$~\pm~$0.02 &  1.01$~\pm~$0.02 &  0.08$~\pm~$0.04 &0.12$~\pm~$0.05 \\
		{Button-Press-Wall} &  \textbf{1.04}$~\pm~$0.04 &  0.99$~\pm~$0.06 &  0.02$~\pm~$0.00&0.01$~\pm~$0.00  \\
   {Plate-Slide}& \textbf{1.01}$~\pm~$0.03 &0.83$~\pm~$0.09 &0.01$~\pm~$0.00 &0.01$~\pm~$0.00   \\ 
   {Door-Close} & \textbf{0.99}$~\pm~$0.00 &  0.97$~\pm~$0.01 &  0.00$~\pm~$0.00 &0.37$~\pm~$0.19 \\
   {Drawer-Close}&\textbf{0.99}$~\pm~$0.02 &\textbf{0.96}$~\pm~$0.04 &0.53$~\pm~$0.50 &0.00$~\pm~$0.00  \\  
   {Plate-Slide-Back-Side}& \textbf{0.97}$~\pm~$0.02 &  0.77$~\pm~$0.20 &  0.02$~\pm~$0.01 &0.01$~\pm~$0.00\\
   {Door-Lock} & \textbf{0.97}$~\pm~$0.01 &  0.90$~\pm~$0.02 &  0.25$~\pm~$0.11 &0.14$~\pm~$0.00  \\
   {Window-Open}&\textbf{0.96}$~\pm~$0.02 &0.81$~\pm~$0.07 &0.15$~\pm~$0.11&0.03$~\pm~$0.00  \\  
   
   {Door-Open} & \textbf{0.96}$~\pm~$0.02 &  0.78$~\pm~$0.13 &  0.06$~\pm~$0.01&0.12$~\pm~$0.01  \\
   {Plate-Slide-Back} & \textbf{0.96}$~\pm~$0.02 &  0.58$~\pm~$0.06 &  0.21$~\pm~$0.17 &0.01$~\pm~$0.00\\
   {Window-Close}&\textbf{0.94}$~\pm~$0.01 &0.79$~\pm~$0.01 &0.54$~\pm~$0.44 &0.03$~\pm~$0.00  \\
   {Reach-Wall}&\textbf{0.93}$~\pm~$0.05 &0.53$~\pm~$0.18 &0.82$~\pm~$0.02 &0.06$~\pm~$0.00    \\  
   Dial-Turn &  \textbf{0.91}$~\pm~$0.05 &  0.84$~\pm~$0.09 &  0.00$~\pm~$0.00  &0.00$~\pm~$0.00 \\
   {Handle-Press-Side} & \textbf{0.91}$~\pm~$0.02 &  0.79$~\pm~$0.10 &  0.51$~\pm~$0.41 &0.02$~\pm~$0.01  \\
   {Handle-Pull}&\textbf{0.90}$~\pm~$0.02 &0.67$~\pm~$0.03 &0.00$~\pm~$0.00 &0.00$~\pm~$0.00  \\ 
   {Handle-Press} & \textbf{0.88}$~\pm~$0.05 &  \textbf{0.87}$~\pm~$0.02 &  0.28$~\pm~$0.10 &0.01$~\pm~$0.00\\
   {Reach}&\textbf{0.85}$~\pm~$0.03 &0.62$~\pm~$0.05 &0.63$~\pm~$0.04 &0.04$~\pm~$0.01  \\ 
   Lever-Pull&\textbf{0.85}$~\pm~$0.02 &0.73$~\pm~$0.07 &0.20$~\pm~$0.16 &0.05$~\pm~$0.00  \\   
   {Hammer} & \textbf{0.84}$~\pm~$0.06 &  0.59$~\pm~$0.07 &  0.10$~\pm~$0.01  &0.09$~\pm~$0.01 \\
   Drawer-Open & \textbf{0.82}$~\pm~$0.06 &  0.64$~\pm~$0.10 &  0.11$~\pm~$0.02 &0.10$~\pm~$0.00 \\
   Sweep&\textbf{0.77}$~\pm~$0.04 &0.32$~\pm~$0.08 &0.20$~\pm~$0.20 &0.00$~\pm~$0.00  \\
		{Button-Press} &  \textbf{0.74}$~\pm~$0.08 &  \textbf{0.68}$~\pm~$0.14 &  0.02$~\pm~$0.01 &0.01$~\pm~$0.01\\ 
		{Stick-Push} &  \textbf{0.73}$~\pm~$0.09 &  0.46$~\pm~$0.15 &  0.17$~\pm~$0.17 &0.00$~\pm~$0.00\\
		{Coffee-Button} &  \textbf{0.73}$~\pm~$0.14 &  \textbf{0.66}$~\pm~$0.16 &  0.15$~\pm~$0.13&0.02$~\pm~$0.00  \\
   Push-Wall&\textbf{0.71}$~\pm~$0.15 &0.43$~\pm~$0.06 &0.23$~\pm~$0.18  &0.00$~\pm~$0.00  \\
   {Shelf-Place}&\textbf{0.70}$~\pm~$0.18 &0.32$~\pm~$0.11 &0.01$~\pm~$0.01 &0.00$~\pm~$0.00  \\   
   {Basketball} & \textbf{0.64}$~\pm~$0.15 &  0.41$~\pm~$0.24 &  0.00$~\pm~$0.00  &0.00$~\pm~$0.00\\
		{Hand-Insert} &  \textbf{0.63}$~\pm~$0.04 &  0.29$~\pm~$0.07 &  0.02$~\pm~$0.01&0.00$~\pm~$0.00  \\
		{Sweep-Into} &  \textbf{0.61}$~\pm~$0.06 &  0.33$~\pm~$0.05 &  0.00$~\pm~$0.00 &0.01$~\pm~$0.00\\
    Button-Press-Topdown& \textbf{0.57}$~\pm~$0.11 &  0.45$~\pm~$0.06 &  0.38$~\pm~$0.36 &0.02$~\pm~$0.02\\
   {Peg-Unplug-Side} & \textbf{0.56}$~\pm~$0.07 &  0.19$~\pm~$0.09 &  0.00$~\pm~$0.00 &0.00$~\pm~$0.00 \\
   Assembly &  \textbf{0.55}$~\pm~$0.13 &  0.28$~\pm~$0.05 &  0.33$~\pm~$0.01 &0.04$~\pm~$0.00\\
	Push&\textbf{0.55}$~\pm~$0.10 &0.34$~\pm~$0.14 &0.28$~\pm~$0.19 &0.00$~\pm~$0.00\\  
   {Bin-Picking} &  0.53$~\pm~$0.16 &  0.31$~\pm~$0.21 &  \textbf{0.66}$~\pm~$0.11&0.00$~\pm~$0.00 \\
    {Push-Back} & \textbf{0.52}$~\pm~$0.05 &  0.16$~\pm~$0.04 &  0.00$~\pm~$0.00 &0.00$~\pm~$0.00 \\
   {Box-Close} & \textbf{0.51}$~\pm~$0.11 &  0.15$~\pm~$0.09 &  0.36$~\pm~$0.11  &0.05$~\pm~$0.01\\
  Soccer&\textbf{0.44}$~\pm~$0.04 &0.11$~\pm~$0.03 &\textbf{0.38}$~\pm~$0.31   &0.04$~\pm~$0.02 \\
  Button-Press-Topdown-Wall&  \textbf{0.43}$~\pm~$0.03 &  \textbf{0.40}$~\pm~$0.07 &  0.05$~\pm~$0.02 &0.05$~\pm~$0.01 \\
   {Disassemble} & \textbf{0.42}$~\pm~$0.14 &  0.26$~\pm~$0.04 &  0.05$~\pm~$0.00 &0.04$~\pm~$0.00 \\

   Coffee-Pull &  \textbf{0.40}$~\pm~$0.05 &   0.23$~\pm~$0.04 &  0.19$~\pm~$0.12  &0.00$~\pm~$0.00\\
   {Stick-Pull} &  \textbf{0.32}$~\pm~$0.06 &  0.17$~\pm~$0.07 &  0.00$~\pm~$0.00   &0.00$~\pm~$0.00\\
    Peg-Insert-Side&
   \textbf{0.30}$~\pm~$0.04 &0.08$~\pm~$0.03 &0.00$~\pm~$0.00    &0.00$~\pm~$0.00 \\   
   {Pick-Place-Wall} & 0.28$~\pm~$0.12 &  0.09$~\pm~$0.04 & \textbf{  0.39}$~\pm~$0.25  &0.00$~\pm~$0.00 \\
Pick-Out-Of-Hole&
   0.26$~\pm~$0.25 &0.16$~\pm~$0.16 &\textbf{0.59}$~\pm~$0.06 &0.00$~\pm~$0.00  \\ 
  
    Pick-Place&
   \textbf{0.20}$~\pm~$0.03 &0.07$~\pm~$0.02 &0.05$~\pm~$0.05   &0.00$~\pm~$0.00\\

Handle-Pull-Side&
   \textbf{0.14}$~\pm~$0.04 &\textbf{0.13}$~\pm~$0.09 &0.00$~\pm~$0.00  &0.00$~\pm~$0.00\\   
   \midrule
   Average & \textbf{0.73}$~\pm~$0.07& 0.53$~\pm~$0.08&0.18$~\pm~$0.09  &0.04$~\pm~$0.01\\
  
		\bottomrule
	\end{tabular}
	\label{tab:full}
\end{table}

\begin{table}
	\centering
	\caption{Comparison between baselines with offline adaptation and IDAQ on Meta-World tasks. ``-V2'' is omitted for brevity.}
	\begin{tabular}{l|c|c|c}
		\toprule
		Environment & IDAQ & \makecell[c]{FOCAL with Expert Context} & \makecell[c]{MACAW with Expert Context} \\
		\midrule
   {Coffee-Push}&\textbf{1.26}$~\pm~$0.13&0.50$~\pm~$0.06&\textbf{1.14}$~\pm~$0.27\\
   	{Faucet-Close} &  \textbf{1.12}$~\pm~$0.01  &  {1.07}$~\pm~$0.02 &  {1.01}$~\pm~$0.01  \\
   {Door-Unlock} & \textbf{1.11}$~\pm~$0.02  &  {0.96}$~\pm~$0.03 &   {0.99}$~\pm~$0.04  \\
	
   {Plate-Slide-Side}&\textbf{1.07}$~\pm~$0.08 &0.75$~\pm~$0.09   & {0.91}$~\pm~$0.09\\  
   {Faucet-Open} & \textbf{1.05}$~\pm~$0.02  &  \textbf{1.00}$~\pm~$0.02 &  {0.99}$~\pm~$0.01  \\
		{Button-Press-Wall} &  \textbf{1.04}$~\pm~$0.04  &  {0.98}$~\pm~$0.05 &  \textbf{0.99}$~\pm~$0.01  \\
   {Plate-Slide}& \textbf{1.01}$~\pm~$0.03  &\textbf{1.00}$~\pm~$0.03 &0.67$~\pm~$0.07    \\ 
   {Door-Close} & \textbf{0.99}$~\pm~$0.00  &  {0.97}$~\pm~$0.01 &  {0.92}$~\pm~$0.05  \\
   {Drawer-Close}&\textbf{0.99}$~\pm~$0.02  &{0.96}$~\pm~$0.04 &\textbf{1.00}$~\pm~$0.01    \\  
   {Plate-Slide-Back-Side}& \textbf{0.97}$~\pm~$0.02 &  0.90$~\pm~$0.07 &  0.80$~\pm~$0.05 \\
   {Door-Lock} & \textbf{0.97}$~\pm~$0.01  &  0.88$~\pm~$0.04 &  \textbf{0.97}$~\pm~$0.03  \\
   {Window-Open}&\textbf{0.96}$~\pm~$0.02  &\textbf{0.93}$~\pm~$0.05 &\textbf{0.98}$~\pm~$0.02  \\  
   
   {Door-Open} & \textbf{0.96}$~\pm~$0.02 &  {0.90}$~\pm~$0.02 &  \textbf{0.99}$~\pm~$0.02  \\
   {Plate-Slide-Back} & \textbf{0.96}$~\pm~$0.02  &  \textbf{0.93}$~\pm~$0.01 &  0.55$~\pm~$0.11  \\
   {Window-Close}&{0.94}$~\pm~$0.01 &0.73$~\pm~$0.02 &\textbf{1.00}$~\pm~$0.01  \\
   {Reach-Wall}&\textbf{0.93}$~\pm~$0.05 &\textbf{0.91}$~\pm~$0.05 &0.82$~\pm~$0.02    \\  
   Dial-Turn &  {0.91}$~\pm~$0.05  &  0.84$~\pm~$0.08 &  \textbf{0.98}$~\pm~$0.01  \\
   {Handle-Press-Side} & \textbf{0.91}$~\pm~$0.02 &  {0.87}$~\pm~$0.04 &  0.82$~\pm~$0.10  \\
   {Handle-Pull}&\textbf{0.90}$~\pm~$0.02 &0.70$~\pm~$0.05 &\textbf{0.95}$~\pm~$0.05    \\ 
   {Handle-Press} & \textbf{0.88}$~\pm~$0.05 &  \textbf{0.79}$~\pm~$0.08 &  0.56$~\pm~$0.19  \\
   {Reach}&\textbf{0.85}$~\pm~$0.03 &\textbf{0.83}$~\pm~$0.05 &0.64$~\pm~$0.08    \\ 
   Lever-Pull&0.85$~\pm~$0.02 &0.76$~\pm~$0.03 &\textbf{0.97}$~\pm~$0.07   \\   
   {Hammer} & \textbf{0.84}$~\pm~$0.06 &  0.78$~\pm~$0.04 &  0.40$~\pm~$0.18  \\
   Drawer-Open & {0.82}$~\pm~$0.06  &  0.73$~\pm~$0.11 &  \textbf{0.98}$~\pm~$0.01  \\
   Sweep&{0.77}$~\pm~$0.04 &0.74$~\pm~$0.02 &\textbf{0.98}$~\pm~$0.01    \\
		{Button-Press} &  \textbf{0.74}$~\pm~$0.08 &  0.63$~\pm~$0.09 &  \textbf{0.71}$~\pm~$0.04  \\ 
		{Stick-Push} &  \textbf{0.73}$~\pm~$0.09 &  0.14$~\pm~$0.09 &  \textbf{0.67}$~\pm~$0.09  \\
		{Coffee-Button} &  \textbf{0.73}$~\pm~$0.14 &  \textbf{0.61}$~\pm~$0.20 &  0.21$~\pm~$0.11  \\
   Push-Wall&{0.71}$~\pm~$0.15 &\textbf{0.90}$~\pm~$0.12 &\textbf{0.96}$~\pm~$0.09    \\
   {Shelf-Place}&\textbf{0.70}$~\pm~$0.18 &0.57$~\pm~$0.13 &0.55$~\pm~$0.03    \\   
   {Basketball} & \textbf{0.64}$~\pm~$0.15  &  0.49$~\pm~$0.17 &  0.47$~\pm~$0.18  \\
		{Hand-Insert} &  \textbf{0.63}$~\pm~$0.04 &  \textbf{0.64}$~\pm~$0.09 &   0.20$~\pm~$0.09  \\
		{Sweep-Into} &  \textbf{0.61}$~\pm~$0.06 &  \textbf{0.64}$~\pm~$0.09 &  0.00$~\pm~$0.00  \\
    Button-Press-Topdown& {0.57}$~\pm~$0.11 &  0.48$~\pm~$0.11 &  \textbf{0.92}$~\pm~$0.04  \\
   {Peg-Unplug-Side} & \textbf{0.56}$~\pm~$0.07 &  \textbf{0.57}$~\pm~$0.10 &  0.18$~\pm~$0.10  \\
   Assembly &  {0.55}$~\pm~$0.13 &  \textbf{0.64}$~\pm~$0.03 &  0.36$~\pm~$0.02 \\
	Push&{0.55}$~\pm~$0.10 &\textbf{0.98}$~\pm~$0.13 &0.86$~\pm~$0.02\\  
   {Bin-Picking} &  0.53$~\pm~$0.16 &  \textbf{0.61}$~\pm~$0.12 &  \textbf{0.63}$~\pm~$0.11  \\
    {Push-Back} & \textbf{0.52}$~\pm~$0.05&  \textbf{0.52}$~\pm~$0.16 &  0.15$~\pm~$0.09  \\
   {Box-Close} & \textbf{0.51}$~\pm~$0.11 &  \textbf{0.56}$~\pm~$0.08 &  0.35$~\pm~$0.11  \\
  Soccer&{0.44}$~\pm~$0.04 &0.45$~\pm~$0.03 &\textbf{0.59}$~\pm~$0.11    \\
  {Button-Press-Topdown-Wall}&  \textbf{0.43}$~\pm~$0.03 &  \textbf{0.40}$~\pm~$0.06 &  \textbf{0.43}$~\pm~$0.06  \\
   {Disassemble} & {0.42}$~\pm~$0.14  &  0.23$~\pm~$0.05 &  \textbf{0.50}$~\pm~$0.06  \\

   Coffee-Pull &  {0.40}$~\pm~$0.05 &  \textbf{0.58}$~\pm~$0.11 &  {0.45}$~\pm~$0.11  \\
   {Stick-Pull} &  \textbf{0.32}$~\pm~$0.06 &  0.18$~\pm~$0.06 &  \textbf{0.27}$~\pm~$0.09  \\
    Peg-Insert-Side&
   {0.30}$~\pm~$0.04  &\textbf{0.52}$~\pm~$0.08 &0.25$~\pm~$0.04    \\   
   {Pick-Place-Wall} & \textbf{0.28}$~\pm~$0.12 &  0.13$~\pm~$0.07 &  \textbf{0.21}$~\pm~$0.16  \\
Pick-Out-Of-Hole&
   0.26$~\pm~$0.25 &0.27$~\pm~$0.27 &\textbf{0.59}$~\pm~$0.08    \\ 
  
    Pick-Place&
   {0.20}$~\pm~$0.03 &0.29$~\pm~$0.11 &\textbf{0.72}$~\pm~$0.09    \\

Handle-Pull-Side&
   {0.14}$~\pm~$0.04  &0.09$~\pm~$0.05 &\textbf{0.94}$~\pm~$0.08   \\   
   \midrule
   Average & \textbf{0.73}$~\pm~$0.07&0.67$~\pm~$0.07 &0.68$~\pm~$0.07 \\
		\bottomrule
	\end{tabular}
	\label{tab:full2}
\end{table}

\begin{table}
	\centering
	\caption{Comparison between FOCAL, MACAW, and BOReL with online adaptation and IDAQ on MuJoCo tasks.}
	\begin{tabular}{l|c|c|c|c}
		\toprule
		Environment & IDAQ & FOCAL & MACAW &BOReL \\
		\midrule
		Cheetah-Vel &  \textbf{-171.52}$~\pm~$21.96 & -287.70$~\pm~$30.62 & -233.97$~\pm~$23.46 &-301.4$~\pm~$36.8 \\
		Point-Robot &  \textbf{-5.10}$~\pm~$0.26 & -15.38$~\pm~$0.95 & -14.61$~\pm~$0.98  &-17.28$~\pm~$1.16\\
		Point-Robot-Sparse &  \textbf{7.78} $~\pm~$0.64 & 0.83$~\pm~$0.37 & 0.00$~\pm~$0.00 &0.00$~\pm~$0.00 \\
		\bottomrule
	\end{tabular}
	\label{tab:full3}
\end{table}

\begin{table}
	\centering
	\caption{Comparison between baselines with offline adaptation and IDAQ on MuJoCo tasks.}
	\begin{tabular}{l|c|c|c}
		\toprule
		Environment & IDAQ & \makecell[c]{FOCAL with Expert Context} & \makecell[c]{MACAW with Expert Context} \\
		\midrule
		Cheetah-Vel &  \textbf{-171.52}$~\pm~$21.96 & \textbf{-156.07}$~\pm~$23.22 & -292.92$~\pm~$36.66  \\
		Point-Robot &  \textbf{-5.10}$~\pm~$0.26 & \textbf{-4.68}$~\pm~$0.18 & -19.60$~\pm~$1.15  \\
		Point-Robot-Sparse &  \textbf{7.78} $~\pm~$0.64 & \textbf{8.37}$~\pm~$0.67  & 0.00$~\pm~$0.00\\
		\bottomrule
	\end{tabular}
	\label{tab:full4}
\end{table}

\subsection{Dataset Returns}
Table \ref{tab:mazee} and Table \ref{tab:mazee2} show the average returns of the offline datasets used in meta-training.

\begin{table}
	\centering
	\caption{Dataset average returns on 50 Meta-World tasks.}
	\begin{tabular}{l|c}
		\toprule
		Environment & Dataset Return \\
		\midrule
   {Coffee-Push}& 1487.84\\
   	{Faucet-Close} &  4039.52 \\
   {Door-Unlock} &  3653.40 \\
	
   {Plate-Slide-Side}&3530.06\\  
   {Faucet-Open} &  4145.18 \\
		{Button-Press-Wall} &  2899.34  \\
   {Plate-Slide}&4395.86    \\ 
   {Door-Close} & 4519.77 \\
   {Drawer-Close}& 4233.04\\  
   {Plate-Slide-Back-Side}& 4735.74\\
   {Door-Lock} & 3352.35\\
   {Window-Open}&4382.87 \\  
   
   {Door-Open} & 4401.24 \\
   {Plate-Slide-Back} & 4732.90   \\
   {Window-Close}&3572.99 \\
   {Reach-Wall}& 4805.38  \\  
   Dial-Turn & 3824.44\\
   {Handle-Press-Side} & 4836.04  \\
   {Handle-Pull}& 3878.46  \\ 
   {Handle-Press} &  4851.51 \\
   {Reach}& 4820.84  \\ 
   Lever-Pull& 914.01  \\   
   {Hammer} & 4251.57  \\
   Drawer-Open & 4041.54  \\
   Sweep&  4354.57 \\
		Button & 3855.10 \\ 
		{Stick-Push} & 4230.52 \\
		{Coffee-Button} & 4049.44  \\
   Push-Wall& 3699.66 \\
   {Shelf-Place}&2813.55   \\   
   {Basketball} & 4071.31 \\
		{Hand-Insert} &  3963.11 \\
		{Sweep-Into} & 4019.97\\
    Button-Press-Topdown & 3570.22  \\
   {Peg-Unplug-Side} & 4235.23 \\
   Assembly & 4238.57\\
	Push&3147.80\\  
   {Bin-Picking} & 4257.54 \\
    {Push-Back} & 3841.90 \\
   {Box-Close} & 4012.39 \\
  Soccer& 2798.96 \\
  Button-Press-Topdown-Wall& 3781.99 \\
   {Disassemble} & 3865.37 \\

   Coffee-Pull & 4166.44  \\
   {Stick-Pull} &  4015.26 \\
    Peg-Insert-Side&3826.91   \\   
   {Pick-Place-Wall} & 2499.35  \\
Pick-Out-Of-Hole&3405.74  \\ 
    Pick-Place&3482.43   \\   
Handle-Pull-Side&2425.14  \\   
		\bottomrule
	\end{tabular}
	\label{tab:mazee}
\end{table}

\begin{table}
	\centering
	\caption{Dataset average returns on Meta-World-Meidum tasks and MuJoCo tasks.}
	\begin{tabular}{l|c}
		\toprule
		Environment & Dataset Return \\
		\midrule
		Sweep (Med) &  2874.00 \\
		Peg-Insert-Side (Med)  & 2342.34 \\
		Point-Robot & -5.72 \\
		Point-Robot-Sparse  & 7.28 \\
		Cheetah-Vel  & -138.29 \\
		\bottomrule
	\end{tabular}
	\label{tab:mazee2}
\end{table}

\newpage

\section{Additional Ablation Studies}
\label{abl}
In this section, we will further conduct various ablation studies to investigate the robustness of IDAQ in dataset quality and hyper-parameters.

\textbf{Reference stage length.} Table \ref{tab:len} shows IDAQ's performance with different reference stage lengths. The total number of adaptation episodes is 20. We find that IDAQ performs well during 10-15 episodes, which is 50\%-75\% of the total number of adaptation episodes. A small reference stage length (5) may lead to a possibly unreliable task belief and cause a degradation in performance. The 19-episode does not perform the iterative optimization process, and the task belief updates will not converge.


\textbf{Dataset Quality.}
We test IDAQ and baselines with several ``medium'' datasets, which are collected by periodically evaluating policies of SAC. As shown in Table \ref{tab:med}, IDAQ still significantly outperforms baseline algorithms on medium datasets, which implies IDAQ's ability to learn various datasets. 

\begin{table}[H]
	\centering
	\caption{IDAQ's performance with various initial stage lengths. }
	\begin{tabular}{l|c|c|c|c}
		\toprule
		Environment & 5 Episodes & 10 Episodes & 15 Episodes & 19 Episodes\\
		\midrule
		Point-Robot & -6.04 $~\pm~$ 0.31 & \textbf{-5.11}$~\pm~$0.21 & \textbf{-5.10} $~\pm~$ 0.26 & -5.37 $~\pm~$0.11\\
  Point-Robot-Sparse & 4.04 $~\pm~$0.58 & \textbf{7.78} $~\pm~$0.64 & \textbf{8.07} $~\pm~$0.62 & 7.29$~\pm~$ 0.50\\
		\bottomrule
	\end{tabular}
	\label{tab:len}
\end{table}


\begin{table}[H]
	\centering
	\caption{Algorithms' performance on datasets of various qualities.}
	\begin{tabular}{l|c|c|c}
		\toprule
		Environment & IDAQ & FOCAL & MACAW\\
		\midrule

   Sweep&\textbf{0.77}$~\pm~$0.04 &0.32$~\pm~$0.08 &0.20$~\pm~$0.20     \\
		Sweep (Med) &  \textbf{0.59}$~\pm~$0.13  &  0.38$~\pm~$0.13  & 0.04$~\pm~$0.03 \\
    
   \midrule
   
    Peg-Insert-Side&
   \textbf{0.30}$~\pm~$0.04 &0.08$~\pm~$0.03 &0.00$~\pm~$0.00    \\   
		Peg-Insert-Side (Med) &  \textbf{0.30}$~\pm~$0.14  &  0.10$~\pm~$0.07  & 0.00$~\pm~$ 0.00   \\
		\bottomrule
	\end{tabular}
	\label{tab:med}
\end{table}

\textbf{Hyper-parameters.} The uncertainty quantifications are robust to the hyperparameters in this paper. Prior work \citep{lu2021revisiting} introduces three sensitive hyperparameters: penalty scale, rollout length, and the number of models in offline model-based RL. However, in the in-distribution online adaptation framework of IDAQ, penalty scale and rollout length are not required. IDAQ utilizes uncertainty quantifications to rank online episodes for deriving in-distribution contexts instead of using it as a regularization. For IDAQ+Prediction Error and IDAQ+Prediction Variance, the hyperparameter of these uncertainty quantifications is the number of models $L$ in an ensemble.  We perform ablation studies on this hyperparameter. 

Results in Table \ref{tab:52} and Table \ref{tab:53} show that IDAQ+Prediction Error and IDAQ+Prediction Variance are robust to the choice of the number of models in an ensemble, and IDAQ+Return significantly outperforms IDAQ+Prediction Error and IDAQ+Prediction Variance with different hyperparameter values.

\begin{table*}[h]
	\centering
	\caption{Ablation on the number of ensembles $L$ in IDAQ+Prediction Error.}
	\begin{tabular}{l|c|c|c|c|c}
		\toprule
		Example Env & $L=2$ & $L=4$ & $L=8$ & $L=12$ & IDAQ+Return\\
		\midrule
		{Sweep (Med)} & 0.13$~\pm~$0.03 & 0.14$~\pm~$0.05 &  0.16$~\pm~$0.03 & 0.17$~\pm~$0.04& 0.59$~\pm~$0.13\\
		\midrule
		
		Point-Robot &  -6.33$~\pm~$0.22  & -5.70$~\pm~$0.05 & -6.14$~\pm~$0.24 & -5.98$~\pm~$0.24 & -5.10$~\pm~$0.26 \\ 
		\midrule
		{Reach}& 0.85$~\pm~$0.02 & 0.86$~\pm~$0.02 & 0.85$~\pm~$0.02 & 0.86$~\pm~$0.01& 0.85$~\pm~$0.03\\
		
		\bottomrule
	\end{tabular}
	\label{tab:52}
\end{table*}

\begin{table*}[h]
	\centering \caption{Ablation on the number of ensembles $L$ in IDAQ+Prediction Variance.}
	\begin{tabular}{l|c|c|c|c|c}
		\toprule
		Example Env & $L=2$ & $L=4$ & $L=8$ & $L=12$ & IDAQ+Return\\
		\midrule
		{Sweep (Med)} & 0.05$~\pm~$0.02 & 0.05$~\pm~$0.02 &  0.06$~\pm~$0.02 & 0.05$~\pm~$0.02& 0.59$~\pm~$0.13\\
		\midrule
		
		Point-Robot &  -19.30$~\pm~$0.92   & -21.29$~\pm~$0.85 & -20.51$~\pm~$1.06 & -19.48$~\pm~$0.93 & -5.10$~\pm~$0.26  \\
		\midrule
		{Reach}& 0.47$~\pm~$0.04 & 0.46$~\pm~$0.03 & 0.47$~\pm~$0.03 & 0.46$~\pm~$0.03& 0.85$~\pm~$0.03 \\
		
		\bottomrule
	\end{tabular}
	\label{tab:53}
	\vspace{-0.1in}
\end{table*}

\textbf{Other baselines.} Our approach IDAQ is compatible with existing context-based offline meta-RL methods. FOCAL is one of such state-of-the-art methods that can address the challenge of \textit{MDP Ambiguity} \citep{dorfman2021offline} in offline meta-training. Due to the task-dependent behavior policy, the sub-datasets of different tasks may be disjoint and a promising solution is to apply contrastive loss \citep{li2020focal,yuan2022robust} on the latent task embeddings to distinguish tasks. Offline PEARL+Contrastive Loss (abbreviated as ``OP+CL'') is another popular offline meta-training baseline in the literature \citep{li2020focal,yuan2022robust}. Combining IDAQ with OP+CL, we will investigate how much improvement can be achieved by in-distribution online adaptation in OP+CL. Results in Table \ref{tab:54} demonstrate that IDAQ can significantly improve over OP+CL (the ``OP+CL+IDAQ'' column), and achieves similar or slightly lower performance than FOCAL+IDAQ.

\begin{table*}[h]
	\centering
	\caption{Improvement of OP+CL+IDAQ over the OP+CL baseline.}
	\begin{tabular}{l|c|c|c|c}
		\toprule
		Example Env  & FOCAL+IDAQ & FOCAL& OP+CL+IDAQ& OP+CL \\
		\midrule
		{Push} &  \textbf{0.55}$~\pm~$0.10 &0.34 $~\pm~$0.14& 0.45$~\pm~$0.04 & 0.19$~\pm~$0.02  \\
		{Pick-Place}   &  \textbf{0.20}$~\pm~$0.03 & 0.07 $~\pm~$0.02& \textbf{0.19}$~\pm~$0.04 & 0.08$~\pm~$0.02 \\
		{Soccer}   &  \textbf{0.44}$~\pm~$0.04 & 0.11$~\pm~$0.03&\textbf{0.42}$~\pm~$0.05 &0.19$~\pm~$0.02 \\
		{Drawer-Close}&\textbf{0.99}$~\pm~$0.02 & \textbf{0.96} $~\pm~$ 0.04 & \textbf{1.00}$~\pm~$0.01 & 0.82$~\pm~$0.04 \\  
		{Reach} & {0.85}$~\pm~$0.03 & 0.62$~\pm~$0.05& \textbf{0.90}$~\pm~$0.01 & 0.59$~\pm~$0.03 \\
		\midrule
		{Sweep (Med)}  &  \textbf{0.59}$~\pm~$0.13 & 0.38 $~\pm~$ 0.13&0.48$~\pm~$0.09 & 0.19$~\pm~$0.06 \\
		{Peg-Insert-Side (Med)} &  \textbf{0.30}$~\pm~$0.14  & 0.10 $~\pm~$ 0.07 &0.19$~\pm~$0.04 &0.06$~\pm~$0.02 \\
		
		\bottomrule
	\end{tabular}
	\label{tab:54}
\end{table*}


\end{document}